\useunder{\uline}{\ul}{}
\newtheorem{theorem}{Theorem}
\newtheorem{corollary}{Corollary}
\newtheorem{lemma}{Lemma}
\newtheorem{definition}{Definition}
\newcommand{\prob}{\mathbb{P}}
\newcommand{\acc}{\mathbf{a}}
\newcommand{\complex}{\mathcal{C}}
\newcommand{\mislabeled}{\mathcal{M}}
\newcommand{\atypical}{\mathcal{R}}
\newcommand{\risk}{\mathcal{L}}
\newcommand{\learn}{\mathbf{fslt}}
\newcommand{\forget}{\mathbf{ssft}}
\newcommand{\dataset}{\mathcal{S}}
\newcommand{\dist}{\mathcal{D}}
\newcommand{\distX}{\mathcal{X}}
\newcommand{\distY}{\mathcal{Y}}
\newcommand{\f}{f}
\newcommand{\x}{\mathbf{x}}
\newcommand{\y}{\mathbf{y}}
\newcommand{\z}{\mathbf{z}}
\newcommand{\W}{\mathbf{w}}
\newcommand{\transpose}{\top}
\newcommand{\loss}{\ell}
\newcommand{\identity}{\mathbf{I}}
\newcommand{\direction}{\mathbf{u}}
\newcommand{\indices}{\mathcal{I}}
\newcommand{\idx}[2]{[#1]_{#2}}
\def\eqref#1{equation~\ref{#1}}
\def\1{\bm{1}}
\def\vmu{{\bm{\mu}}}
\DeclareMathAlphabet{\mathsfit}{\encodingdefault}{\sfdefault}{m}{sl}
\SetMathAlphabet{\mathsfit}{bold}{\encodingdefault}{\sfdefault}{bx}{n}
\def\gX{{\mathcal{X}}}
\def\gY{{\mathcal{Y}}}
\DeclareMathOperator*{\argmin}{argmin\;}
\DeclareMathOperator{\sign}{sign}
\newcommand\norm[2]{\left|\!\left|#1 \right|\!\right|_{#2}}
\newcommand{\abs}[1]{\left\lvert#1 \right\rvert}
\newif\ifsubmit
\newcommand{\pratyush}[1]{}
\newcommand{\update}[1]{}
\newcommand{\update}[1]{\textcolor{black}{#1}}
\newcommand{\pratyush}[1]{\textcolor{orange}{[PM: #1]}}
\definecolor{darkgreen}{rgb}{0,0.3,0}
\definecolor{darkblue}{rgb}{0,0,0.5}
\definecolor{darkorange}{rgb}{0.9,0.4,0}
\newcommand{\eat}[1]{}
\title{Characterizing Datapoints via Second-Split Forgetting}
\author{%
  Pratyush Maini$^1$ \quad\quad
 Saurabh Garg$^1$ \quad\quad
 Zachary C. Lipton$^1$ \quad\quad
 J. Zico Kolter$^{1,2}$\\
  Carnegie Mellon University$^1$ \quad\quad Bosch Center for AI$^2$\\
  \texttt{\{pratyushmaini,zlipton\}@cmu.edu};\ \texttt{\{sgarg2, zkolter\}@cs.cmu.edu}
}
\begin{document}
\maketitle

\begin{abstract}
Researchers investigating example hardness have increasingly focused on the dynamics by which neural networks learn and forget examples throughout training. Popular metrics derived from these dynamics include (i) the epoch at which examples are first correctly classified; (ii) the number of times their predictions flip during training; and (iii) whether their prediction flips if they are held out. However, these metrics do not distinguish among examples that are hard for distinct reasons, such as membership in a rare subpopulation, being mislabeled, or belonging to a complex subpopulation. In this paper, we propose \emph{second-split forgetting time} (SSFT), a complementary metric that tracks  the epoch (if any) after which an original training example is forgotten as the network is fine-tuned  on a randomly held out partition of the data.  Across multiple benchmark datasets and modalities,  we demonstrate that \emph{mislabeled} examples are forgotten quickly, and seemingly \emph{rare} examples are forgotten comparatively slowly.  By contrast, metrics only considering  the first split learning dynamics struggle to differentiate the two.  At large learning rates, SSFT tends  to be robust across architectures,  optimizers, and random seeds. From a practical standpoint, the SSFT can (i) help to identify mislabeled samples, the removal of which improves generalization; and (ii) provide insights about failure modes.  Through theoretical analysis addressing overparameterized linear models, we provide insights into how the observed phenomena may arise.\footnote{Code for reproducing our experiments can be found at \href{https://github.com/pratyushmaini/ssft}{https://github.com/pratyushmaini/ssft}.}

\end{abstract}

\section{Introduction}

A growing literature has investigated metrics 
for characterizing the difficulty 
of training examples,
driven by such diverse motivations as
(i) deriving insights for how to reconcile
the ability of deep neural networks to generalize~\citep{krizhevsky2012imagenet}
with their ability to memorize noise~\citep{zhang2021understanding, Feldman2020DoesLR}; 
(ii) identifying potentially mislabeled examples;
and (iii) identifying notably challenging or rare sub-populations
of examples. 
Some of these efforts have turned towards learning dynamics,
with researchers noting that neural networks 
tend to learn cleanly labeled examples before mislabeled examples
\citep{liu2020early, frankle2020early,garg2021RATT},
and more generally tend to learn \emph{simpler} patterns sooner---for 
several intuitive notions of simplicity
\citep{shah2020pitfalls,mangalam2019deep,hacohen2020let}. 
Broadly, works in this area tend to characterize examples 
as belonging either to \emph{prototypical groups} or \emph{memorized exceptions}
\citep{feldman2020neural,jiang2020characterizing,Carlini2019DistributionDT}.
Adapting these intuitions to real datasets, 
\citet{Feldman2020DoesLR} propose rating the degree 
to which an example is memorized based on 
whether its predicted class flips 
when it is excluded from the training set.   
These, and other works~\citep{chatterjee2020coherent, mangalam2019deep,shah2020pitfalls,hooker2019compressed,toneva2018empirical}
have proposed many metrics for characterizing example difficulty 
with~\citet{Carlini2019DistributionDT} comparing five such metrics.  
However, while many of these works distinguish 
some notion of \emph{easy} versus \emph{hard} samples, 
they seldom  
(i) offer tools for distinguishing
among different types of hard examples; 
(ii) explain theoretically why these metrics might 
be useful for distinguishing easy versus hard samples.
Moreover, existing metrics tend to give similar scores 
to examples that are difficult for distinct reasons,
e.g, membership in rare, complex, or mislabeled sub-populations.

\begin{figure}[t]
\centering
  \includegraphics[width=0.55\linewidth]{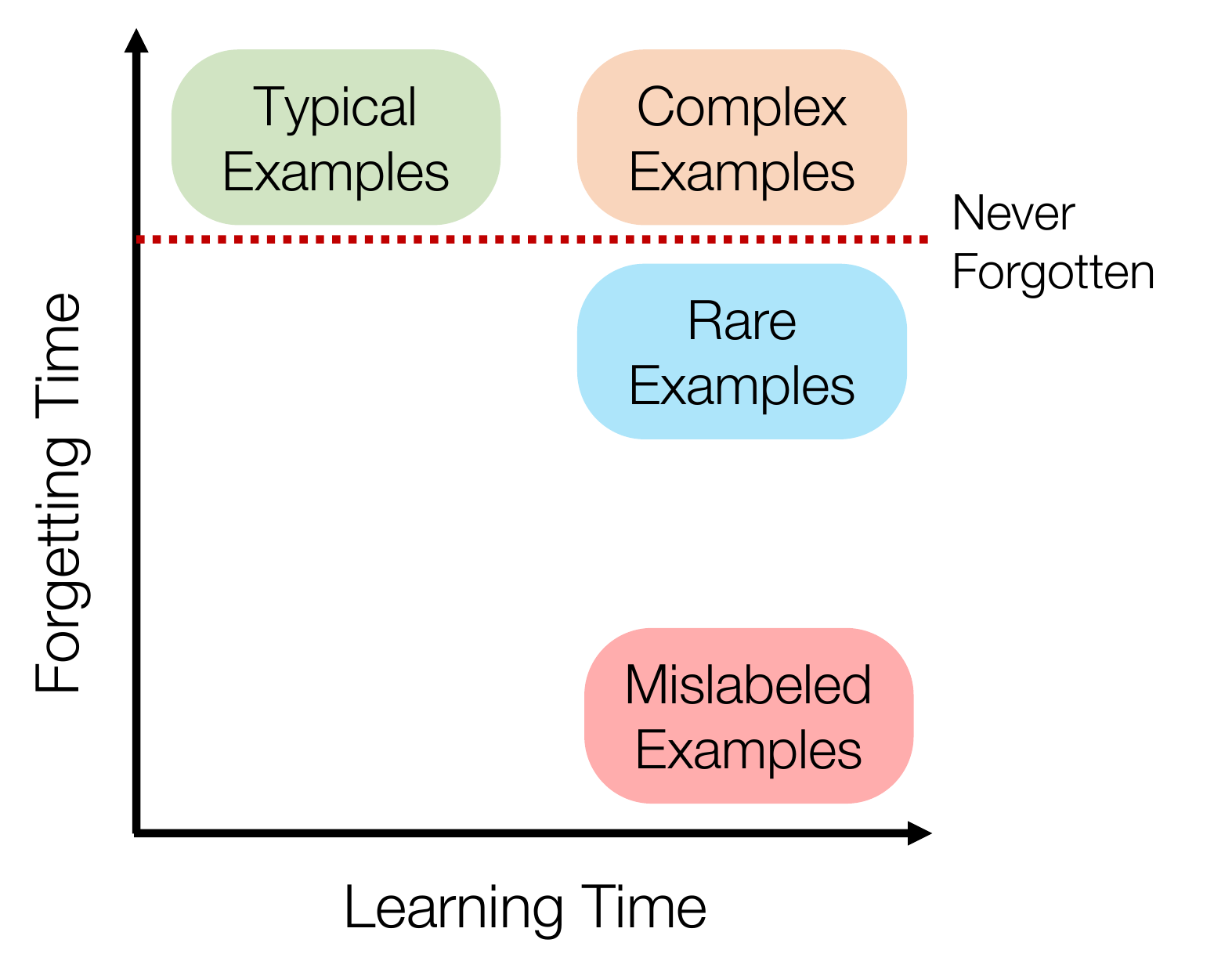}
\caption{Overview of example separation offered by the unified view of learning and forgetting time.
}
 \label{fig:overview}
\end{figure}

In this paper, we propose to additionally consider a new metric, Second-Split Forgetting Time (SSFT),
calculated based on the forgetting dynamics 
that arise as training examples are forgotten
when a neural network continues to train
on a second, randomly held out data partition.
SSFT is defined as the fine-tuning epoch 
after which a first-split training example is no longer
classified correctly.
We find that SSFT identifies mislabeled examples
remarkably well but does little to separate out 
under- versus over-represented subpopulations.
Conversely, metrics based on 
the (first-split) training dynamics
are more discriminative 
for separating these populations 
but less useful for detecting mislabeled examples.
We leverage the complementarity 
of first- and second-split metrics,
showing that by jointly visualizing the two,
we can produce a richer characterization 
of the training examples.

In our experiments, we operationalize 
several notions of hard examples, namely: 
(i) \textbf{mislabeled} examples,
for which the original label has been flipped 
to a randomly chosen incorrect label;
(ii) \textbf{rare} examples, which belong to underrepresented subpopulations;
and (iii) \textbf{complex} examples, 
which belong to subpopulations for which 
the classification task is more difficult
(details in Section \ref{subsec:example_characterization}).
We perform specific ablation studies 
with datasets complicated 
by just one type of hard example
(Section~\ref{subsec:ablation}), 
and show how SSFT %
can help to distinguish among these categories of examples.
We observe that during second-split training,
neural networks (i) first forget 
mislabeled  examples from the first split; (ii) only slowly begin to forget 
\emph{rare} examples 
(e.g., from underrepresented sub-populations)
unique to the first training set;
and (iii) do not forget complex examples.

This separation of hard example types has multiple practical applications. \textbf{First}, we can use the method to identify noisy labels: 
On CIFAR-10 with 10\% added class noise, SSFT achieves 0.94 AUC 
for identifying mislabeled samples, 
while the first-split metrics 
range in AUC between 0.58 to 0.90.  
\textbf{Second}, the method can also help improve generalization in noisy data settings: while the removal of hard examples according to first-split learning time  degrades the performance of the classifier, the removal of hard examples according to SSFT can actually \emph{improve} generalization. This is especially beneficial when e.g., training on synthetic data (produced by a generative model) or mislabeled data.
\textbf{Third}, we show how SSFT can identify failure modes of machine learning models.  For example, in a simplified task classifying between horses and airplanes in the CIFAR-10 dataset, we find that training examples containing horses with sky backgrounds and airplanes with green backgrounds are among the earliest forgotten---indicating that the model relies on the background as a spurious feature.
\textbf{Last}, we also add that our metric is robust across multiple seeds, and the earliest forgotten examples are robust across architectures. 
Across multiple optimizers, SSFT distinguishes mislabeled samples, 
whereas first-split metrics appear more sensitive 
to the choice of optimizer.

Finally, we investigate second-split dynamics theoretically,
analyzing overparametrized linear models \citep{soudry2018implicit}.
We introduce notions of mislabeled, rare, and complex examples appropriate to this toy model.
Our analysis shows that mislabeled examples 
from the first split
are forgotten quickly during second-split training
whereas rare examples are not. 
However, as we train for a long time, 
rare examples from the first split 
are eventually forgotten as the model converges 
to the minimum norm solution on the second split
while predictions on complex examples 
remain accurate with high probability.

\section{Related Work}

\textbf{Example Hardness.} 
Several recent works quantify example hardness %
with various training-time metrics. 
Many of these metrics are based on 
first-split learning dynamics~\citep{chatterjee2020coherent, jiang2020characterizing,mangalam2019deep,shah2020pitfalls, kaplun2022deconstructing}. 
Other works have resorted to properties
of deep networks such as compression ability~\citep{hooker2019compressed} 
and prediction depth~\citep{baldock2021deep}.
\citet{Carlini2019DistributionDT} study metrics 
centered around model training such as confidence, 
ensemble agreement, adversarial robustness, holdout retraining, 
and accuracy under privacy-preserving training.
Closest in spirit to the SSFT studied in our paper are efforts in~\citep{Carlini2019DistributionDT,toneva2018empirical}. 
Crucially, \citet{Carlini2019DistributionDT} 
study the KL divergence of the prediction vector 
after fine-tuning on a held-out set at a low learning rate,
and do not draw any direct inference 
of the separation offered by their metric. 
Focusing on (first-split) forgetting dynamics,
\citet{toneva2018empirical} defined a metric
based on the number of forgetting events during training
and identified sets of \emph{unforgettable} examples
that are never misclassified once learned. 
In our work, we find complementary benefits of 
analysis based on first- and second-split dynamics.

\textbf{Memorization of Data Points.} 
In order to capture the memorization ability of deep networks, their ability to memorize noise (or randomly labeled samples) has been studied in recent work~\citep{zhang2021understanding,arpit2017closer}. As opposed to the memorization of rare examples, the memorization of noisy samples hurts generalization and makes the classifier boundary more complex~\citep{Feldman2020DoesLR}. On the contrary, a recent line of works has argued how memorization of (atypical) data points is important for achieving optimal generalization performance when data is sampled from long-tailed distributions~\cite{Feldman2020DoesLR,brown2021memorization,cheng2022memorize}.

\textbf{Simplicity Bias.} 
Another line of work argues that neural networks have a bias toward learning simple features~\citep{shah2020pitfalls}, and often do not learn complex features even when the complex feature is more predictive of the true label than the simple features. This suggests that models end up memorizing (through noise) the few samples in the dataset that contain the complex feature alone, and utilize the simple feature for correctly predicting the other training examples~\citep{li2019towards,allen2020towards}.

\textbf{Label Noise.} 
Large-scale machine learning datasets are typically labeled with the help of human labelers~\citep{deng2009imagenet} to facilitate supervised learning. It has been shown that a significant fraction of these labels are erroneous in common machine learning datasets~\citep{northcutt2021pervasive}. Learning under noisy labels is a long-studied problem~\citep{angluin1988learning,natarajan2013learning,jindal2016learning,Li2020DivideMix}. Various recent methods have also attempted to identify label noise~\citep{northcutt2021confident,chen2019understanding,pleiss2020identifying,huang2019o2u}. While the focus of our work is not to propose a new method in this long line of work, we show that the view of forgetting time naturally distills out examples with noisy labels. Future work may benefit by augmenting our metric with SOTA methods for label noise identification.

\section{Method}
\label{sec:method}

The primary goal of our work is to \emph{characterize} 
the hardness of 
different datapoints in a given dataset. 
Suppose we have a dataset $\dataset_A = \{\x_i,\y_i\}^n$ such that $(\x_i,\y_i)\sim\dist$.
For the purpose of characterization, we augment each datapoint $(\x_i,\y_i)\in\dataset_A$ with parameters $(\learn_i,\forget_i)$ where $\learn_i$ quantifies the first-split learning time (FSLT), and $\forget_i$  quantifies the second-split forgetting time (SSFT) of the sample. To obtain these parameters, we next describe our proposed procedure. 

\textbf{Procedure {} {} }
 We train a model $\f$ on $\dataset$ to minimize the empirical risk: $\risk(\dataset;\f) = \sum_i \loss(\f(\x_i), \y_i)$. 
We use $\f_A$ to denote a model $f$ (initialized with random weights) trained on $\dataset_A$ until convergence (100\% accuracy on $\dataset_A$).
We then train a model initialized with $\f_A$ on a held-out split $\dataset_B\sim\dist^n$ until convergence. We denote this model with $\f_{A\to B}$.
To obtain parameters $(\learn_i,\forget_i)$, we track per-example predictions ($\hat{\y}_i^t$) at the end of every epoch ($t^\text{th}$) of training.
Unless specified otherwise, we train the model with cross-entropy loss using Stochastic Gradient Descent (SGD). 

\begin{definition}[First-Split Learning Time]
\label{def:learning-time}
For $\{\x_i,\y_i\} \in \dataset_A$, learning time is defined as the earliest epoch during the training of a classifier $\f$ on $\dataset_A$ after which it is always classified correctly, i.e., 
\begin{equation}
    \learn_i = \argmin_{t^*} (\hat{\y}_{i,(A)}^t = \y_i \;\; \forall t\geq t^*) \quad \forall \{\x_i,\y_i\} \in \dataset_A.
\end{equation}
\end{definition}

\begin{definition}[Second-Split Forgetting Time]
\label{def:forgetting-time}
 Let $\hat{\y}_{i,(A\to B)}^t$ to denote the prediction of sample $\{\x_i,\y_i\} \in \dataset_A$ after training $\f_{(A\to B)}$ for $t$ epochs on $\dataset_B$. Then, for $\{\x_i,\y_i\} \in \dataset_A$ forgetting time is defined as the earliest epoch 
 after which it is never classified correctly, i.e., 
\begin{equation}
    \forget_i = \argmin_{t^*} (\hat{\y}_{i,(A\to B)}^{t} \neq \y_i \quad \forall t\geq t^*) \quad \forall \{\x_i,\y_i\} \in \dataset_A.
\end{equation}
\end{definition}

\subsection{Baseline Methods}
We provide a brief description of metrics for example hardness considered in recent comparisons~\citep{jiang2020characterizing}.

\textbf{Number of Forgetting Events:} ($\text{n}_f$).
An example $(\x_i,\y_i)\in\dataset$ undergoes a forgetting event when
the accuracy on the example decreases between two consecutive updates. \citet{toneva2018empirical} analyzed the total number of such events $n_f$ during the training of a neural network to identify hard examples.

\textbf{Cumulative Learning Accuracy:} ($\text{acc}_l$).
\citet{jiang2020characterizing} suggest that rather than using the learning time (Definition~\ref{def:learning-time}), using the number of epochs during training when a machine learning model correctly classifies a given sample is a more stable metric for predicting example hardness.

\textbf{Cumulative Learning Confidence:} ($\text{conf}_l$).
Similar to $\text{acc}_l$, $\text{conf}_l$ measures the cumulative softmax confidence of the model towards the correct class over the course of training.

\subsection{Example Characterization} \label{subsec:example_characterization}

We characterize example hardness via three sources of learning difficulty:
\textbf{(i) Mislabeled Examples:}
We refer to mislabeled examples as those datapoints whose label has been flipped to an
incorrect label uniformly at random.
\textbf{(ii) Rare Examples:} We assume that rare examples belong to sub-populations of the original distribution 
that have a low probability of occurrence. In particular, there exist $O(1)$ examples from such sub-populations in a given dataset.
In Section~\ref{subsec:ablation} we describe how we operationalize this notion in the case of the CIFAR-100 dataset. 
\textbf{(iii) Complex Examples:} %
These constitute samples that are  drawn
from sub-groups in the dataset that require either (1) 
a hypothesis class of high complexity; 
or (2) higher sample complexity to be learnt
relative to examples from rest of the dataset. 
We leave the definition of complex samples mathematically imprecise, but with the
same intuitive sense as in prior work~\citep{shah2020pitfalls,arpit2017closer}. For instance, in a dataset composed of the union of MNIST and CIFAR-10 images, 
we would consider the subpopulation of CIFAR-10 images to be more \emph{complex}.

\begin{figure*}[t]
\centering
\begin{subfigure}[t]{0.43\linewidth}
  \includegraphics[width=\linewidth]{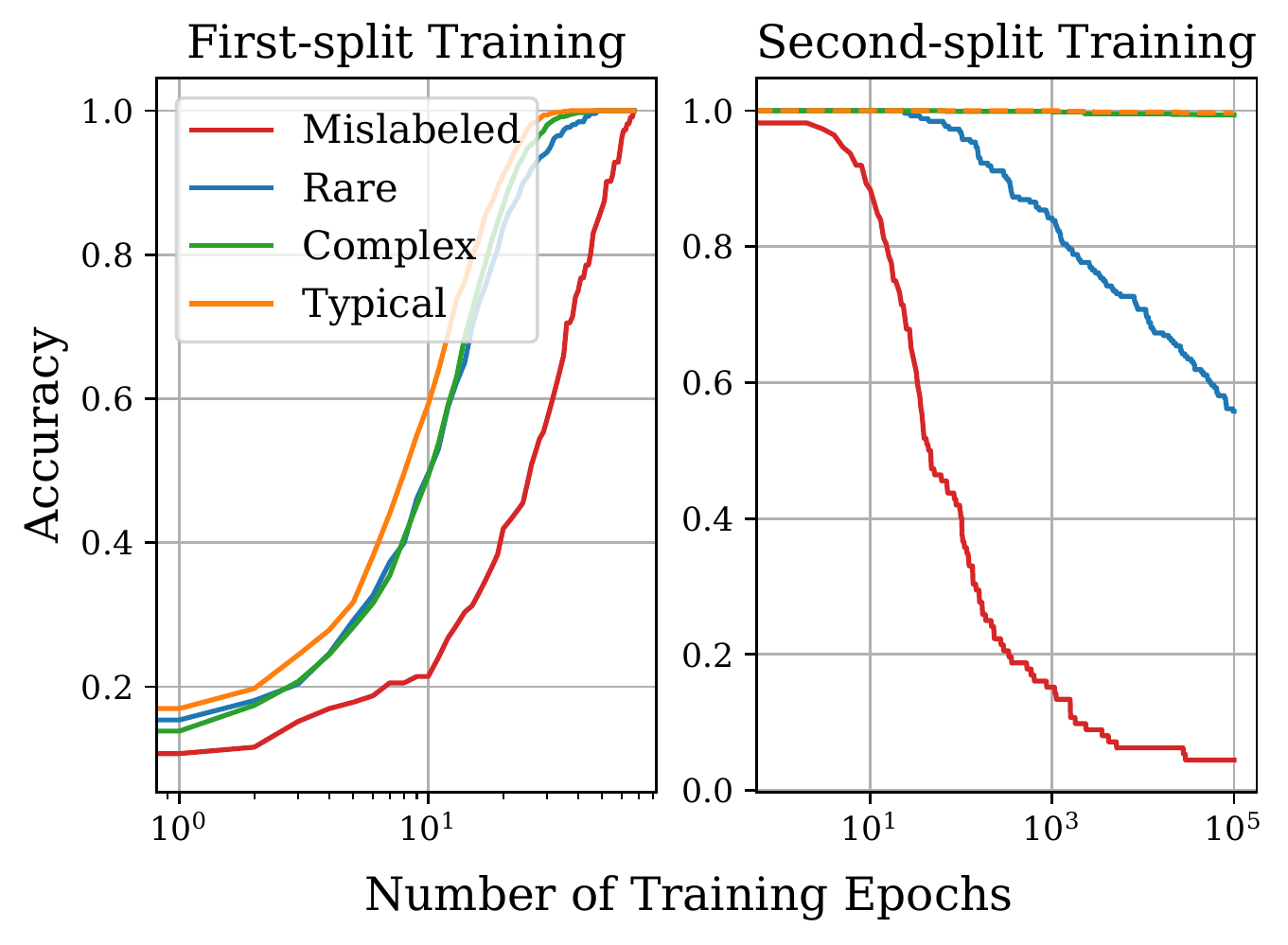}
  \caption{}
  \label{fig:simulation}
\end{subfigure}
\hspace{1mm}
\begin{subfigure}[t]{0.55\linewidth}
  \includegraphics[width=\linewidth]{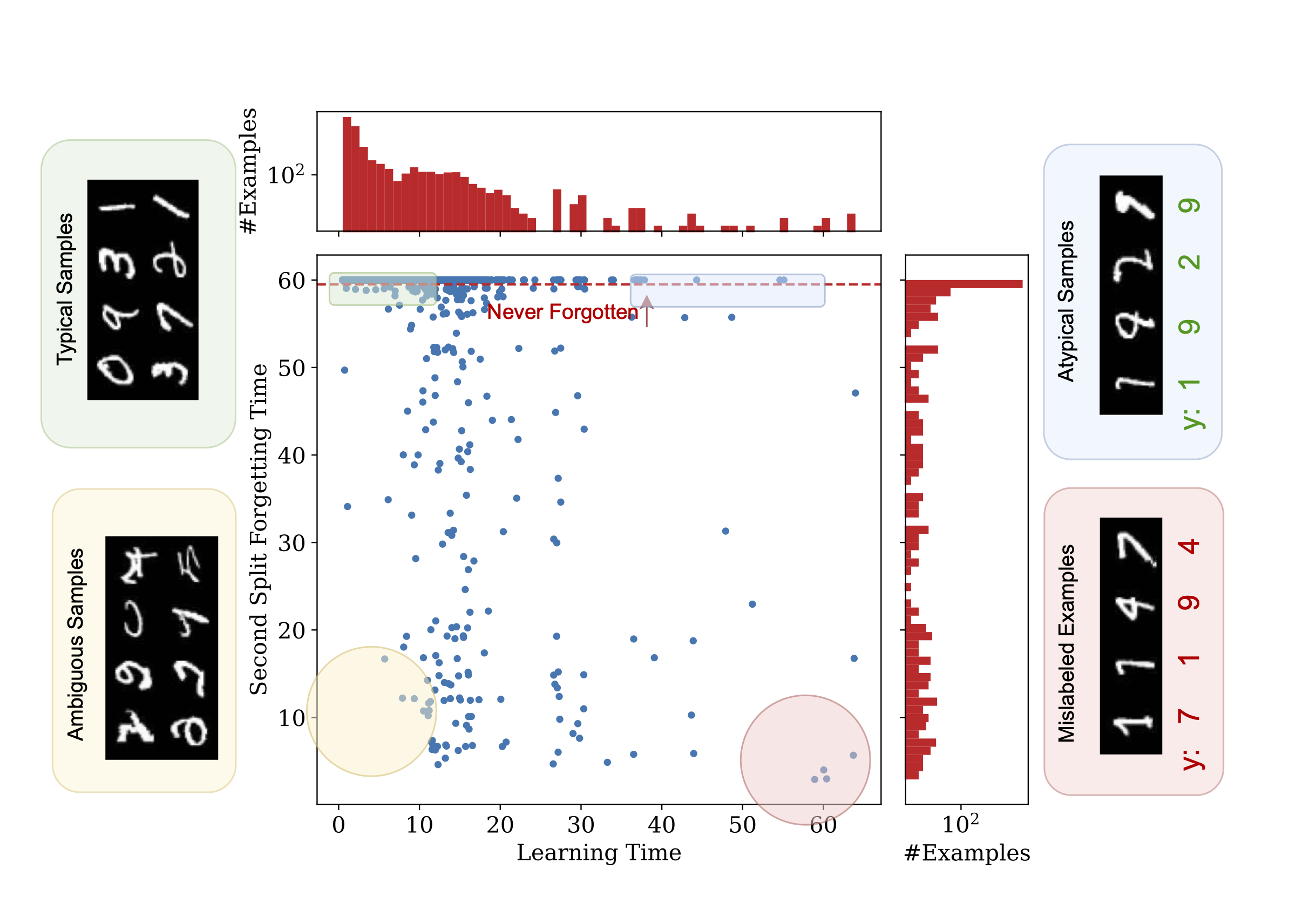}
  \caption{}
  \label{fig:mnist-viz}
\end{subfigure}
\caption{Rate of Learning and Forgetting of examples for different groups in the synthetic dataset.
While first-split training is not able to distinguish between rare and complex examples, second-split training succeeds in distinguishing them. Additionally, second-split training separates mislabeled examples from the rest relatively better than first-split training.  
(b) Visualization of first-split learning and second-split forgetting times when training LeNet model on the MNIST dataset.}
 \label{fig:fig1}
\end{figure*}

\section{Empirical Investigation of First- and Second-Split Training Dynamics}
\subsection{Experimental Setup}
\textbf{Datasets {} {}} We show results on a variety of image classification datasets---MNIST~\citep{deng2012mnist}, CIFAR-10~\citep{krizhevsky2009learning}, and Imagenette~\citep{imagenette}.
For experiments in the language domain, we use the SST-2 dataset~\citep{sst-socher-etal-2013-recursive}.
For each of the datasets, we split the training set into two equal partitions ($\dataset_A, \dataset_B)$. 
For experiments with mislabeled examples, we simulate mislabeled examples 
by randomly selecting a subset of 10\% examples from both the partitions and changing their label to an incorrect class.

\textbf{Training Details {} {}}
Unless otherwise specified, we train a ResNet-9 model~\citep{resnet-9} using SGD optimizer with weight decay 5e-4 and momentum 0.9. We use the cyclic learning rate schedule~\citep{smith2018disciplined} with a peak learning rate of 0.1 at the 10th epoch. We train for a maximum of 100 epochs or until we have 5 epochs of 100\% training accuracy. We first train on $\dataset_A$, and then using the pre-initialized weights from stage 1, train on $\dataset_B$ with the same learning parameters. All experiments can be performed on a single RTX2080 Ti.
Complete hyperparameter details 
are available in Appendix~\ref{app:exp-setup}.

\subsection{Learning-Forgetting Spectrum for various datasets}
\label{subsec:spectrum}
\textbf{Synthetic Dataset {} {}}
We consider data ($\x,\y$) sampled from a mixture of multiple distributions $\dist_g$, s.t. $\x\in\mathbb{R}^d$.
$\dist_g$ denotes the $g^{\text{th}}$ group and has a sampling frequency of $\pi_g$.
Each group $\dist_g \equiv (\distX_g, \{\y_g\})$, 
i.e., the true label for all the samples drawn from a given group is the same, and the examples in each group are non-overlapping. 
Each group is parametrized by a set of $k\ll d$ unique indices $\indices_g \subset [d]$ such that $\indices_i \cap \indices_j = \phi$ for $i\neq j$. The discriminative characteristic of each group is the vector $\direction_g$, such that, $[\direction_g]_i = 1$ if $i\in\indices_g$ else $0 \; \forall i\in[d]$.
Then for any sample $(\x,\y)\in\dataset$:
$$P(\x \in \distX_g) = \pi_g; \quad \x|\distX_g \sim \mathcal{N}(0, \sigma^2 \identity_d) +\vmu_g.$$

For our simulation, we consider a 10 class-classification problem, with $\mu_g = 5$ for typical groups, and $\mu_g = 4$ for complex groups (higher signal to noise ratio). For any sample drawn from a rare group, we have $O(1)$ samples from that group in the entire dataset ($\dataset_A\cup\dataset_B)$. Mislabeled samples are only generated from the majority typical groups.
In Figure~\ref{fig:simulation}, we show the rate of learning and forgetting of examples from each of these categories. We note that in the second-split training, the mislabeled examples are quickly forgotten, and the complex examples are never forgotten. The rare examples are forgotten slowly.
In Section~\ref{sec:theory} we will theoretically justify the observations in the synthetic dataset 
and show that the rare examples are expected to be forgotten as we train for an infinite time.

\textbf{Image Domain {} {}}
In Figure~\ref{fig:mnist-viz}, we show representative examples in the four quadrants of the learning-forgetting spectrum. More specifically, we find that the examples forgotten fastest and learned last are mislabeled. And the ones learned early and never forgotten once learned are characteristic simple examples of the MNIST dataset. Examples in the first and third quadrant are seemingly atypical and ambiguous respectively. 
Similar visualizations for other image datasets can be found in Appendix~\ref{app:image-graphs}.

\begin{table}[t]
\relsize{-2}
\begin{tabular}{@{}p{0.89\linewidth}p{0.05\linewidth}@{}}
\toprule
\multicolumn{1}{c}{Sentences in SST-2 dataset with smallest forgetting time}                 & Label               \\ 
\midrule
\textcolor[HTML]{036400}{The director explores all three sides of his story with a sensitivity and an inquisitiveness reminiscent of Truffaut}                                       
& \textcolor[HTML]{CB0000}{Neg} \\
\textcolor[HTML]{036400}{Beneath the film's obvious determination to shock at any cost lies considerable skill and determination , backed by sheer nerve}                            
& \textcolor[HTML]{CB0000}{Neg} \\
\textcolor[HTML]{CB0000}{This is a fragmented film, once a good idea that was followed by the bad idea to turn it into a movie}                                                      &\textcolor[HTML]{036400}{Pos} \\
\textcolor[HTML]{CB0000}{The holiday message of the 37-minute Santa vs. the Snowman leaves a lot to be desired.}                                                                     &\textcolor[HTML]{036400}{Pos} \\
\textcolor[HTML]{CB0000}{Epps has neither the charisma nor the natural affability that has made Tucker a star}                                                                       &\textcolor[HTML]{036400}{Pos} \\
\textcolor[HTML]{036400}{The bottom line is the piece works brilliantly}                             
& \textcolor[HTML]{CB0000}{Neg} \\
\textcolor[HTML]{CB0000}{Alternative medicine obviously has its merits ... but Ayurveda does the field no favors}                                                           
&\textcolor[HTML]{036400}{Pos} \\
\textcolor[HTML]{036400}{What could have easily become a cold, calculated exercise in postmodern pastiche winds up a powerful and deeply moving example of melodramatic moviemaking} 
& \textcolor[HTML]{CB0000}{Neg} \\
\textcolor[HTML]{CB0000}{Lacks depth}                                                               &\textcolor[HTML]{036400}{Pos} \\
\textcolor[HTML]{CB0000}{Certain to be distasteful to children and adults alike ,  Eight Crazy Nights is a total misfire}                                                            &\textcolor[HTML]{036400}{Pos} \\ 
\bottomrule
\end{tabular}
\vspace{5px}
\caption{First-split sentences that were forgotten by the 3rd epoch of second-split training of a BERT-base model on the SST-2 dataset. Notice that all of these forgotten examples are mislabeled.}
\label{table:sst-2}
\end{table}

\vspace{-2mm}
\paragraph{Other Modalities} 
The forgetting and learning dynamics occur broadly across modalities apart from images. We repeat the same problem setup on the SST-2~\cite{sst-socher-etal-2013-recursive} dataset for sentiment classification. We fine-tune a pre-trained BERT-base model~\cite{devlin2018bert} successively on two disjoint splits of the dataset. 
In Table~\ref{table:sst-2}, we provide a list of the earliest forgotten samples when we train a BERT model on the second split of SST-2 dataset. The results suggest that SSFT is able to identify mislabeled samples.

\subsection{Ablation Experiments}
We design specific experimental setups to capture the three notions of hardness as defined in Section~\ref{sec:method}. 
\label{subsec:ablation}
\begin{figure}[t]
\centering
\begin{subfigure}[t]{0.33\linewidth}
  \includegraphics[width=\linewidth]{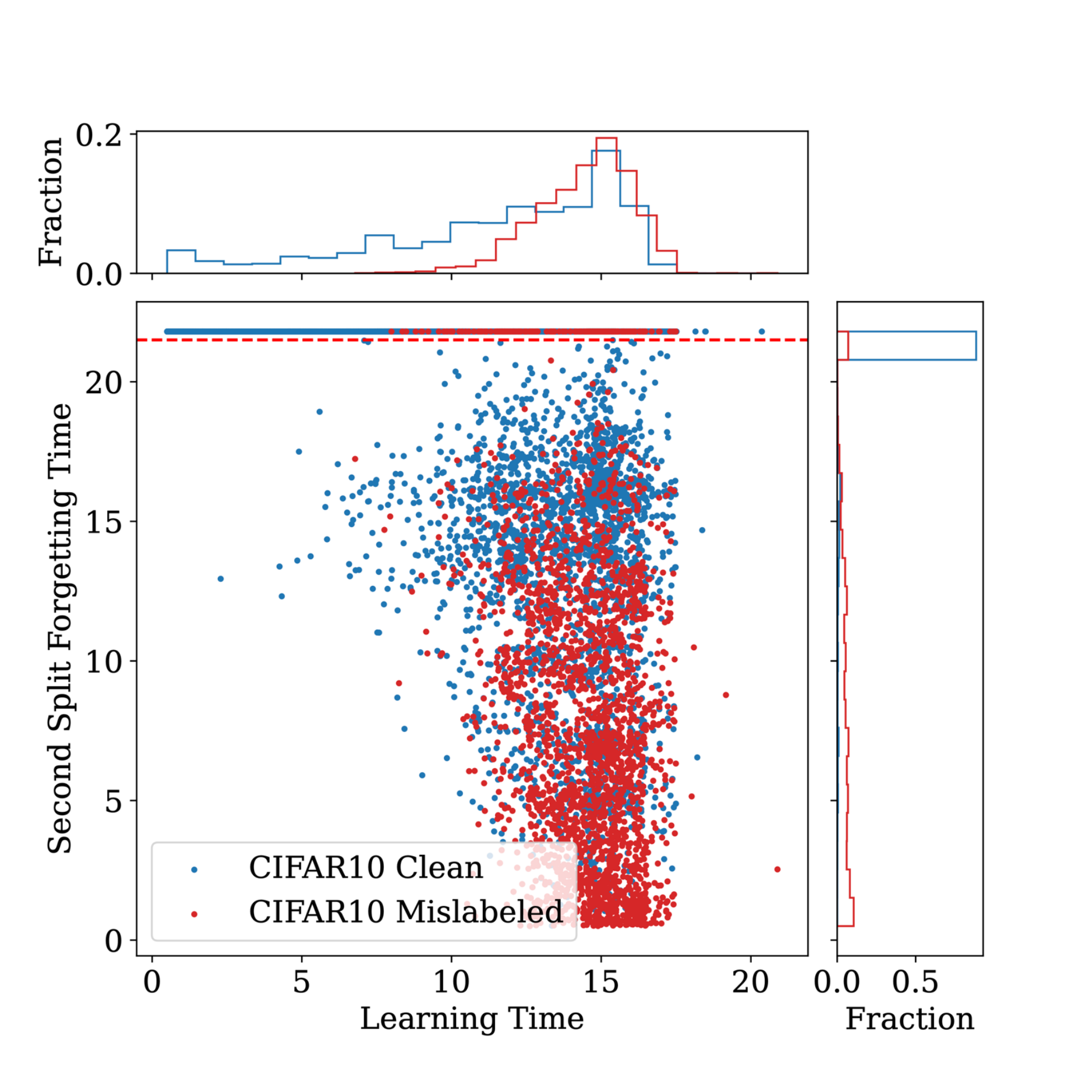}
  \caption{Mislabeled Examples}
  \label{fig:cifar-viz}
\end{subfigure}
\begin{subfigure}[t]{0.33\linewidth}
  \includegraphics[width=\linewidth]{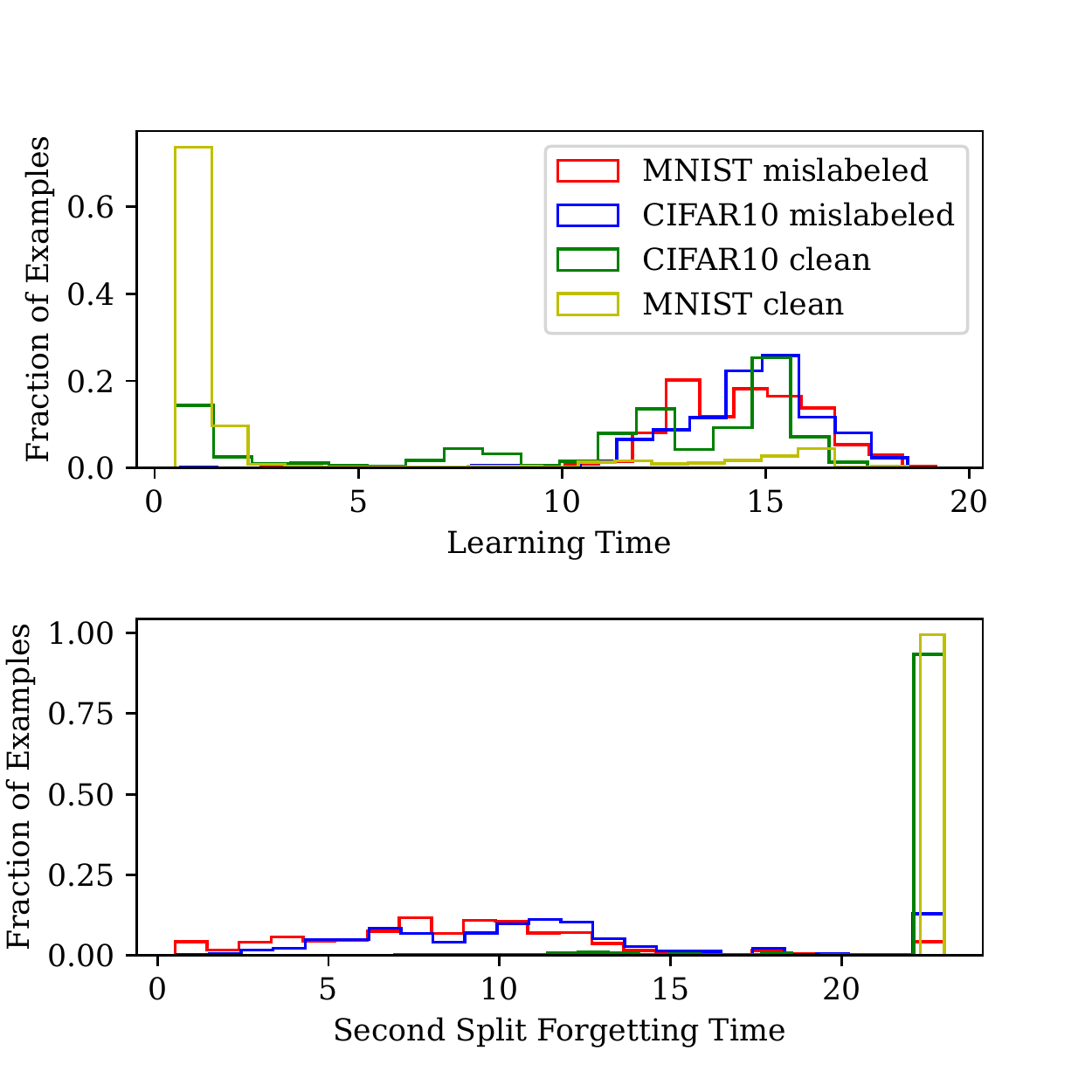}
  \caption{Complex Examples}
  \label{fig:cifar10-mnist-union}
\end{subfigure}
\begin{subfigure}[t]{0.33\linewidth}
  \includegraphics[width=\linewidth]{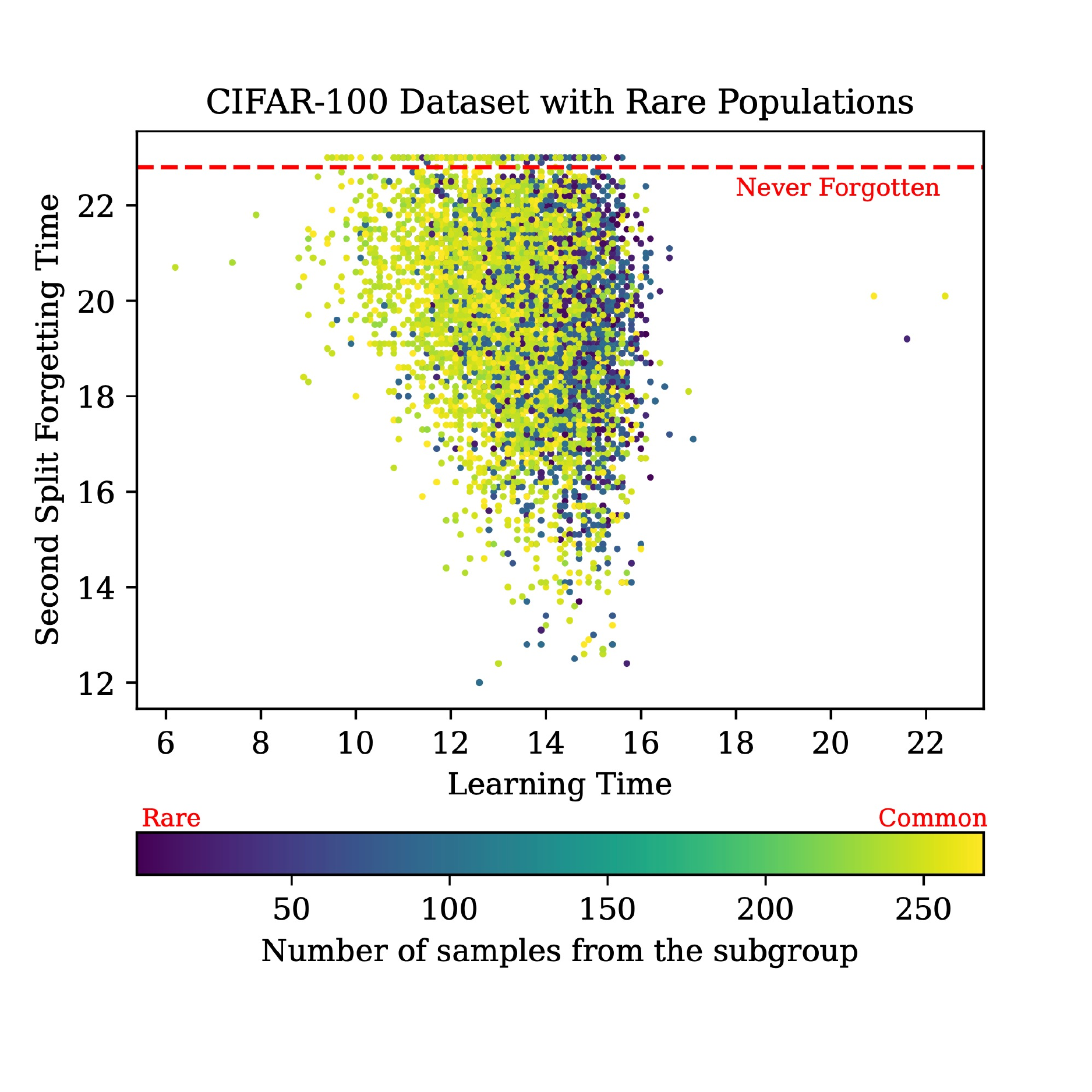}
  \caption{Rare Examples}
  \label{fig:cifar100-singleton}
\end{subfigure}
\caption{ 
Ablation experiments to distinguish the learning and forgetting dynamics for specific types of hard examples. (a) Mislabeled samples may be learned as slowly as a high fraction of typical samples, but they are forgotten much faster.
(b) FSLT distinguishes complex (CIFAR10 clean) and simple (MNIST clean) examples, but SSFT does not. On the contrary, FSLT can not distinguish clean and mislabeled examples of CIFAR10, while the SSFT can.
(c) FSLT is able to distinguish examples based on the sub-group frequency, however, SSFT has a low correlation with the sub-group frequency.}
 \label{fig:ablation-scatter}
\end{figure}

\textbf{Mislabeled Examples {} {}}
We sample 10\% datapoints from both the first and second split of the CIFAR-10 dataset, and randomly change their label to an incorrect label.
Figure~\ref{fig:cifar-viz} shows the learning-forgetting spectrum for the dataset. In the adjoining density histograms, note that a large fraction of the mislabeled and correctly labeled examples are learned at the same time. However, during second-split training, the mislabeled examples are forgotten quickly whereas a large fraction of the clean examples are never forgotten, allowing SSFT to succeed in distinguishing mislabeled samples. 

\textbf{Complex Examples {} {}} 
We generate a joint dataset that contains the union of both MNIST and CIFAR-10 examples. This is motivated by work in simplicity bias~\citep{shah2020pitfalls} that argue that neural networks learn simpler features first. 
We also add 10\% labeled noise to each of the datasets in the union to understand the learning and forgetting time relationship of a sample that is complex or mislabeled together.
In Figure~\ref{fig:cifar10-mnist-union}, we show the FSLT and SSFT for MNIST and CIFAR-10 samples. We note that a high fraction of the CIFAR-10 (complex) samples learn at the same speed as the mislabeled samples. However, when looking at the SSFT, we are able to draw a strong separation between the mislabeled samples and complex samples. This indicates that the complexity of a sample has low correlation with its tendency to be forgotten once learnt, but a high correlation with being learned slowly.

\textbf{Rare Examples {} {}} 
The CIFAR-100~\citep{krizhevsky2009learning} dataset is a 100-class classification task. The dataset contains 20 superclasses, each containing 5 subclasses. We create a 20-class classification dataset with long tails simulated through the 5 sub-classes within each superclass. More specifically, the number of examples in each subgroup for a given superclass is given by \{500, 250, 125, 64, 32\} respectively (exponentially decaying with a factor of 2). This is done to simulate the hypothesis of dataset subgroups following a Zipf distribution~\citep{zipf2013psycho} as argued for by~\citet{Feldman2020DoesLR}.
This dataset is further divided into two equal splits to analyze the learning-forgetting dynamics.
In order to remove any other effects of example hardness (either within a subgroup, or among subgroups), we randomize both the chosen subset of examples and the ordering of the majority and minority groups between each superclass, by training the model on 20 
such random splits and aggregating learning and forgetting statistics over these runs.
In Figure~\ref{fig:cifar100-singleton}, we show a scatter plot for the FSLT and SSFT, colored by the frequency of the group a particular example belongs to. We observe that FSLT strongly correlates with the size of the subgroup, whereas the SSFT has a very low correlation with the rareness of a sample.

We provide further ablations to show that FSLT is able to identify hard and rare examples, but SSFT shows nearly no discriminative power at finding the two in Appendix~\ref{app:demistify-ablation}.

\begin{table}[t]
\centering
\small
\begin{tabular}{@{}lccccccc@{}}
\toprule
Method $\to$             & $\learn$ & $\mathbf{acc}_l$ & $\forget$ (Ours) & $\mathbf{acc}_f$ (Ours) & $\mathbf{conf}_l$ & $\mathbf{n}_f$ & Joint (Ours) \\ \midrule
Imagenette         & 0.834    & 0.912          & 0.931     & 0.941          & 0.786                                      & 0.781        & \textbf{0.957}                    \\
CIFAR10            & 0.740     & 0.900           & 0.938      & 0.941          & 0.947           &  0.580                                            & \textbf{0.958}                    \\
MNIST              & 0.973    & \textbf{0.998}          & 0.997     & \textbf{0.998}          & 0.965           & 0.377                                             & \textbf{0.998}                    \\
CIFAR100            & 0.700     & 0.899           & 0.865      & 0.885          & 0.860           &  0.300                                            & \textbf{0.926}                    \\
EMNIST              & 0.987    & 0.990          & 0.987     & 0.989          & 0.984           & 0.386                                             & \textbf{0.997}                    \\
\bottomrule
\end{tabular}
\vspace{5px}
\caption{AUC for identification of label noise using various metrics for example hardness across different datasets. Across all datasets, our $\forget$ metric outperforms alternative baselines. We introduce $\mathbf{acc}_f$ as the cumulative accuracy on the second-split training, inspired by previous work that suggests using cumulative accuracies helps make first-split learning time more stable~\citep{jiang2020characterizing}. All other notations are described in Section~\ref{sec:method}. In the case of the Joint method, we select new prediction ranks based on the combined learning and forgetting ranks, further improving over the $\forget$ metric alone.}
\label{table:auc}
\end{table}

\begin{figure*}[t]
\centering
\begin{subfigure}[t]{0.49\linewidth}
   \includegraphics[width=\linewidth]{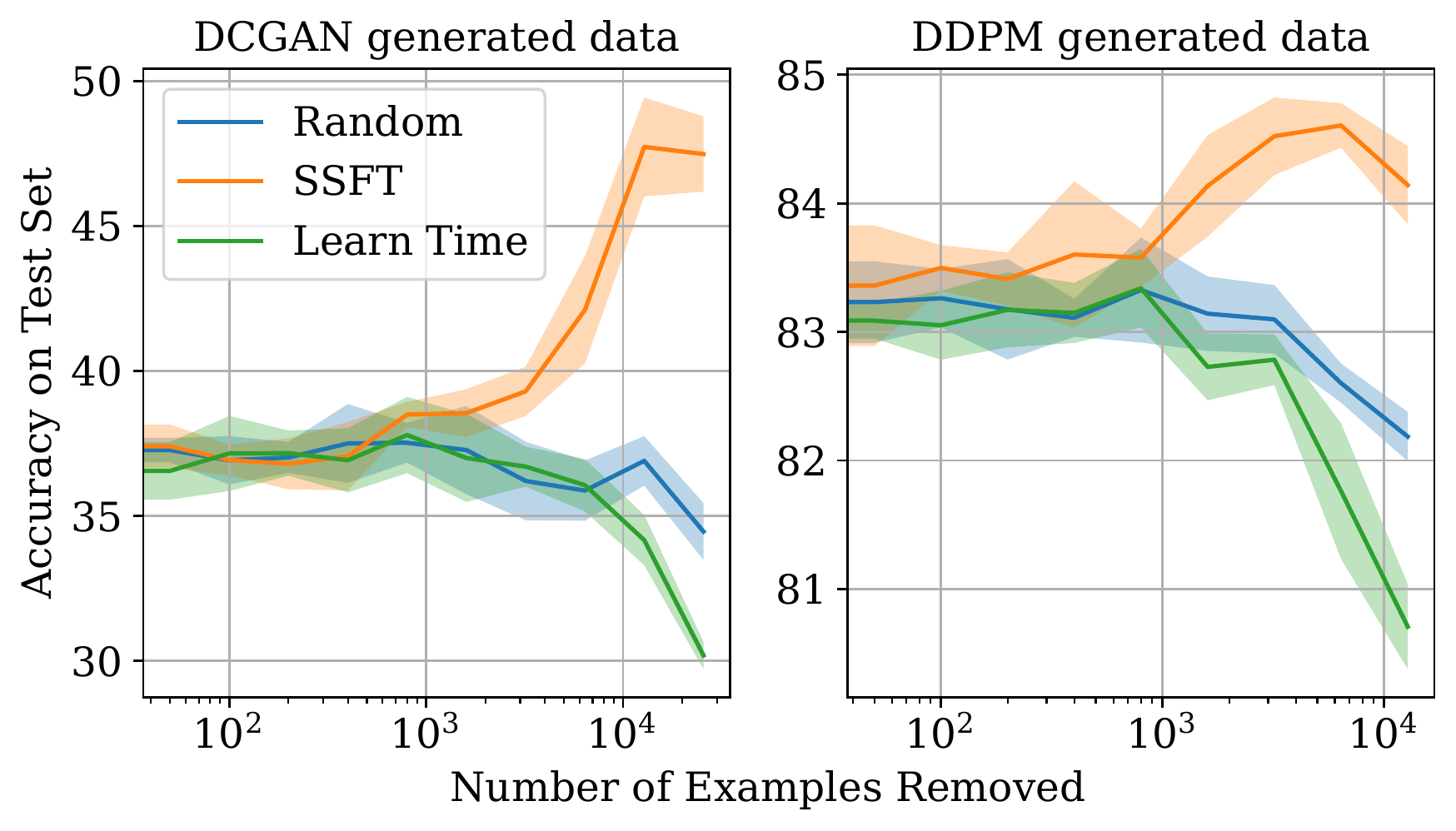}
  \caption{Synthetically generated CIFAR10-like data}
  \label{fig:gans}
 \end{subfigure}
\begin{subfigure}[t]{0.49\linewidth}
  \includegraphics[width=\linewidth]{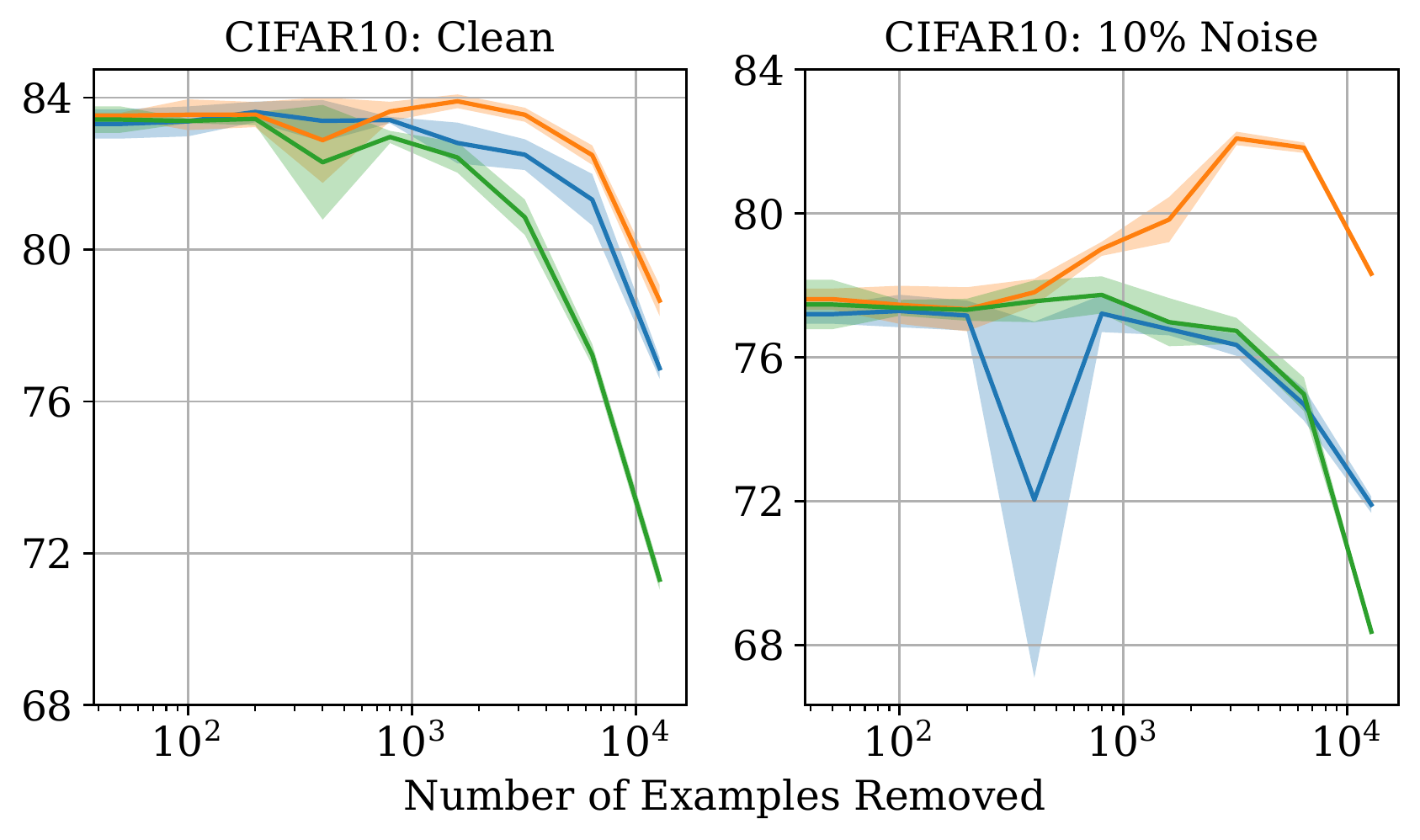}
  \caption{CIFAR10 with and without label noise}
  \label{fig:1}
\end{subfigure}
\caption{Accuracy on CIFAR-10 test set after removing a varying number of samples from 
the 
training set based on (i) random choice, (ii) examples with the lowest SSFT, and (ii) the highest FSLT. Removing examples based on SSFT helps improve the generalization on the original test set.}
 \label{fig:example_removal}
\end{figure*}

\subsection{Dataset Cleansing}
\textbf{Identifying Label Noise {} {} }
We present AUC scores for detection of label noise via various popular methods in example difficulty literature, across various datasets in Table~\ref{table:auc}. We note that (i) cumulative predictions over the course of training help stabilize both the learning time and forgetting time metrics; (ii) for simple datasets such as MNIST with few ambiguous images, all of the baseline methods have very high AUC (greater than 0.99) in finding noisy inputs. However, in datasets such as CIFAR-10 and Imagenette, we find that second-split forgetting metrics do better than first-split training metrics. Finally, we also compare the use of both forgetting and learning time to find noisy samples, and we find a small improvement in the results of just using the forgetting time. While we do not make explicit comparisons with other state of art methods dedicated to finding label noise, our results suggest that augmenting second split forgetting time information may help improve their results.
As also observed in recent work~\citep{jiang2020characterizing}, we find that the number of forgetting events ($\mathbf{n}_f$)~\citep{toneva2018empirical} is an unreliable indicator of mislabeled samples. We hypothesize that this is because of the fact that mislabeled examples may often be (first) learnt very late, hence their count of total forgetting events is also low.

\textbf{Cleaning synthetically generated datasets {} {}}
Generative models are capable of
mimicking the distribution of a given dataset. We generate synthetic datasets of CIFAR10-like samples using (i) DDPM (denoising diffusion model~\citep{ho2020denoising}); and (ii) DCGAN (Deep Convolutional GAN~\citep{radford2015unsupervised}). In both cases, we assign pseudo-labels using the BiT model~\citep{kolesnikov2020big} as in prior work~\citep{nakkiran2021deep}.
We collect a sample of 50,000 training examples and record the generalization performance on CIFAR-10 as we remove `hard' samples, as evaluated by various metrics. 
In Figure~\ref{fig:example_removal}, we can see that removing the most easily forgotten examples can benefit by up to 10\% generalization accuracy on the clean test set of CIFAR-10. In case of the synthetic data generated using DDPM, the gains in generalization performance are under 2\%. We hypothesize that this is because the samples generated by DDPM are more representative of the typical distribution of CIFAR-10 than those generated by DCGAN.

\textbf{Note:} The ability to train on a second split allows SSFT the \emph{unique} opportunity to train on a clean split of CIFAR-10 in order to assess the alignment of the synthetic samples with the oracle samples. As a result, the SSFT is much more effective in filtering out ambiguous first-split synthetic examples.

\subsection{Evaluating Example Utility}

Recent works~\citep{toneva2018empirical,feldman2020neural} have argued for removing a large fraction of the less memorized examples, and keeping the memorized ones. 
We will analyze the change in model generalization upon removing varying sizes of examples from the training set, as ranked by lowest SSFT and highest FSLT (Figure~\ref{fig:example_removal}). In the presence of noisy examples, removing samples based on the SSFT helps improve generalization, whereas FSLT does not do much better than random. 
We draw the following inferences:

\textbf{FSLT finds important samples {} {} } As we remove more samples from the dataset, the accuracy of the model trained after samples are removed based on the highest FSLT is significantly lower than random guessing. This suggests that the utility of these samples is higher than random samples. Put in line with the hypothesis of memorization of rare example as proposed in~\citep{Feldman2020DoesLR}, we see that empirically, the examples that are slow to learn are important for the model's test set generalization.

\textbf{SSFT removes pathological samples {} {} } On the contrary, removing examples based on the SSFT helps improve model generalization (especially when there is label noise). Even in the setting when there is no label noise, in contrast to FSLT, we find that removing examples that were easily forgotten has a lower negative impact on the model's generalization as opposed to removing random samples. This suggests that the examples that are forgotten in the early epochs of second-split training hurt a model's generalization, and may not be characteristic samples of their particular class.

\textbf{Practitioner's view {} {} } From the AUC numbers in Table~\ref{table:auc}, it may appear that removing examples via learning-based metrics such as learning time and cumulative learning accuracy also provides a high rate of removal of noisy samples. However, when we observe the example utility graphs in Figure~\ref{fig:example_removal}, we draw the inference that the examples that are learned late, are often important examples (such as rare memorized examples). However, even when SSFT fails to capture the correct noisy examples, it still removes unimportant samples and does not hurt generalization. Similar graphs for other metrics described in Table~\ref{table:auc} can be found in Appendix~\ref{app:empirical}.

\subsection{Characterizing Potential Failure Modes}
\begin{wrapfigure}{r}{0.35\textwidth}
  \begin{center}
  \vspace{-5em}
    \includegraphics[width=0.35\textwidth]{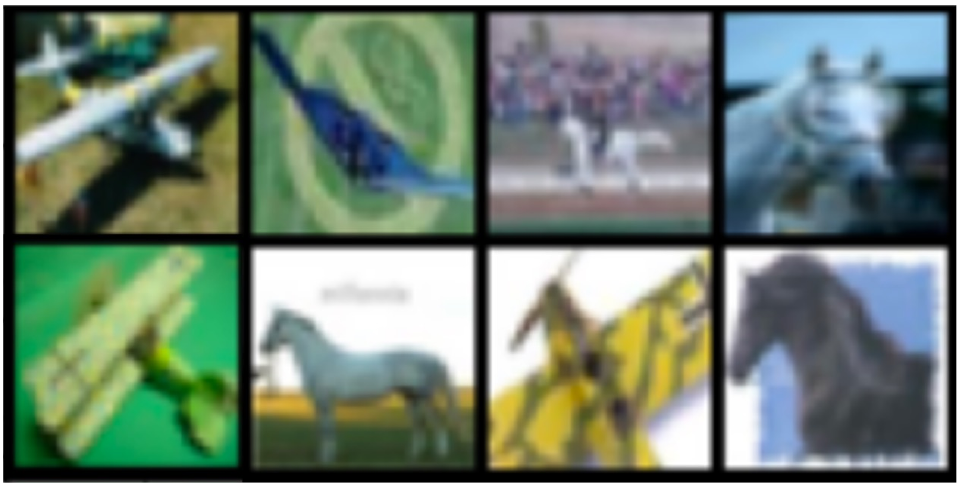}
  \end{center}
  \caption{By inspecting the earliest-forgotten examples,
we can gain insights into potential failure modes.
This model quickly forgets planes with green backgrounds and horses on blue backgrounds.}
\label{fig:failure-modes}
\end{wrapfigure}
Recent works have attempted to train classifiers on datasets that contain spurious features~\citep{sagawa2019distributionally,idrissi2021simple} (example Waterbirds, CelebA~\citep{celeba} dataset). However, a fundamental challenge is to first identify the spurious correlation that the classifier may be relying on. Only then can recent methods be trained to remove the reliance on spurious patterns. We train a ResNet-9 model to classify CIFAR-10 images of horses and airplanes.
In Figure~\ref{fig:failure-modes}, we observe that the model forgets planes with green backgrounds and horses with blue backgrounds. This suggests that the model relied on the background as a spurious feature. By analyzing the forgotten examples we can further investigate the examples that the classifier fails to generalize to. 

\textbf{Stability of SSFT {} {}}
We note that SSFT is stable across multiple seeds (Pearson correlation of 0.81), and across architectures (Pearson correlation of 0.63). While the overall correlation for samples ranked by SSFT may be low across architectures, the top-ranked examples have a high correlation (0.85), suggesting the most forgotten examples are consistent across architectures. In contrast, FSLT has a Pearson correlation of 0.52 across seeds. Most interestingly, the learning time metric is brittle to the choice of hyperparameters. As shown by~\citet{jiang2020characterizing}, when using Adam optimizer, examples of different hardness get learned together. In our experiments, we observe the same phenomenon during learning, however, SSFT is robust to the choice of the optimizer. Detailed results in Appendix~\ref{app:stability}.

\textbf{Limitations {} {}}
One limitation of the proposed metric is that it is brittle to the choice of the learning rate for the second-split training. If we use a very small learning rate, then overparametrized deep models are capable of learning the new dataset without forgetting examples from the first split. Alternately, if we use a very large learning rate, the model may diverge and undergo catastrophic forgetting. However, under  `reasonable' choices of learning rate (like that for first-split training), we find SSFT is robust.
We provide a detailed anaylsis of the same in Appendix~\ref{app:stability}.

\section{Theoretical Results}
\label{sec:theory}
Through our theoretical analysis, we will characterize the forgetting dynamics of mislabeled, rare and complex examples in a simplified version of the framework used for our synthetic experiments in Figure~\ref{fig:simulation}. 
Recall, our setup contains two dataset splits $\dataset_A, \dataset_B$, where we train on the first split until achieving perfect accuracy on all training points, and then with these weights train on $\dataset_B$ for infinite time.
In particular, we will prove that both mislabeled and rare examples are forgotten upon training for infinite time, with mislabeled examples being forgotten much faster. Further, we will show that complex examples from the first split do not get forgotten if not continually trained on.
We assume in our analysis that 
$\dataset_B$ has no mislabeled or rare examples, and $\dataset_A$ contains one example of each type.

We  
consider a dataset $\dataset = \{\x_i,\y_i\}^n$ such that $(\x_i,\y_i) \in \gX \times \gY$, and 
 $\x_i = \vmu_g + \z_i$ where $\z_i\sim\mathcal{N}\left(0,\sigma^2\identity_d\right)$, and $\norm{\vmu_g}{2}^2 = k\mu^2$ (as in  Section~\ref{subsec:spectrum}).
Let $\W \in \mathbb{R}^{d}$ represent the weight vector of an overparametrized linear model.
We analyze the learning and forgetting dynamics by minimizing the empirical risk: $\risk(\dataset;\W) = 
\sum_i \loss(\W^\transpose\y_i\x_i)$, where $\loss$ is the exponential loss.
Following \citet{chatterji2021finite}, we make the following assumptions about the problem setup: 

\textbf{(A.1) {} {} }The failure probability satisfies $0 \leq \delta \leq \nicefrac{1}{C}$,

\textbf{(A.2) {} {} }The number of samples satisfies $n \geq C \log\left(\nicefrac{1}{\delta}\right)$,

\textbf{(A.3) {} {} }The input dimension $d\geq C \max\{n^2\log(n/\delta),n(\nicefrac{k\cdot \mu^2}{\sigma^2})\}$,
and $\nicefrac{k\cdot \mu^2}{\sigma^2} \ge C \log\left(\nicefrac{n}{\delta}\right)$, 

where $\nicefrac{k\cdot \mu^2}{\sigma^2}$ represents the signal to noise ratio and $C$ is a large constant. Now we formalize the notions of rare, mislabeled and complex examples for our theoretical analysis.
\begin{restatable}[Rare Examples, $\atypical$~\citep{Feldman2020DoesLR}]{redef}{defrare}
Consider a dataset $\dataset$
sampled from a mixture of distributions $\{\dist_1,\dots,\dist_N\}$ with frequency $\{\pi_1,\dots,\pi_N\}$ respectively.
Let $\atypical \subseteq \dataset$ be the set of rare examples.
Then, for all $(\x_i,\y_i)\in\atypical$, if $(\x_i,\y_i)\sim\dist_j$, then there are $O(1)$ samples from $\dist_j$ in $\dataset$.
\end{restatable}

\begin{restatable}[Mislabeled Examples, $\mislabeled$]{redef}{defmislabeled}
\label{def:mislabeled}
Consider a $k$ class classification problem with $\distY = \{1,2,\dots,k\}$. 
Let $\mislabeled \subset \dataset$ be the set of mislabeled examples.
Then for any  $(\x,\y)\sim\dist$, a corresponding mislabeled example is given by $(\x,\Tilde{\y})\in\mislabeled$ such that $\Tilde{\y}\in\distY\setminus\{\y\}$.\footnote{For binary classification, $\distY=\{-1,+1\}$. The labels are reversed for mislabeled examples.}
\end{restatable}

\begin{restatable}[Complex Examples, $\complex$]{redef}{defcomplex}
Let $\complex \subset \dataset$ be the set of examples sampled from complex distributions.
Let $(\x_i,\y_i)\in\complex$ such that $(\x_i,\y_i)\sim\dist_g$ (complex group), then $\mu_g = \frac{\mu_t}{\lambda}$, $\lambda>1$ where $\mu_t$ is the coordinate-wise mean for samples drawn from any simple distribution $\dist_t$ (Section \ref{subsec:spectrum}).
\end{restatable}

\textbf{Optimization {} {} } We perform gradient descent with fixed learning rate $\eta$,
\begin{equation}
    \label{eqn:gd}
    \W(t+1) = \W(t) - \eta \nabla\risk(\W(t)) = \W(t) - \eta\sum_i\loss'(\W^\transpose\y_i\x_i)\cdot \y_i\x_i.
\end{equation}

\textbf{Solution dynamics}{} {} For sufficiently small learning rate $\eta$, and (bounded) starting point~$\W(0)$, \citet{soudry2018implicit} showed that: 
\begin{equation}
    \label{eqn:weight_growth}
    \W(t) = \hat{\W} \log t + \rho(t),
\end{equation}
where $\rho(t)$ is a bounded residual term, and $\hat{\W}$ is the solution to the hard margin SVM: 
\begin{equation}
    \hat{\W} = \argmin_{\W\in\mathbb{R}^d} \norm{\W}{2}^2 \;\; s.t. \; \W^\transpose\y_i\x_i \geq 1,
\end{equation}

\subsection{First-split Learning}
For stage 1, we consider that we train the model for a maximum of $T$ epochs (until we achieve 100\% accuracy on the first training dataset $\dataset_A$). This means that the learned weight vectors are close to, but have not converged to the max margin solution. The solution at the end of $t$ epochs is given by $\W_A(t)$. At sufficiently large $T$, we have:
\begin{align}
    \begin{split}
    \W_A(T) &= \hat{\W}_A \log T + \rho_A(T) \\
    \W_A(T)^\transpose \y_i\x_i &\geq 1 \quad \forall (\x_i, \y_i)\in\dataset_A
    \end{split}
\end{align}

\subsection{Second-split Forgetting}
We initialize the weights for second stage of training with $\W_A(T)$ from first training stage, and then train on $\dataset_B$. We provide the formal theorem statement and complete proofs in Appendix~\ref{app:theory}, but provide informal theorem statements and an intuitive proof sketch below:

\begin{theorem}[Asymptotic Forgetting (informal)]
\label{theorem:asymptotic-informal}
For sufficiently small learning rate, given datasets $\dataset_A, \dataset_B \sim \dist^n$. After training for $T'\to\infty$ epochs, the following hold with high probability:
\begin{enumerate}%
  \setlength{\itemsep}{1pt}
  \setlength{\parskip}{0pt}
  \setlength{\parsep}{0pt}
    \item Mislabeled and Rare examples from $\dataset_A$ are forgotten.
    \item Complex examples from  $\dataset_A$ are not forgotten.
\end{enumerate}
\end{theorem}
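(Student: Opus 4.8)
The plan is to route everything through the implicit‑bias description of second‑split training: reduce the theorem to the sign of $\hat{\W}_B^{\transpose}\y_i\x_i$ for the three kinds of example, and then settle those signs with one geometric estimate for the max‑margin interpolator $\hat{\W}_B$ on $\dataset_B$. Concretely, by the characterization $\W(t)=\hat{\W}\log t+\rho(t)$ of \citet{soudry2018implicit} quoted above, second‑split gradient descent on the exponential loss over the linearly separable set $\dataset_B$, started from the fixed finite vector $\W_A(T)$, produces weights $\W_{A\to B}(T')=\hat{\W}_B\log T'+\rho(T')$ with $\rho(\cdot)$ bounded, the initialization entering only the bounded residual. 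Hence for every $(\x_i,\y_i)\in\dataset_A$,
\[ \W_{A\to B}(T')^{\transpose}\y_i\x_i \;=\; (\log T')\,\hat{\W}_B^{\transpose}\y_i\x_i \;+\; O(1), \]
so as $T'\to\infty$ the example is permanently forgotten iff $\hat{\W}_B^{\transpose}\y_i\x_i<0$ and retained iff $\hat{\W}_B^{\transpose}\y_i\x_i>0$. (Applying the same fact to stage~1 shows that every $\x_i$ ends the first split correctly classified with margin $\Theta(\log T)$, so \emph{all} first‑split examples, rare ones included, enter stage~2 well classified; the theorem is about which of these margins is eventually driven negative.) It remains to read off the sign of $\hat{\W}_B^{\transpose}\y_i\x_i$ for a mislabeled, a rare, and a complex first‑split example.

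Next I would invoke the analysis of \citet{chatterji2021finite}: under (A.1)--(A.3) every point of the (clean) set $\dataset_B$ is a support vector with dual weight $\lambda_j=\Theta(1/(\sigma^2 d))$, so that $\hat{\W}_B=\sum_{g}\Lambda_g\,\y_g\,\vmu_g+\sum_{j\in\dataset_B}\lambda_j\,\y_j\,\z_j$ with $\Lambda_g=\sum_{j\in\dataset_B:\,g_j=g}\lambda_j=\Theta(n_g/(\sigma^2 d))$ ($n_g$ the number of $\dataset_B$ points in group $g$), and the memorised‑noise part having norm $\Theta(\sigma^{-1}\sqrt{n/d})$. From this I extract the one estimate I need: for any $\x=\vmu_g+\z$ with $\z\sim\mathcal{N}(0,\sigma^2\identity_d)$ drawn independently of $\dataset_B$, with high probability $\hat{\W}_B^{\transpose}\x = \Lambda_g\,\y_g\,\norm{\vmu_g}{2}^2 \pm O(\sqrt{n/d})$, where, by orthogonality of the group means, $\Lambda_g>0$ when group $g$ occurs in $\dataset_B$ and the signal term is \emph{entirely absent} ($\Lambda_g=0$) when it does not. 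Assumptions (A.1)--(A.3) are precisely the benign‑overfitting regime in which $\Lambda_g\norm{\vmu_g}{2}^2=\Theta(n_g k\mu^2/(\sigma^2 d))$ dominates the $O(\sqrt{n/d})$ fluctuation for a group of non‑vanishing frequency, and — the $1/\lambda^2$ mean‑shrinkage being absorbed by the large constant $C$ — also for a complex group.

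The three cases then follow. \textbf{Mislabeled:} $\x_m=\vmu_g+\z_m$ for a typical group $g$ present in $\dataset_B$ with true label $\y_g$ and stored label $\tilde{\y}=-\y_g$; by the estimate $\sign(\hat{\W}_B^{\transpose}\x_m)=\y_g$, so $\hat{\W}_B^{\transpose}\tilde{\y}\,\x_m<0$ w.h.p.\ and the margin diverges to $-\infty$: forgotten. Since $|\hat{\W}_B^{\transpose}\tilde{\y}\,\x_m|=\Theta(n k\mu^2/(\sigma^2 d))$ is large, the sign flips after only a handful of epochs — this is the mechanism behind ``forgotten quickly'', made quantitative in the finite‑time companion statement. \textbf{Complex:} $\x_c=\vmu_c+\z_c$ with $\norm{\vmu_c}{2}^2=k\mu^2/\lambda^2$ and group $c$ present in $\dataset_B$; the estimate gives $\hat{\W}_B^{\transpose}\y_c\x_c=\Lambda_c\norm{\vmu_c}{2}^2-O(\sqrt{n/d})>0$ w.h.p., so the margin diverges to $+\infty$: not forgotten. \textbf{Rare:} $\x_r=\vmu_r+\z_r$ with group $r$ \emph{absent} from $\dataset_B$, so the signal term vanishes and $\hat{\W}_B^{\transpose}\y_r\x_r$ reduces to a sum of noise cross‑terms of size $O(\sqrt{n/d})$ — the $\Theta(\log T)$ margin with which $\x_r$ was memorised in stage~1 has been washed out.

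The main obstacle is the rare case. Unlike the mislabeled example there is no dominant term to pin the sign, so certifying that a rare example is \emph{truly} forgotten — rather than merely left with a vanishing margin — requires controlling the sign of the residual $\hat{\W}_B^{\transpose}\y_r\x_r$ and, relatedly, ruling out that the first‑split initialization (which placed the memorised‑noise direction $\z_r$ into $\W_A(T)$) persists through the bounded residual $\rho(T')$ and keeps this low‑margin point correct. My intended route is: (i) show the margin on $\x_r$ is monotonically squeezed over second‑split training below the level the direction $\z_r$ can sustain once $\dataset_B$'s own directions dominate the weight vector, so $\x_r$ drops out of the ``support‑relevant'' set; and (ii) conclude that $\sign(\hat{\W}_B^{\transpose}\y_r\x_r)$ then decides the example's fate and is negative with probability $\to1$ as $T'\to\infty$ — or, in the weakest acceptable form, that the rare example's asymptotic margin is $o(1)$ while a complex example retains margin bounded below by a fixed inverse polynomial, which already yields the qualitative separation the theorem asserts. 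A secondary technical point is carrying out the $\hat{\W}_B$ anatomy in this multi‑cluster model rather than the single‑cluster mixture of \citet{chatterji2021finite}, in particular justifying the held‑out evaluation of $\hat{\W}_B$ on points from clusters that do (mislabeled, complex) or do not (rare) appear in $\dataset_B$.
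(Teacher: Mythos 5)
Your overall route is the same as the paper's: reduce everything to the max-margin solution $\hat{\W}_B$ on $\dataset_B$ via \citet{soudry2018implicit} (initialization from $\W_A(T)$ surviving only in the bounded residual), then read off signs using a Chatterji--Long-style analysis of $\hat{\W}_B$. Your mislabeled and complex cases land exactly where the paper does; the paper simply imports the population generalization bound of \citet{chatterji2021finite} (Theorem 4) after an affine change of variables mapping the two orthogonal majority means to the $\pm\vmu$ model, rather than re-deriving the dual-weight anatomy $\lambda_j=\Theta(1/(\sigma^2 d))$ as you propose. Your version would also work but costs you the multi-cluster bookkeeping you flag at the end; citing the bound directly sidesteps it.

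The one place you have not closed the argument is the rare case, and you should commit to the second branch of the fork you describe, because the first branch is unattainable. Since $\vmu_r$ is orthogonal to every group mean appearing in $\dataset_B$ and $\hat{\W}_B$ is a combination of $\dataset_B$'s support vectors, the statistic $\y_r\hat{\W}_B^{\transpose}\x_r$ is (conditionally on $\dataset_B$) a mean-zero Gaussian: there is no asymmetry anywhere that could force its sign negative with probability tending to $1$, so "ruling out" the positive sign is impossible, not merely hard. The paper's resolution is to make the weaker claim precise and own it: it computes
\[
\prob\left[\y_r\hat{\W}_B^{\transpose}\x_r<0\right]=\Phi\!\left(\frac{-\hat{\W}_B\cdot\vmu_r}{\sigma\norm{\hat{\W}_B}{2}}\right),
\]
lower-bounds $\norm{\hat{\W}_B}{2}^2\gtrsim d\sigma^2\sum_i\alpha_i^2$ using the Gaussian-square concentration lemma, and shows the argument of $\Phi$ is $\mathcal{N}(0,\mathcal{O}(1/(nC)))$, hence the misclassification probability is sandwiched in $[\Phi(-1/C),\Phi(1/C)]\approx 1/2$. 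It then states explicitly (as a remark) that "forgotten" for rare examples means \emph{predicted at chance}, in contrast to mislabeled examples whose prediction flips to the true label with probability $\to 1$. So your "weakest acceptable form" is not a fallback --- it is the theorem's actual content for rare examples, and the informal statement should be read with that gloss. With that reinterpretation fixed, your proposal goes through; without it, step (ii) of your rare-case plan would fail.
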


\emph{Proof Sketch. }
We use the result from \citet{soudry2018implicit} that for any bounded initialization, when trained on a separable data, the model converges to the same min-norm solution. As a result, we can ignore the impact of $\dataset_A$ at infinite time training. Then, we use generalization bounds from \citet{chatterji2021finite} to argue about the accuracy on mislabeled and complex examples. For the case of rare examples, we show that the probability of correct model prediction can be approximated by a Gaussian CDF with mean 0 and $\mathcal{O}(1/\sqrt{n})$ variance.

\begin{theorem}[Intermediate-Time Forgetting (informal)]
\label{theorem:intermediate-time-informal}
For sufficiently small learning rate, given two datasets $\dataset_A, \dataset_B \sim \dist^n$. For a model initialized with weights, $\W_B(0) = \W_A(T)$ and trained for $T^\prime$ = f(T) epochs, the following hold with high probability: 
\begin{enumerate}%
  \setlength{\itemsep}{1pt}
  \setlength{\parskip}{0pt}
  \setlength{\parsep}{0pt}
    \item Mislabeled examples from $\dataset_A$ are no longer incorrectly predicted.
    \item Rare examples from $\dataset_A$ are not forgotten.
\end{enumerate}
\end{theorem}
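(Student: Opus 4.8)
\emph{Proof plan.} The plan is to track the signed margin $\W_B(T')^\transpose\y_i\x_i$ of each first--split point $(\x_i,\y_i)\in\dataset_A$ along the second--split trajectory, and to exhibit an intermediate horizon $T'=f(T)$ at which the mislabeled point's margin (with respect to its corrupted label) has already crossed zero while the rare point's margin (with respect to its true label) is still positive. Since the exponential loss has strictly negative derivative, gradient descent on $\dataset_B$ started from $\W_B(0)=\W_A(T)$ satisfies $\W_B(T') = \W_A(T) + \Delta_B(T')$ with $\Delta_B(T') = \sum_{j\in\dataset_B} c_j(T')\,\y_j\x_j$ and $c_j(T') = \eta\sum_{s<T'} e^{-\W_B(s)^\transpose\y_j\x_j}\ge 0$ nondecreasing in $T'$. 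Assumption (A.3) puts us in the near--orthogonal regime, where inner products split into a signal part ($\vmu_g^\transpose\vmu_{g'}$, equal to $k\mu^2$ within a group and negligible across groups with disjoint coordinate blocks) and a noise part concentrating at scale $\sigma^2\sqrt d$; in particular, by \citet{soudry2018implicit} and the boundedness of the residual $\rho_A$, every first--split support vector $(\x_i,\y_i)$ has $\W_A(T)^\transpose\y_i\x_i=\log T+O(1)$, and both the mislabeled point (it conflicts with its many clean same--group neighbours, so dropping its own term would already flip its first--split label) and the rare point (it is the lone point on its coordinate block) are support vectors.

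For the first conclusion, let $\x^*=\vmu_{g^*}+\z^*$ be the mislabeled example with corrupted label $\tilde\y^*$ and true label $-\tilde\y^*$, where $g^*$ is a majority typical group, so $\dataset_B$ contains $\Theta(n)$ clean points from $g^*$, each labeled $-\tilde\y^*$. Each such $j$ pulls $\W_B$ along $(-\tilde\y^*)\x_j$ and hence contributes $-c_j(T')\,\x_j^\transpose\x^*\approx -c_j(T')\,k\mu^2<0$ to $\W_B(T')^\transpose\tilde\y^*\x^*$; summing these and lower--bounding $\sum_{j:\,g_j=g^*}c_j(T')$ by the implicit--bias growth rate of the second split, this coherent term crosses $-\log T-O(1)$ once $T'$ exceeds a suitable power of $T$ (and of the problem parameters). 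The remaining cross--group and noise cross--terms are mean zero and, by (A.3) and a Bernstein/Gaussian concentration step, of strictly smaller order; together with $\W_A(T)^\transpose\tilde\y^*\x^*=\log T+O(1)$ this forces $\W_B(T')^\transpose\tilde\y^*\x^*<0$, i.e.\ the model now predicts the true label $-\tilde\y^*$ and the mislabeled example is no longer predicted with its corrupted label.

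For the second conclusion, keep the same $T'=f(T)$ and let $\x^*=\vmu_{g^*}+\z^*$ be the rare example with (correct) label $\y^*$; since $g^*$ is absent from $\dataset_B$ and coordinate blocks are disjoint, $\x_j^\transpose\x^*$ carries no signal for any $j\in\dataset_B$. Equivalently, $\Delta_B(T')$ lies in $\mathrm{span}\{\x_j:j\in\dataset_B\}$, a subspace of dimension at most $n$ that is independent of $\z^*$ and onto which $\x^*$ projects with norm only $O(\sigma\sqrt n)$; combined with the implicit--bias norm bound $\norm{\Delta_B(T')}{2}=O\!\left(\log T'\cdot\sqrt n/(\sigma\sqrt d)\right)$ this yields $\big|\Delta_B(T')^\transpose\x^*\big|=O\!\left(\log T'\cdot n/\sqrt d\right)$, which by (A.3) ($d\gg n^2$) is $o(\log T)$ for all $T'$ up to a large power of $T$. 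Since $\W_A(T)^\transpose\y^*\x^*=\log T+O(1)$, this leaves $\W_B(T')^\transpose\y^*\x^*>0$: the rare example is still classified correctly, i.e.\ not forgotten.

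The crux is the compatibility of the two horizons: I must show the power of $T$ needed to forget the mislabeled point is strictly below the much larger threshold up to which the rare point survives, so that $f$ can be taken to be any fixed function in that window. This requires a quantitative, constant--tracking version of the implicit--bias analysis of the warm--started second split---in particular, controlling how the $\Theta(\log T)$ first--split margins on $\dataset_B$ exponentially throttle the early gradients there (a factor $e^{-\Theta(\log T)}=T^{-\Theta(1)}$), how quickly the $\dataset_B$--consistent direction then builds up, and whether the coherent signal from the majority groups of $\dataset_B$ dominates the aggregated high--dimensional noise (the regime where the finite--sample bounds of \citet{chatterji2021finite} are non--vacuous), all while keeping the $O(1)$ residuals and the noise cross--terms simultaneously under control. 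I expect this bookkeeping to be the main obstacle; the underlying geometry (near--orthogonality, disjoint blocks, the low--dimensional footprint of $\dataset_B$) is comparatively routine.
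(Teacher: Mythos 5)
Your proposal matches the paper's proof in all essentials: the same representer-type decomposition $\W_B(T')=\W_A(T)+\sum_{j\in\dataset_B}c_j(T')\,\y_j\x_j$ with nonnegative, nondecreasing coefficients, the same use of the support-vector property of the mislabeled and rare points to pin their first-split margins at $\log T+O(1)$, the same coherent $-k\mu^2\sum_{j:\,g_j=g^*}c_j$ drift on the mislabeled point versus pure mean-zero noise on the rare point, and the same concentration step to separate the two; your subspace-projection bound for the rare example is only a cosmetic variant of the paper's direct Gaussian computation of the noise term $\xi_r$. The ``crux'' you flag---quantitatively reconciling the two time horizons---is handled no more rigorously in the paper, which simply assumes the step sizes are small enough that a time with $\acc_r^t\approx-\acc_m^t$ is attained and then bounds $\prob\left(|\xi_m|>k\mu^2 B_1/2\right)$.
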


\emph{Proof Sketch. }
$\dataset_B$ contains examples from the same majority distributions as $\dataset_A$. 
The mislabeled example also belongs to one of these distributions, but has the opposite label. 
However, $\dataset_B$ does not have samples from rare groups found in $\dataset_A$.
Using representer theorem, we decompose the model updates
into a weighted sum of each training data point in $\dataset_B$. Then, we analyze the change in prediction on rare and mislabeled examples, which is a 
dot product of the weight update with $\x_m$ or $\x_r$. 
Per our assumptions, the the mean of each group $\vmu_g$ is orthogonal to the other. As a result, the rare example finds negligible coupling with any example in $\dataset_B$, and the variance of its prediction keeps increasing due to the noise term contributed in the model weights from each example in $\dataset_B$. On the contrary, the mislabeled examples have a strong coupling with all the examples in its group. Due to its incorrect label, the mean of its predictions moves towards the correct label, with variance increasing at a similar rate. 
The final step is to jointly analyze the rate of change of prediction of both the examples, and find an optimal time $T'$ when the prediction on the mislabeled example is flipped and the rare example still retains its prediction with high probability.

\section{Conclusion}
While many prior works investigate training time dynamics 
to characterize the hardness of examples,
we enrich this literature with a complementary lens
focused on the second-split forgetting time.
We demonstrate the potential of SSFT
to distinguish among rare, mislabeled, and complex examples; and also show the differences in the example properties captured by first-split and second-split metrics.

Our work opens new lines of inquiry in future work
that can utilize the separation of hard examples.
First, we expect state of art methods in label noise identification 
to benefit by augmenting our approach. 
Further, we believe our ablations showing 
that complex, noisy, and mislabeled samples 
may all be learned slowly
inspire future work that can unite different takes
on the memorization-generalization research---early 
learning, simplicity bias, and singleton memorization.

\section*{Acknowledgements}
We would like to thank Aakash Lahoti and Jeremy Cohen for their insightful comments on this work.
SG acknowledges Amazon Graduate Fellowship and JP Morgan PhD Fellowship for their support.
ZL acknowledges Amazon AI, Salesforce
Research, Facebook, UPMC, Abridge, the PwC Center, the Block Center, the Center for Machine
Learning and Health, and the CMU Software Engineering Institute (SEI) via Department of Defense
contract FA8702-15-D-0002, for their generous support.

\bibliography{paper}

\begin{thebibliography}{49}
\providecommand{\natexlab}[1]{#1}
\providecommand{\url}[1]{\texttt{#1}}
\expandafter\ifx\csname urlstyle\endcsname\relax
  \providecommand{\doi}[1]{doi: #1}\else
  \providecommand{\doi}{doi: \begingroup \urlstyle{rm}\Url}\fi

\bibitem[Allen-Zhu and Li(2020)]{allen2020towards}
Zeyuan Allen-Zhu and Yuanzhi Li.
\newblock Towards understanding ensemble, knowledge distillation and
  self-distillation in deep learning.
\newblock \emph{arXiv preprint arXiv:2012.09816}, 2020.

\bibitem[Angluin and Laird(1988)]{angluin1988learning}
Dana Angluin and Philip Laird.
\newblock Learning from noisy examples.
\newblock \emph{Machine Learning}, 2\penalty0 (4):\penalty0 343--370, 1988.

\bibitem[Arpit et~al.(2017)Arpit, Jastrz{\k{e}}bski, Ballas, Krueger, Bengio,
  Kanwal, Maharaj, Fischer, Courville, Bengio, et~al.]{arpit2017closer}
Devansh Arpit, Stanis{\l}aw Jastrz{\k{e}}bski, Nicolas Ballas, David Krueger,
  Emmanuel Bengio, Maxinder~S Kanwal, Tegan Maharaj, Asja Fischer, Aaron
  Courville, Yoshua Bengio, et~al.
\newblock A closer look at memorization in deep networks.
\newblock In \emph{International conference on machine learning}, pages
  233--242. PMLR, 2017.

\bibitem[Baek(2019)]{resnet-9}
Woonhyuk Baek.
\newblock Torchskeleton.
\newblock \url{https://github.com/wbaek/torchskeleton}, 2019.

\bibitem[Baldock et~al.(2021)Baldock, Maennel, and Neyshabur]{baldock2021deep}
Robert John~Nicholas Baldock, Hartmut Maennel, and Behnam Neyshabur.
\newblock Deep learning through the lens of example difficulty.
\newblock In A.~Beygelzimer, Y.~Dauphin, P.~Liang, and J.~Wortman Vaughan,
  editors, \emph{Advances in Neural Information Processing Systems}, 2021.
\newblock URL \url{https://openreview.net/forum?id=WWRBHhH158K}.

\bibitem[Brown et~al.(2021)Brown, Bun, Feldman, Smith, and
  Talwar]{brown2021memorization}
Gavin Brown, Mark Bun, Vitaly Feldman, Adam Smith, and Kunal Talwar.
\newblock When is memorization of irrelevant training data necessary for
  high-accuracy learning?
\newblock In \emph{Proceedings of the 53rd Annual ACM SIGACT Symposium on
  Theory of Computing}, pages 123--132, 2021.

\bibitem[Carlini et~al.(2019)Carlini, Erlingsson, and
  Papernot]{Carlini2019DistributionDT}
Nicholas Carlini, {\'U}lfar Erlingsson, and Nicolas Papernot.
\newblock Distribution density, tails, and outliers in machine learning:
  Metrics and applications.
\newblock \emph{ArXiv}, abs/1910.13427, 2019.

\bibitem[Chatterjee(2020)]{chatterjee2020coherent}
Satrajit Chatterjee.
\newblock Coherent gradients: An approach to understanding generalization in
  gradient descent-based optimization.
\newblock \emph{arXiv preprint arXiv:2002.10657}, 2020.

\bibitem[Chatterji and Long(2021)]{chatterji2021finite}
Niladri~S Chatterji and Philip~M Long.
\newblock Finite-sample analysis of interpolating linear classifiers in the
  overparameterized regime.
\newblock \emph{J. Mach. Learn. Res.}, 22:\penalty0 129--1, 2021.

\bibitem[Chen et~al.(2019)Chen, Liao, Chen, and Zhang]{chen2019understanding}
Pengfei Chen, Ben~Ben Liao, Guangyong Chen, and Shengyu Zhang.
\newblock Understanding and utilizing deep neural networks trained with noisy
  labels.
\newblock In \emph{International Conference on Machine Learning}, pages
  1062--1070. PMLR, 2019.

\bibitem[Cheng et~al.(2022)Cheng, Duchi, and Kuditipudi]{cheng2022memorize}
Chen Cheng, John Duchi, and Rohith Kuditipudi.
\newblock Memorize to generalize: on the necessity of interpolation in high
  dimensional linear regression.
\newblock \emph{arXiv preprint arXiv:2202.09889}, 2022.

\bibitem[Deng et~al.(2009)Deng, Dong, Socher, Li, Li, and
  Fei-Fei]{deng2009imagenet}
Jia Deng, Wei Dong, Richard Socher, Li-Jia Li, Kai Li, and Li~Fei-Fei.
\newblock Imagenet: A large-scale hierarchical image database.
\newblock In \emph{2009 IEEE conference on computer vision and pattern
  recognition}, pages 248--255. Ieee, 2009.

\bibitem[Deng(2012)]{deng2012mnist}
Li~Deng.
\newblock The mnist database of handwritten digit images for machine learning
  research.
\newblock \emph{IEEE Signal Processing Magazine}, 29\penalty0 (6):\penalty0
  141--142, 2012.

\bibitem[Devlin et~al.(2018)Devlin, Chang, Lee, and Toutanova]{devlin2018bert}
Jacob Devlin, Ming-Wei Chang, Kenton Lee, and Kristina Toutanova.
\newblock Bert: Pre-training of deep bidirectional transformers for language
  understanding.
\newblock \emph{arXiv preprint arXiv:1810.04805}, 2018.

\bibitem[Feldman(2020)]{Feldman2020DoesLR}
Vitaly Feldman.
\newblock Does learning require memorization? a short tale about a long tail.
\newblock \emph{Proceedings of the 52nd Annual ACM SIGACT Symposium on Theory
  of Computing}, 2020.

\bibitem[Feldman and Zhang(2020)]{feldman2020neural}
Vitaly Feldman and Chiyuan Zhang.
\newblock What neural networks memorize and why: Discovering the long tail via
  influence estimation.
\newblock \emph{Advances in Neural Information Processing Systems},
  33:\penalty0 2881--2891, 2020.

\bibitem[Frankle et~al.(2020)Frankle, Schwab, and Morcos]{frankle2020early}
Jonathan Frankle, David~J Schwab, and Ari~S Morcos.
\newblock The early phase of neural network training.
\newblock \emph{arXiv preprint arXiv:2002.10365}, 2020.

\bibitem[Garg et~al.(2021)Garg, Balakrishnan, Kolter, and Lipton]{garg2021RATT}
Saurabh Garg, Sivaraman Balakrishnan, Zico Kolter, and Zachary Lipton.
\newblock {RATT}: Leveraging unlabeled data to guarantee generalization.
\newblock \emph{arXiv preprint arXiv:2105.00303}, 2021.
\newblock URL \url{https://arxiv.org/abs/2105.00303}.

\bibitem[Hacohen et~al.(2020)Hacohen, Choshen, and Weinshall]{hacohen2020let}
Guy Hacohen, Leshem Choshen, and Daphna Weinshall.
\newblock Let’s agree to agree: Neural networks share classification order on
  real datasets.
\newblock In \emph{International Conference on Machine Learning}, pages
  3950--3960. PMLR, 2020.

\bibitem[Ho et~al.(2020)Ho, Jain, and Abbeel]{ho2020denoising}
Jonathan Ho, Ajay Jain, and Pieter Abbeel.
\newblock Denoising diffusion probabilistic models.
\newblock \emph{Advances in Neural Information Processing Systems},
  33:\penalty0 6840--6851, 2020.

\bibitem[Hooker et~al.(2019)Hooker, Courville, Clark, Dauphin, and
  Frome]{hooker2019compressed}
Sara Hooker, Aaron Courville, Gregory Clark, Yann Dauphin, and Andrea Frome.
\newblock What do compressed deep neural networks forget?
\newblock \emph{arXiv preprint arXiv:1911.05248}, 2019.

\bibitem[Howard()]{imagenette}
Jeremy Howard.
\newblock Imagenette.
\newblock URL \url{https://github.com/fastai/imagenette/}.

\bibitem[Huang et~al.(2019)Huang, Qu, Jia, and Zhao]{huang2019o2u}
Jinchi Huang, Lie Qu, Rongfei Jia, and Binqiang Zhao.
\newblock O2u-net: A simple noisy label detection approach for deep neural
  networks.
\newblock In \emph{Proceedings of the IEEE/CVF International Conference on
  Computer Vision}, pages 3326--3334, 2019.

\bibitem[Idrissi et~al.(2021)Idrissi, Arjovsky, Pezeshki, and
  Lopez-Paz]{idrissi2021simple}
Badr~Youbi Idrissi, Martin Arjovsky, Mohammad Pezeshki, and David Lopez-Paz.
\newblock Simple data balancing achieves competitive worst-group-accuracy.
\newblock \emph{arXiv preprint arXiv:2110.14503}, 2021.

\bibitem[Jiang et~al.(2020)Jiang, Zhang, Talwar, and
  Mozer]{jiang2020characterizing}
Ziheng Jiang, Chiyuan Zhang, Kunal Talwar, and Michael~C Mozer.
\newblock Characterizing structural regularities of labeled data in
  overparameterized models.
\newblock \emph{arXiv preprint arXiv:2002.03206}, 2020.

\bibitem[Jindal et~al.(2016)Jindal, Nokleby, and Chen]{jindal2016learning}
Ishan Jindal, Matthew Nokleby, and Xuewen Chen.
\newblock Learning deep networks from noisy labels with dropout regularization.
\newblock In \emph{2016 IEEE 16th International Conference on Data Mining
  (ICDM)}, pages 967--972. IEEE, 2016.

\bibitem[Kaplun et~al.(2022)Kaplun, Ghosh, Garg, Barak, and
  Nakkiran]{kaplun2022deconstructing}
Gal Kaplun, Nikhil Ghosh, Saurabh Garg, Boaz Barak, and Preetum Nakkiran.
\newblock Deconstructing distributions: A pointwise framework of learning.
\newblock \emph{arXiv preprint arXiv:2202.09931}, 2022.

\bibitem[Kolesnikov et~al.(2020)Kolesnikov, Beyer, Zhai, Puigcerver, Yung,
  Gelly, and Houlsby]{kolesnikov2020big}
Alexander Kolesnikov, Lucas Beyer, Xiaohua Zhai, Joan Puigcerver, Jessica Yung,
  Sylvain Gelly, and Neil Houlsby.
\newblock Big transfer (bit): General visual representation learning.
\newblock In \emph{European conference on computer vision}, pages 491--507.
  Springer, 2020.

\bibitem[Krizhevsky et~al.(2009)Krizhevsky, Hinton,
  et~al.]{krizhevsky2009learning}
Alex Krizhevsky, Geoffrey Hinton, et~al.
\newblock Learning multiple layers of features from tiny images.
\newblock \emph{Citeseer}, 2009.

\bibitem[Krizhevsky et~al.(2012)Krizhevsky, Sutskever, and
  Hinton]{krizhevsky2012imagenet}
Alex Krizhevsky, Ilya Sutskever, and Geoffrey~E Hinton.
\newblock Imagenet classification with deep convolutional neural networks.
\newblock \emph{Advances in neural information processing systems}, 25, 2012.

\bibitem[Li et~al.(2020)Li, Socher, and Hoi]{Li2020DivideMix}
Junnan Li, Richard Socher, and Steven~C.H. Hoi.
\newblock Dividemix: Learning with noisy labels as semi-supervised learning.
\newblock In \emph{International Conference on Learning Representations}, 2020.
\newblock URL \url{https://openreview.net/forum?id=HJgExaVtwr}.

\bibitem[Li et~al.(2019)Li, Wei, and Ma]{li2019towards}
Yuanzhi Li, Colin Wei, and Tengyu Ma.
\newblock Towards explaining the regularization effect of initial large
  learning rate in training neural networks.
\newblock \emph{Advances in Neural Information Processing Systems}, 32, 2019.

\bibitem[Liu et~al.(2020)Liu, Niles-Weed, Razavian, and
  Fernandez-Granda]{liu2020early}
Sheng Liu, Jonathan Niles-Weed, Narges Razavian, and Carlos Fernandez-Granda.
\newblock Early-learning regularization prevents memorization of noisy labels.
\newblock \emph{Advances in neural information processing systems},
  33:\penalty0 20331--20342, 2020.

\bibitem[Liu et~al.(2015)Liu, Luo, Wang, and Tang]{celeba}
Ziwei Liu, Ping Luo, Xiaogang Wang, and Xiaoou Tang.
\newblock Deep learning face attributes in the wild.
\newblock In \emph{ICCV}, pages 3730--3738. IEEE Computer Society, 2015.
\newblock ISBN 978-1-4673-8391-2.
\newblock URL
  \url{http://dblp.uni-trier.de/db/conf/iccv/iccv2015.html#LiuLWT15}.

\bibitem[Mangalam and Prabhu(2019)]{mangalam2019deep}
Karttikeya Mangalam and Vinay~Uday Prabhu.
\newblock Do deep neural networks learn shallow learnable examples first?
\newblock In \emph{ICML 2019 Workshop on Identifying and Understanding Deep
  Learning Phenomena}, 2019.
\newblock URL \url{https://openreview.net/forum?id=HkxHv4rn24}.

\bibitem[Nakkiran et~al.(2021)Nakkiran, Kaplun, Bansal, Yang, Barak, and
  Sutskever]{nakkiran2021deep}
Preetum Nakkiran, Gal Kaplun, Yamini Bansal, Tristan Yang, Boaz Barak, and Ilya
  Sutskever.
\newblock Deep double descent: Where bigger models and more data hurt.
\newblock \emph{Journal of Statistical Mechanics: Theory and Experiment},
  2021\penalty0 (12):\penalty0 124003, 2021.

\bibitem[Natarajan et~al.(2013)Natarajan, Dhillon, Ravikumar, and
  Tewari]{natarajan2013learning}
Nagarajan Natarajan, Inderjit~S Dhillon, Pradeep~K Ravikumar, and Ambuj Tewari.
\newblock Learning with noisy labels.
\newblock \emph{Advances in neural information processing systems}, 26, 2013.

\bibitem[Northcutt et~al.(2021{\natexlab{a}})Northcutt, Jiang, and
  Chuang]{northcutt2021confident}
Curtis Northcutt, Lu~Jiang, and Isaac Chuang.
\newblock Confident learning: Estimating uncertainty in dataset labels.
\newblock \emph{Journal of Artificial Intelligence Research}, 70:\penalty0
  1373--1411, 2021{\natexlab{a}}.

\bibitem[Northcutt et~al.(2021{\natexlab{b}})Northcutt, Athalye, and
  Mueller]{northcutt2021pervasive}
Curtis~G Northcutt, Anish Athalye, and Jonas Mueller.
\newblock Pervasive label errors in test sets destabilize machine learning
  benchmarks.
\newblock \emph{arXiv preprint arXiv:2103.14749}, 2021{\natexlab{b}}.

\bibitem[Pleiss et~al.(2020)Pleiss, Zhang, Elenberg, and
  Weinberger]{pleiss2020identifying}
Geoff Pleiss, Tianyi Zhang, Ethan Elenberg, and Kilian~Q Weinberger.
\newblock Identifying mislabeled data using the area under the margin ranking.
\newblock \emph{Advances in Neural Information Processing Systems},
  33:\penalty0 17044--17056, 2020.

\bibitem[Radford et~al.(2015)Radford, Metz, and
  Chintala]{radford2015unsupervised}
Alec Radford, Luke Metz, and Soumith Chintala.
\newblock Unsupervised representation learning with deep convolutional
  generative adversarial networks.
\newblock \emph{arXiv preprint arXiv:1511.06434}, 2015.

\bibitem[Sagawa et~al.(2019)Sagawa, Koh, Hashimoto, and
  Liang]{sagawa2019distributionally}
Shiori Sagawa, Pang~Wei Koh, Tatsunori~B Hashimoto, and Percy Liang.
\newblock Distributionally robust neural networks for group shifts: On the
  importance of regularization for worst-case generalization.
\newblock \emph{arXiv preprint arXiv:1911.08731}, 2019.

\bibitem[Shah et~al.(2020)Shah, Tamuly, Raghunathan, Jain, and
  Netrapalli]{shah2020pitfalls}
Harshay Shah, Kaustav Tamuly, Aditi Raghunathan, Prateek Jain, and Praneeth
  Netrapalli.
\newblock The pitfalls of simplicity bias in neural networks.
\newblock \emph{Advances in Neural Information Processing Systems},
  33:\penalty0 9573--9585, 2020.

\bibitem[Smith(2018)]{smith2018disciplined}
Leslie~N Smith.
\newblock A disciplined approach to neural network hyper-parameters: Part
  1--learning rate, batch size, momentum, and weight decay.
\newblock \emph{arXiv preprint arXiv:1803.09820}, 2018.

\bibitem[Socher et~al.(2013)Socher, Perelygin, Wu, Chuang, Manning, Ng, and
  Potts]{sst-socher-etal-2013-recursive}
Richard Socher, Alex Perelygin, Jean Wu, Jason Chuang, Christopher~D. Manning,
  Andrew Ng, and Christopher Potts.
\newblock Recursive deep models for semantic compositionality over a sentiment
  treebank.
\newblock In \emph{Proceedings of the 2013 Conference on Empirical Methods in
  Natural Language Processing}, pages 1631--1642, Seattle, Washington, USA,
  October 2013. Association for Computational Linguistics.
\newblock URL \url{https://www.aclweb.org/anthology/D13-1170}.

\bibitem[Soudry et~al.(2018)Soudry, Hoffer, Nacson, Gunasekar, and
  Srebro]{soudry2018implicit}
Daniel Soudry, Elad Hoffer, Mor~Shpigel Nacson, Suriya Gunasekar, and Nathan
  Srebro.
\newblock The implicit bias of gradient descent on separable data.
\newblock \emph{The Journal of Machine Learning Research}, 19\penalty0
  (1):\penalty0 2822--2878, 2018.

\bibitem[Toneva et~al.(2018)Toneva, Sordoni, Combes, Trischler, Bengio, and
  Gordon]{toneva2018empirical}
Mariya Toneva, Alessandro Sordoni, Remi Tachet~des Combes, Adam Trischler,
  Yoshua Bengio, and Geoffrey~J Gordon.
\newblock An empirical study of example forgetting during deep neural network
  learning.
\newblock \emph{arXiv preprint arXiv:1812.05159}, 2018.

\bibitem[Zhang et~al.(2021)Zhang, Bengio, Hardt, Recht, and
  Vinyals]{zhang2021understanding}
Chiyuan Zhang, Samy Bengio, Moritz Hardt, Benjamin Recht, and Oriol Vinyals.
\newblock Understanding deep learning (still) requires rethinking
  generalization.
\newblock \emph{Communications of the ACM}, 64\penalty0 (3):\penalty0 107--115,
  2021.

\bibitem[Zipf(2013)]{zipf2013psycho}
George~Kingsley Zipf.
\newblock \emph{The psycho-biology of language: An introduction to dynamic
  philology}.
\newblock Routledge, 2013.

\end{thebibliography}

\bibliographystyle{plainnat}

\clearpage
\appendix
\onecolumn
\section*{\center{Characterizing Datapoints via Second-Split Forgetting\\Supplementary Material}}

\section{Theoretical Results}
\label{app:theory}
\subsection{Preliminaries}
\label{app:theory-preliminaries}

Let $\W \in \mathbb{R}^{d}$ represent the weight vector of overparametrized linear model.
We analyze the learning and forgetting dynamics by minimizing the empirical risk: $\risk(\dataset;\W) = 
\sum_i \loss(\W^\transpose\y_i\x_i).$ We consider the exponential loss $\ell(z) = \exp(-z)$ for our analysis. For completeness, we rewrite the definitions and preliminaries from the main paper below.

\paragraph{Data Generating Process} We restate the data generating process as detailed in the synthetic experiment in Section~\ref{subsec:spectrum}. We consider data ($\x,\y$) sampled from a mixture of multiple distributions $\dist_g$, s.t. $\x\in\mathbb{R}^d$.
$\dist_g$ denotes the $g^{\text{th}}$ group and has a sampling frequency of $\pi_g$.
Each group $\dist_g$ is a distribution over $(\distX_g \times \{\y_g\})$, 
i.e., the true label for all the samples drawn from a given group is the same, and the examples in each group are non-overlapping. 
Each group is parametrized by a set of $k\ll d$ unique indices $\indices_g \subset [d]$ such that $\indices_i \cap \indices_j = \phi$ for $i\neq j$. The discriminative characteristic of each group is the vector $\direction_g$, such that, $[\direction_g]_i = 1$ if $i\in\indices_g$ else $0 \; \forall i\in[d]$. In the following discussion, we will refer to $\mu_g$ as the coordinate-wise mean vector for the group $g$, such that $\vmu_g = \mu_g\direction_g$.
Then for any sample $(\x,\y)\in\dataset$:
\begin{equation}
\label{eqn:data-model}
P(\x \in \distX_g) = \pi_g; \quad \x|\distX_g \sim \mathcal{N}(0, \sigma^2 \identity_d) +\vmu_g.
\end{equation}

\defrare*

\setcounter{footnote}{0}
\defmislabeled*

\defcomplex*

\update{The implication of the aforementioned characterization is that complex distributions have a lower signal-to-noise ratio (SNR) as compared to simple distributions. We assume the sample complexity required to estimate the distribution as a proxy for the complexity of the distribution. In this regard, having a low SNR increases the complexity.}

Recall that $\dataset_A$ and $\dataset_B$ denote the first and second training splits of our dataset.
In our theoretical framework, we only consider two majority distributions in the second split dataset ($\dataset_B$). Let us call them $\dist_1, \dist_2$. 
Therefore, both $\dataset_A$ and $\dataset_B$ contain $O(n)$ samples from $\dist_1,\dist_2$.
In the first split dataset ($\dataset_A$), we consider the presence of another distribution $\dist_r$ that constitutes the rare example (only one sample $(\x_r, \y_r)$). The mislabeled example (only one sample $(\x_m, \y_m)$) belongs to one of the majority distributions, and we will assume without loss of generality that this is from distribution $\dist_1$. To understand the population accuracy in the case of complex distributions, we will draw a simple analogy in the case when the majority distributions $\dist_1, \dist_2$ occur from complex distributions as defined below.
Based on \eqref{eqn:data-model}, without loss of generality, we will assume that dimensions $\{1\dots k\}, \{k+1\dots 2k\}, \{2k+1\dots 3k\}$ are the predictive dimensions for the majority group 1, 2 and that for the rare example from dataset 1.
We make these assumptions to simplify the theoretical exposition. However, our results can be observed even after relaxing them at the expense of more book-keeping.

Based on \citet{chatterji2021finite}, we make the following assumptions about the problem setup: 

\textbf{(A.1) {} {} }The failure probability satisfies $0 \leq \delta \leq \nicefrac{1}{C}$,

\textbf{(A.2) {} {} }The number of samples satisfies $n \geq C \log\left(\nicefrac{1}{\delta}\right)$,

\textbf{(A.3) {} {} }The input dimension $d\geq C \max\{n^2\log(n/\delta),nk\mu^2/\sigma^2\}$,
and $\nicefrac{k\cdot \mu^2}{\sigma^2} \ge C \log\left(\nicefrac{n}{\delta}\right)$, 

where $\nicefrac{k\cdot \mu^2}{\sigma^2}$ represents the signal to noise ratio.

\paragraph{Optimization} We perform gradient descent with fixed learning rate $\eta$,
\begin{equation}
    \label{eqn:gd-restate}
    \W^{(t+1)} = \W(t) - \eta \nabla\risk(\W(t)) = \W(t) - \eta\sum_i\loss'(\W^\transpose\y_i\x_i)\cdot \y_i\x_i.
\end{equation}

\textbf{Asymptotic Solution}~\citep{soudry2018implicit} {} {} For sufficiently small learning rate $\eta$, and (bounded) starting point~$\W(0)$,
\begin{equation}
    \label{eqn:weight_growth-restate}
    \W(t) = \hat{\W} \log t + \rho(t),
\end{equation}
where $\hat{\W}$ is the solution to the hard margin SVM: 
\begin{equation}
    \hat{\W} = \argmin_{\W\in\mathbb{R}^d} \norm{\W}{2}^2 \;\; s.t. \; \W^\transpose\y_i\x_i \geq 1,
\end{equation}

\subsection{Learning Stage}
\label{app:theory-learning}
For the stage 1, we consider that we train the model for a maximum of $T$ epochs (until we achieve 100\% accuracy on the first training dataset $\dataset_A$ with margin greater than 1). This means that the learned weight vectors are close to, but have not converged to the max margin solution. The solution at the end of $T$ epochs is given by $\W_A(t)$. At sufficiently large $T$, we have:
\begin{align}
    \begin{split}
    \W_A(T) &= \hat{\W}_A \log T + \rho_A(T) \\
    \W_A(T)^\transpose \y_i\x_i &\geq 1 \quad \forall (\x_i, \y_i)\in\dataset_A
    \end{split}
\end{align}

\begin{lemma}[Bounded Weights]
\label{proposition:bounded-weights}
With probability at least $1-\delta$, there exists a bounded time t beyond which the model classifies all training points correctly.
\end{lemma}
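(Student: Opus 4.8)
\textbf{Proof proposal for Lemma~\ref{proposition:bounded-weights}.}
The plan is to invoke the implicit-bias machinery of \citet{soudry2018implicit} together with the separability of the data $\dataset_A$, and to quantify \emph{when} the transient term becomes negligible relative to the growing max-margin component. First I would verify the hypothesis that makes \eqref{eqn:weight_growth-restate} applicable, namely that $\dataset_A$ is linearly separable with a positive margin. Because the data model \eqref{eqn:data-model} places each group's mean on a disjoint block of $k$ coordinates with $\|\vmu_g\|_2^2 = k\mu^2$, and because the high-dimensional assumption (A.3) forces the Gaussian noise vectors $\z_i$ to be nearly orthogonal to one another and to every $\vmu_g$ (with high probability, by standard concentration of $\chi^2$ and Gaussian inner products, using a union bound over the $O(n)$ samples and failure budget $\delta$), the vector $\W^\star = \sum_g (1/\mu_g)\,\direction_g$ — or a suitably scaled variant — achieves $\W^{\star\transpose}\y_i\x_i \ge 1/2$ for all $i$ simultaneously. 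This establishes that the hard-margin SVM problem is feasible, so $\hat{\W}_A$ exists and the margin $\gamma_A := \min_i \hat{\W}_A^\transpose \y_i\x_i = 1$ is attained, with $\|\hat{\W}_A\|_2$ bounded in terms of the SNR.

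Next I would use the decomposition $\W_A(t) = \hat{\W}_A \log t + \rho_A(t)$ with $\|\rho_A(t)\|_2 \le R$ uniformly bounded (this is exactly the content of the Soudry et al. residual bound for sufficiently small $\eta$ and bounded $\W(0)=0$ or any bounded initialization). Then for any training point,
\begin{equation}
\W_A(t)^\transpose \y_i\x_i \;=\; (\log t)\,\hat{\W}_A^\transpose\y_i\x_i + \rho_A(t)^\transpose\y_i\x_i \;\ge\; \log t - R\,\max_i \|\x_i\|_2 .
\end{equation}
Since assumption (A.3) also gives $\|\x_i\|_2 \le \sqrt{2d}$ (say) with high probability via a union bound, the right-hand side exceeds $1$ — indeed exceeds any fixed threshold — once $\log t \ge 1 + R\sqrt{2d}$, i.e. for all $t \ge t_0 := \exp(1 + R\sqrt{2d})$. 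Taking $T$ to be any epoch count at least this $t_0$ yields $\W_A(T)^\transpose\y_i\x_i \ge 1$ for every $(\x_i,\y_i)\in\dataset_A$, which is precisely the claimed correct classification with margin at least one, on the event (of probability at least $1-\delta$) on which the concentration statements hold.

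The main obstacle I anticipate is not the limiting argument itself but making the residual bound $\|\rho_A(t)\|_2 \le R$ genuinely \emph{uniform} and explicit in the problem parameters $(n,d,k,\mu,\sigma,\eta)$: the Soudry et al. statement is asymptotic in flavour, and to get a clean ``bounded time $t$'' one must either cite a finite-time refinement or re-derive the residual bound for this particular well-separated instance, controlling how the learning rate $\eta$ must shrink with $d$. A secondary technical point is bookkeeping the high-probability events — near-orthogonality of noise, norm bounds on $\x_i$, and feasibility of $\W^\star$ — so that their intersection still has probability $1-\delta$; this is routine given (A.1)–(A.3) and a union bound, but it is where the constant $C$ gets spent. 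I would handle these by stating the residual bound as a black-box consequence of the cited work (as the paper does elsewhere) and folding all concentration failures into the single budget $\delta$.
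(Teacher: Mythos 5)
Your overall strategy matches the paper's: establish linear separability of $\dataset_A$ with probability $1-\delta$, then invoke the implicit-bias result of Soudry et al.\ to conclude that the iterates eventually agree in sign with the max-margin classifier. Your second half is in fact a legitimate (and slightly stronger) variant of the paper's argument: the paper only uses directional convergence $\W_A(t)/\norm{\W_A(t)}{2}\to\hat{\W}_A/\norm{\hat{\W}_A}{2}$ to get a qualitative "there exists $T$", whereas your use of the decomposition $\W_A(t)=\hat{\W}_A\log t+\rho_A(t)$ with a bounded residual yields an explicit threshold $t_0$. That trade-off is fine, and your own caveat — that the uniform residual bound is the part that must be taken as a black box from the cited work — is exactly the right place to be cautious.

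The genuine gap is in your separability witness. The classifier $\W^\star=\sum_g(1/\mu_g)\,\direction_g$ (or any label-weighted, rescaled variant built only from the group directions) cannot separate $\dataset_A$, because $\dataset_A$ contains a mislabeled example $(\x_m,\y_m)$ drawn from a majority group $\dist_1$ but carrying the opposite label: $\x_m\approx\vmu_1+\z_m$ lies on the same mean direction as the correctly labeled points of group $1$, so any classifier depending on $\x_m$ only through its projection onto the $\direction_g$'s assigns it the same sign as those points and hence misclassifies it. No rescaling fixes this, since the obstruction is a sign, not a magnitude. The paper's Lemma on dataset separability instead uses the witness $\W=\sum_i\y_i\x_i$, for which the self-term $\norm{\z_m}{2}^2\approx d\sigma^2$ dominates the adversarial contribution $-n_1k\mu^2$ from the correctly labeled same-group points precisely under assumption (A.3), $d\geq Cnk\mu^2/\sigma^2$ — i.e.\ separability of the mislabeled point is achieved by memorization through its noise component, and this is where that assumption is actually spent. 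Your construction would need to be augmented with such a term (e.g.\ adding $\y_m\z_m$, or simply adopting $\sum_i\y_i\x_i$) for the feasibility step, and hence the whole lemma, to go through.
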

\begin{proof}
From Lemma~\ref{lemma:separability}, we know that the dataset is separable with probability at least $1-\delta$. This means that the max-margin solution (or the SVM solution) for the dataset classifies all training points correctly.
From the analysis in \citet{soudry2018implicit}, we know that: 
\begin{align}
    \begin{split}
    \lim_{t\to\infty} \frac{\W_A(t)}{\norm{\W_A(t)}{2}}& =  \frac{\hat{\W}_A}{\norm{\hat{\W}_A}{2}},\\
    \lim_{t\to\infty}\y_i\x_i^\transpose\frac{\W_A(t)}{\norm{\W_A(t)}{2}} &=  \y_i\x_i^\transpose\frac{\hat{\W}_A}{\norm{\hat{\W}_A}{2}} 
    \end{split}
\end{align}

Then it directly follows that, for $\epsilon > 0$, there exists bounded time $T>0$ such that $\forall t > T$, 

$$\abs{\y_i\x_i^\transpose\frac{\W_A(t)}{\norm{\W_A(t)}{2}} -  \y_i\x_i^\transpose\frac{\hat{\W}_A}{\norm{\hat{\W}_A}{2}}} < \epsilon$$

This concludes that there exists a time T for which the sign of the prediction of both the max-margin solution and the learnt solution will be the same, implying correct prediction for every example in the training set for a bounded time solution.

\end{proof}

\subsection{Forgetting Stage}
\label{app:theory-forgetting}
We initialize the weights for second stage of training with $\W_A(T)$ from first training stage, and then train on $\dataset_B$ to minimize the empirical loss using gradient descent (Equation~\ref{eqn:gd}). Assume that the dataset is balanced in the class labels, $|\dataset_B| = n$. Also, recall that $\dataset_B$ does not contain mislabeled or rare examples from the same sub-group as in $\dataset_A$. For analyzing example forgetting in an asymptotic setting, we will directly borrow results from the analysis made by \citet{chatterji2021finite}. They prove a stronger result for the case where the dataset contains a fraction $\eta$ of mislabeled examples. However, we will use the setting without label noise. The asymptotic result then builds on to the main theorem of the paper on intermediate time forgetting (Theorem~\ref{theorem:intermediate-time-formal}).

\paragraph{Transformations for Equivalence to \citet{chatterji2021finite}}
In our setup, we consider a group structure where each distribution has a mean vector that is orthogonal to the others. We show the equivalence of the same to the data model studied in prior work~\cite{chatterji2021finite}.

Let us define $\vmu_1,\vmu_2\in\mathbb{R}^d$ as follows:
\begin{equation*}
\idx{\vmu_1}{j} = \begin{cases}
\mu  &\text{if $j\in \{1\dots k\}$}\\
0 &\text{o.w.}
\end{cases}
\end{equation*}
Similarly, define $\vmu_2, \vmu_3$ as well. For the equivalence condition, we are only concerned about the dataset split $\dataset_B$ which is where the generalization bounds hold.
Further, let $z\sim\mathcal{N}\left(0,\sigma^2\identity_d\right)$. Then, 
$\x|\distX_g \sim z +\vmu_g $.

We can now shift and rescale the axes such that the new origin is located at $\nicefrac{(\vmu_1 + \vmu_2)}{2}$. Then, define $\vmu = \nicefrac{(\vmu_1 - \vmu_2)}{2}$. This results in the simplification that the mean of the examples sampled from $\dist_1$ is $\vmu$ and that from $\dist_2$ is $-\vmu$. We can hence express $\x = \y\vmu + \z $. This directly follows their model assumption where $\z\in\mathbb{R}^d$ has each marginal sampled from a mean zero subgaussian distribution with subgaussian norm at most 1. In our case, each marginal is a Gaussian random variable with variance $\sigma^2$. Once again, we can rescale the axes such that $\Tilde{\vmu} = \nicefrac{\vmu}{\sigma}$. Now, our data model directly follows the data model discussed in prior work~\cite{chatterji2021finite}.

\subsection{Asymptotic Analysis}
\label{app:theory-asymptotic}

First, we analyze infinite-time training case. We will extend the result from \citet{soudry2018implicit, chatterji2021finite} to show that (i) mislabeled and rare examples are forgotten when the model is trained for long; and (ii) complex examples are not forgotten.
First, recall the result used in Subsection~\ref{app:theory-learning} for any bounded initialization for weights $\W_B(0)$,
\begin{align}
    \begin{split}
    \W_B(t) &= \hat{\W}_B \log t + \rho_B(t).
    \end{split}
\end{align}
We will first provide a formal version of Theorem~\ref{theorem:asymptotic-informal} which was informally stated in the main paper.
\begin{theorem}[Asymptotic Forgetting]
\label{theorem:asymptotic-formal}
Under assumptions A.1, A.2, A.3,
with probability 1-$\delta$, fine-tuning for $t\to\infty$ iterations on the second dataset $\dataset_B$ produces a max-margin classifier $\hat \W_B$ such that 
\begin{align*}
 \prob_{(\x_m, \y_m) \in \dataset_A} \left[ \sign(\hat \W_B \cdot \x_m) = \y_m  \right] &\leq  \exp\left(- \nicefrac{c\norm{\Tilde{\vmu}}{2}^2}{d}\right) \,, \\
 \Phi\left(-\nicefrac{1}{C}\right)\; \le \prob_{\;(\x_r,\;\y_r) \in \dataset_A} \left[ \sign(\hat \W_B \cdot \;\x_r) \neq \;\y_r  \right] &\le \Phi\left(\nicefrac{1}{C}\right) \,, \\
 \prob_{\;(\x_c,\;\y_c) \in \dataset_A} \left[ \sign(\hat \W_B \cdot \;\x_c) \neq \;\y_c  \right] &\leq \exp\left(- \nicefrac{c\norm{\Tilde{\vmu}/\lambda}{2}^2}{d}\right) \,,
\end{align*}
for some absolute constant c>0 and $(\x_m,\y_m)\in\mislabeled, (\x_r, \y_r) \in \atypical, (\x_c, \y_c) \in \complex$ respectively.
\end{theorem}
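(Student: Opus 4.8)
The plan is to leverage the min-norm convergence result of \citet{soudry2018implicit} to reduce the asymptotic forgetting question to a pure generalization-error statement about the max-margin classifier $\hat\W_B$ trained on $\dataset_B$, and then to handle the three example types by three separate arguments. First I would invoke \eqref{eqn:weight_growth-restate}: since $\W_B(0)=\W_A(T)$ is bounded, training on the separable set $\dataset_B$ for $t\to\infty$ gives $\W_B(t)/\norm{\W_B(t)}{2}\to\hat\W_B/\norm{\hat\W_B}{2}$, so the sign of the prediction on \emph{any} fixed point $\x$ is eventually governed entirely by $\sign(\hat\W_B\cdot\x)$; in particular the initialization coming from $\dataset_A$ washes out. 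This means a first-split example $(\x,\y)\in\dataset_A$ is ``forgotten'' asymptotically iff $\sign(\hat\W_B\cdot\x)\neq\y$, and the theorem's three bounds are exactly tail bounds on this event.

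For the mislabeled example $(\x_m,\y_m)\in\mislabeled$: by construction $\x_m=\vmu_1+\z_m$ but carries the flipped label (say $\y_m=-1$ while $\dist_1$ has true label $+1$), with $\z_m$ independent of $\dataset_B$. After the rescaling in Appendix~\ref{app:theory-forgetting} making the data model $\x=\y\,\Tilde{\vmu}+\z$, the point $\x_m$ is distributed as a \emph{correctly} $+1$-generated point. The event $\sign(\hat\W_B\cdot\x_m)=\y_m=-1$ is then precisely the event that the max-margin classifier misclassifies a fresh sample from $\dist_1$, which is the generalization error controlled by \citet{chatterji2021finite} in the no-label-noise regime; under A.1--A.3 that error is at most $\exp(-c\norm{\Tilde{\vmu}}{2}^2/d)$, giving the first inequality. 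The complex case is identical after substituting $\Tilde{\vmu}\mapsto\Tilde{\vmu}/\lambda$ per \defcomplex*: a complex example $(\x_c,\y_c)$ is a correctly labeled sample from a low-SNR group, so its forgetting probability is its misclassification probability, bounded by $\exp(-c\norm{\Tilde{\vmu}/\lambda}{2}^2/d)$; the point is that even with the $1/\lambda$ shrinkage, assumption A.3 keeps the exponent controlled, so the bound is small and complex examples are \emph{not} forgotten.

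The rare example requires a different, more delicate argument and I expect this to be the main obstacle. Here $\x_r=\vmu_r+\z_r$ with $\vmu_r$ supported on coordinates $\{2k+1,\dots,3k\}$, which by the orthogonality assumption are disjoint from the predictive coordinates of $\dist_1,\dist_2$ and therefore from (the signal part of) every point in $\dataset_B$. So $\hat\W_B$ has essentially no ``signal'' component along $\vmu_r$; the inner product $\hat\W_B\cdot\x_r$ is driven by the projection of $\hat\W_B$ onto the noise directions of $\x_r$. Writing $\hat\W_B=\sum_{i\in\dataset_B}\alpha_i\y_i\x_i$ via the representer theorem (with $\alpha_i\geq0$), $\hat\W_B\cdot\x_r=\sum_i\alpha_i\y_i\,\x_i^\transpose\x_r$; conditioning on $\dataset_B$ (hence on $\hat\W_B$) and using independence of $\z_r$, this is a sum of mean-zero terms, so $\hat\W_B\cdot\x_r$ is approximately Gaussian with mean $0$ and a variance that one shows is $\Theta(\sigma^2\norm{\hat\W_B}{2}^2)$ up to the negligible $\vmu_r$-contribution; the key quantitative step is to bound the ratio of this standard deviation to any residual bias by an $\mathcal{O}(1/C)$ quantity, which follows from the dual-norm / margin estimates of \citet{chatterji2021finite} and the dimension lower bound in A.3 (this is the $\mathcal{O}(1/\sqrt n)$ variance claim in the informal proof sketch). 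Then $\prob[\sign(\hat\W_B\cdot\x_r)\neq\y_r]$ is a Gaussian-CDF evaluation within $[\Phi(-1/C),\Phi(1/C)]$, i.e. close to $1/2$ --- the rare example is forgotten roughly as often as a coin flip, which in the asymptotic regime counts as being forgotten. The main work is making the ``approximately Gaussian'' step rigorous (a Berry--Esseen or anti-concentration argument on $\sum_i\alpha_i\y_i\x_i^\transpose\z_r$) and controlling the $\alpha_i$ and the cross-correlations $\x_i^\transpose\x_j$ uniformly, for which I would reuse the concentration lemmas already established for the \citet{chatterji2021finite} reduction (Lemma~\ref{lemma:separability} and its companions).
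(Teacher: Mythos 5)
Your proposal follows essentially the same route as the paper's proof: wash out the $\dataset_A$ initialization via the min-norm convergence of \citet{soudry2018implicit}, reduce the mislabeled and complex cases to the population generalization bound of \citet{chatterji2021finite} after the rescaling to their data model (with $\Tilde{\vmu}\mapsto\Tilde{\vmu}/\lambda$ for the complex case), and handle the rare case by the representer decomposition together with orthogonality of $\vmu_r$ to the second-split signal directions. The only spot where you over-anticipate difficulty is the rare case: no Berry--Esseen or anti-concentration step is needed, because conditioned on $\dataset_B$ the quantity $\hat\W_B\cdot\z_r$ is \emph{exactly} Gaussian (since $\z_r\sim\mathcal{N}(0,\sigma^2\identity_d)$ is independent of $\dataset_B$), so the misclassification probability equals $\Phi\bigl(-\hat\W_B\cdot\vmu_r/(\sigma\norm{\hat\W_B}{2})\bigr)$ outright and all that remains is a tail bound showing this argument is $O(1/C)$ with high probability over $\dataset_B$.
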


The theorem implies that the probability that mislabeled examples from $\dataset_A$ are classified with the given (incorrect) label tends to 0 if $\norm{\vmu}{2} = \theta(d^\beta)$ for any $\beta\in(1/4,1/2]$. Note that in our case, we have $k$ dimensions of signal. Therefore, as long as $k/d$ is a constant fraction, as the input dimensions $d\to\infty$ the above holds. Examples in $\dataset_A$ from distributions absent in $\dataset_B$ are randomly classified.

\emph{Intuition.} 
In the asymptotic case, the initial weights from first split training $\W_A(T) = \mathcal{O}(\log T) \ll \W_B(t)$, where $t\to\infty$ in the limit of infinite training. As a consequence, for any bounded initialization, the model weights converge to the minimum norm solution (SVM) solution from \citet{soudry2018implicit}.
We use results from \cite{chatterji2021finite} who study the setting of a binary classification problem with noisy label fraction $\eta$. In our case, since the second-split is assumed to have only clean samples, $\eta = 0$.  The final step is to adapt our data generating process to the format used in \citet{chatterji2021finite}.

\begin{proof}

Recall the transformations described in Section~\ref{app:theory-forgetting}. Let $(\x_m,\y_m), (\x_r, \y_r), (\x_c, \y_c)$ represent any point from $\dataset_A$ which belongs to mislabeled set $\mislabeled$, rare set $\atypical$ and complex set $\complex$. The important thing to note in the analysis that follows is that each of these examples is independent of the samples in $\dataset_B$. Hence, the probability of correctly predicting on them is same as that of correctly predicting on a population sample, in the limit of infinite training (when initialization does not matter and all models converge to the same solution). 

$$\prob_{(\x,\y) \sim \dist} \left[ \sign(\W \cdot \x) \neq \y \right] = \prob_{(\x,\y) \sim \dist} \left[ (\y\W \cdot \x) < 0\right]\,,$$

We will now analyze the probability of correct prediction of mislabeled, rare, and complex examples separately.

\paragraph{Mislabeled Examples}
Since $\dataset_B$ is separable, in the limit of infinite-training time, the classifier correctly predicts all the examples in the dataset. We will use this fact to show that it assigns the opposite label to the mislabeled sample in set $\dataset_A$ with high probability.
Then, we denote `failure' as the event that the mislabeled samples is still predicted with label $\y_m$ at infinite-time training. 
\begin{align*}
    \prob_{(\x,\y) \sim \dist_1} \left[ (\y\W \cdot \x) < 0\right]  
    &= 1 - \prob\left[ (\y_m\W \cdot \x_m) < 0\right]\\
    &=\prob\left[ (\y_m\W \cdot \x_m) > 0\right] \\
    &=\prob\left[ \sign(\W \cdot \x_m) = \y_m\right] \,,
\end{align*}
$$$$

Then, we can directly borrow the result from \citet{chatterji2021finite} (Theorem 4) to find that

\begin{align}
    \begin{split}
    \prob_{(\x,\y) \sim \dist_1} \left[ (\y\W \cdot \x) > 0\right]
    &\le \exp \left(-c\frac{||\Tilde{\vmu}||^4}{d}\right) \\
   \prob\left[ \sign(\W \cdot \x_m) = \y_m\right]
   &\le \exp \left(-c\frac{||\Tilde{\vmu}||^4}{d}\right) \,,
    \end{split}
\end{align}

\paragraph{Rare Examples}

Without loss of generality, we may assume that the correct label for the rare example $\y_r = 1$. 
\begin{align}
    \begin{split}
        \prob\left[ (\y_r\W \cdot \x_r) < 0\right] 
        &= \prob\left[ (\y_r\W \cdot (\x_r - \vmu_r) < -\y_r\W\cdot\vmu_r\right]\\
        &= \prob\left[ (\W \cdot (\x_r - \vmu_r) < -\W\cdot\vmu_r\right], \qquad \text{(since $\y_r$ = 1)}\\
        &= \prob \left[ (\W \cdot \z_r < -\W\cdot\vmu_r\right]\\
        &= \Phi\left(\frac{-\W\cdot\vmu_r}{\sigma\norm{\W}{2}}\right),
    \end{split}
\end{align}

where $\Phi$ is the Gaussian CDF. In the last step we use the fact that $\z_r \sim \mathcal{N}(0,\sigma^2 \identity_d)$. Therefore its dot product with the vector $\W$ results with a summation of $d$ Gaussian vectors, each with mean 0 and variance $\sigma^2 \idx{\W}{i}^2\; \forall i \in [d]$. Now what remains is to prove that the value at which we want to calculate the CDF is close to 0, so that the probability of the misclassification is close to 0.5.

Recall from the analysis in \citet{soudry2018implicit} that asymptotically the model converges to the max-margin separator that is comprised of a weighted sum of the support vectors of the dataset (let us call this set $\mathcal{V}_B$). This means that the final weights of the model $\W$ is a combination of datapoints from the first two majority distributions $\dist_1, \dist_2$. 
\begin{align}
    \W &= \sum_{i\in\mathcal{V}_B} \alpha_i \x_i, \\
   \vmu_r \cdot \W &= \sum_{i\in\mathcal{V}_B} \alpha_i (\vmu_r \cdot \x_i)
\end{align}
Since $\vmu_r \cdot \vmu_i = 0$, we have that $(\vmu_r \cdot \x_i) = \vmu_r\cdot\z_i + 0$, which is a mean zero random variable. 
\begin{align}
   \vmu_r \cdot \W 
   &= \sum_{i\in\mathcal{V}_B} \alpha_i (\vmu_r \cdot \z_i), \\ 
   &= \sum_{i\in\mathcal{V}_B} \alpha_i
   \mu\sum_{2k+1\leq j\leq3k} \idx{\z_i}{j}, \\
   &= \xi, \\
\end{align}

where $\xi\sim\mathcal{N}(0, k\mu^2\sigma^2\sum_{i\in\mathcal{V}_B}\alpha_i^2)$. 
Also, $\norm{\W}{2}^2 = \sum_{i\in\mathcal{V}_B}\alpha_i^2\x_i^2$. 
However, $\x_i = \vmu_i + \z_i$. 
From Lemma~\ref{lemma:gaussian-square}, we know that with probability greater than $1-\delta/6$, for every example, 
$\frac{d\sigma^2}{2} \le \norm{ \z_i}{2}^2 \le \frac{3d\sigma^2}{2}$. By Young's inequality for products:

\begin{align*}
\norm{\z_i}{2}^2 &= \norm{\x_i -\vmu_i}{2}^2,\\
&\leq 2\norm{\x_i}{2}^2 + 2\norm{\vmu_2}{2}^2,\\
\norm{\x_i}{2}^2
&\geq \frac{1}{2}\norm{\z_i}{2}^2 - \norm{\vmu_i}{2}^2,\\
&\geq \frac{d\sigma^2}{4} - \norm{\vmu_i}{2}^2, \\
&\geq \frac{d\sigma^2}{8} \qquad \qquad \text{(since $\frac{k\mu^2}{\sigma^2} < d/nC$ for large C)},\\
\norm{\W}{2}^2 = \sum_{i\in\mathcal{V}_B}\alpha_i^2\x_i^2
&\geq \frac{d\sigma^2}{8}\sum_{i\in\mathcal{V}_B}\alpha_i^2.
\end{align*}
Therefore, 
\begin{align}
\begin{split}
\eta &= \frac{-\xi}{\sigma\norm{\W}{2}}\sim\mathcal{N}\left(0, \frac{k\mu^2}{d\sigma^2/8}\right),\\
\eta &\sim \mathcal{N}\left(0, \frac{1}{nC}\right), \qquad \qquad \text{(since $\frac{k\mu^2}{\sigma^2} < d/nC$ for large C)},\\
 \prob\left[ (\y_r\W \cdot \x_r) < 0\right]
        &= \Phi\left(\eta\right)
\end{split}
\end{align}
Using Gaussian tail bound on $\eta$, with probability at least $1-\delta$, $\eta\le\sqrt{\frac{\log(1/\delta)}{nC}}$. Therefore, 
$\prob\left[ (\y_r\W \cdot \x_r) < 0\right]
        \leq \Phi\left(
        \sqrt{\frac{\log(1/\delta)}{nC}}
        \right)
        \leq \Phi\left(\frac{1}{C}\right)
        \approx 0.5$.

In the last step, we used the assumption (A.2) that $n\geq C \log(\nicefrac{1}{\delta})$ for some large constant C.

\emph{Remark:} This analysis highlights that the meaning of being `forgotten' for rare examples is to predict randomly. However, in case of mislabeled examples, the predicted label of the example approaches its `true' label, irrespective of its training label.

\paragraph{Complex Examples}
The analysis for complex examples follows directly from the analysis in the case of mislabeled examples. The only difference is the magnitude of the signal to noise ratio within the group. Recall that complex examples are also sampled from majority groups. 
Therefore, the probability that the complex example $(\x,\y)$ is misclassified at the end of training for infinite time is given by:

\begin{align}
    \begin{split}
    \prob \left[\y\x^\transpose \W < 0\right] 
    &\le \exp \left(-c\frac{||\nicefrac{\Tilde{\vmu}}{\lambda}||^4}{d}\right)
    \end{split}
\end{align}
This approaches 0, indicating perfect classification of complex examples from $\dataset_A$. Note that this is a complimentary case where the second split only has examples from the complex distribution.

\end{proof}

\subsection{Intermediate Time Analysis}

From the analysis in Section~\ref{app:theory-asymptotic}, we find that all the mislabeled and rare examples are forgotten by the time we train for $t\to\infty$ iterations. Since examples from complex subgroups are not forgotten even at infinite time training, we skip analysis for those examples in this subsection. 
Our goal is to show that there exists a time $T'$ such that  with high probability, the model forgets all the mislabeled examples, but still correctly classifies all the rare examples.
To show this, we will track the accuracy of inputs in the first data split, as we train on the examples in the second split.

The model output for any sample $(\x_i,\y_i)\in\dataset_A$ is given by $\W(t)^\transpose\x_i$, and the prediction is considered to be correct if $\sign(\W(t)^\transpose\x_i) = \y_i$, or if $\W(t)^\transpose\x_i\y_i > 0 $. From hereon, we will use the notation $\acc_i^t = \W(t)^\transpose\x_i\y_i$.

Recall that we considered that there is one example from both the mislabeled and rare example category in $\dataset_A$. We will denote these data points by $(\x_m,\y_m)\in\mislabeled$, and $(\x_r, \y_r)\in\atypical$ respectively.
Without loss of generality, we will assume that $(\x_m,\neg\y_m)$ was sampled from $\dist_1$ in the mixture of distributions $\dist$. All the examples in the second split $\dataset_B$ sampled from the same distribution are given by $\dataset_{B,1}\subset\dataset_B$. 
The remaining examples are in the set denoted by $\dataset_{B,\neg 1}\subset\dataset_B$.

From the learning time training dynamics, we know that the initialization of the weights for the second round of training is given by:
\begin{align}
    \begin{split}
    \W_B(0) &= \hat{\W}_A \log T + \rho_A(T). 
    \end{split}
\end{align}

Now, from the representer theorem, we can decompose the model weights at any iteration of second-split training as follows, 
\begin{align}
\label{eqn:representer-decomposition}
    \begin{split}
    \W_B(t) &= \hat{\W}_A \log T + \rho_A(T) + \sum_{j\in\dataset_B}\beta_j \y_j\x_j. 
    \end{split}
\end{align}

Note that we introduce an additional term $\y_j$ in the decomposition using representer theorem, but this can be done without loss of generality since $\y_j\in\{-1,+1\}$. This helps us in ensuring that each $\beta_j$ is non-negative as shown in Lemma~\ref{lemma:beta}.

From Lemma~\ref{lemma:bounded-time-prediction}, we know that there exists a bounded time $T'$ when the mislabeled example prediction flips.  We will denote $\mu$ as the coordinate-wise mean for the $k$-signal dimensions in the vector $\vmu_g$ in the discussion that follows.
Let us define $\Delta_t = \frac{\sum_{j\in\dataset_{B,1}}\beta_j(t)}{\sum_{j\in\dataset_{B}}\beta_j(t)}$, and $\Delta = \max_{t} \Delta_t$. 
For a symmetric distribution with two majority subgroups with the opposite label, we would expect this value to be close to 0.5.
Now, we present formal version of Theorem~\ref{theorem:intermediate-time-informal} from the main paper.

\begin{theorem}[Intermediate-Time Forgetting]
 \label{theorem:intermediate-time-formal}
 Under 
 the distribution outlined in Appendix~\ref{app:theory-preliminaries} with
 assumptions A.1, A.2, A.3, whenever $\dataset_A$ is separable, when fine-tuning on the second dataset split $\dataset_B$ with sufficiently small learning rate, there exists some bounded time T' when
 \begin{enumerate}
  \setlength{\itemsep}{1pt}
  \setlength{\parskip}{0pt}
  \setlength{\parsep}{0pt}
  \item $\prob_{(\x_m,\y_m)\in\dataset_A}[\y_m \neq \W(T')\cdot\x_m ] \geq 1 - c_0\exp{\left(\frac{-k^2 \mu^2 \Delta^2}{c \sigma^2 d}\right)}-c_1\exp(-cd)$%
     \item $\prob_{(\x_r,\y_r)\in\dataset_A}\;\;[\y_r \;= \W(T')\cdot\x_r ]\; \geq 1 - c_0\exp{\left(\frac{-k^2 \mu^2 \Delta^2}{c \sigma^2 d}\right)}-c_1\exp(-cd)$ %
 \end{enumerate}
 
for absolute constants $c_0, c_1, c$ and $(\x_m,\y_m)\in\mislabeled, (\x_r, \y_r) \in \atypical$ respectively.
 \end{theorem}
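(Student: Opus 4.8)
\textbf{Proof proposal for Theorem~\ref{theorem:intermediate-time-formal}.}

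The plan is to track the two scalar quantities $\acc_m^t = \W_B(t)^\transpose \x_m\y_m$ and $\acc_r^t = \W_B(t)^\transpose \x_r\y_r$ as training proceeds on $\dataset_B$, using the representer decomposition \eqref{eqn:representer-decomposition}. First I would expand each inner product against the three contributions to $\W_B(t)$: the carried-over term $\hat\W_A\log T + \rho_A(T)$, and the second-split sum $\sum_{j\in\dataset_B}\beta_j(t)\y_j\x_j$, splitting the latter into $j\in\dataset_{B,1}$ (same majority group as the mislabeled point, opposite training label) and $j\in\dataset_{B,\neg 1}$. For the mislabeled point $\x_m = \vmu_1 + \z_m$ with flipped label $\y_m = -1$ (WLOG), the dominant signal term comes from the coupling $\y_j\y_m\,\vmu_1^\transpose\vmu_1 = -k\mu^2$ over $j\in\dataset_{B,1}$, since $\vmu_1\perp\vmu_2$ and $\vmu_1\perp\vmu_r$ kill the cross-group means; this produces a drift of magnitude roughly $-k\mu^2\sum_{j\in\dataset_{B,1}}\beta_j(t)$, i.e. proportional to $\Delta_t$ times the total $\sum_j\beta_j(t)$. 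For the rare point $\x_r = \vmu_r + \z_r$, \emph{every} mean-mean term vanishes ($\vmu_r\perp\vmu_1,\vmu_2$), so the only contributions are the initialization term and noise cross terms $\z$-against-$\z$, which are mean-zero; I would bound the magnitude of $\acc_r^t$ by concentration.

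The key steps, in order: (1) Invoke Lemma~\ref{lemma:bounded-time-prediction} / Lemma~\ref{lemma:bounded-time-prediction}'s existence of a bounded $T'$ at which $\acc_m^{T'}$ first becomes negative — this pins down the time horizon and, crucially, gives control on the size of $\sum_j\beta_j(T')$ at that moment (the signal has had to grow by $\Theta(k\mu^2\Delta)$ worth of aggregate weight to flip the sign). (2) Show the carry-over term contributes only $\mathcal{O}(\log T)$, negligible against the $\Theta(\log t)$-or-larger growth of the second-split weights by the time forgetting occurs, so it can be absorbed into the error. (3) For $\acc_m$: write $\acc_m^{T'} = \langle\text{init}\rangle - k\mu^2\sum_{j\in\dataset_{B,1}}\beta_j(T') + (\text{noise})$, and use Lemma~\ref{lemma:gaussian-square} plus sub-Gaussian concentration of $\z_m^\transpose\W_B(T')$ (a mean-zero Gaussian with variance $\sigma^2\|\W_B(T')\|_2^2$) to conclude the noise deviation exceeds the signal drift only with probability $\le c_0\exp(-k^2\mu^2\Delta^2/(c\sigma^2 d)) + c_1\exp(-cd)$; the first term is the Gaussian tail at the signal-to-fluctuation ratio $k\mu^2\Delta / (\sigma\|\W_B\|_2) \approx k\mu\Delta/(\sigma\sqrt d)$ after bounding $\|\W_B(T')\|_2^2 \lesssim d\sigma^2\sum_j\beta_j^2$ as in the asymptotic proof, and the second handles the event that $\|\z\|_2^2$ or $\|\x_j\|_2^2$ deviate from $\Theta(d\sigma^2)$. (4) For $\acc_r$: since the signal is zero, $\acc_r^{T'}$ is (up to the negligible init term) a mean-zero sub-Gaussian of the same fluctuation scale; but the rare point starts correctly classified with margin $\ge 1$ at $t=0$, and I would argue its value stays positive because the only force pushing it is noise of magnitude $\mathcal{O}(\sigma\|\W_B\|_2 \cdot \sqrt{d}/d) = \mathcal{O}(\text{small})$ relative to... — more carefully, I track $\acc_r^{T'} - \acc_r^0$ and bound $|\acc_r^{T'}-\acc_r^0| $ by the same Gaussian tail, so with the stated probability it does not erase the initial positive margin. (5) Union-bound the two failure events to get both conclusions simultaneously.

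The main obstacle is Step (1) together with the coupling in Step (4): I need a quantitative handle on $\Delta = \max_t \Delta_t$ and on $\sum_j\beta_j(T')$ at the forgetting time — i.e., I must show that by the time the aggregate second-split weight is large enough to flip the mislabeled example's sign, it is \emph{not yet} large enough for accumulated noise to flip the rare example. This is exactly the "jointly analyze the rate of change of both examples" step flagged in the proof sketch: the signal drift on $\x_m$ and the noise fluctuation on both points grow off the same quantity $\sum_j\beta_j(t)$, and the separation works only because the mislabeled drift is $\Theta(k\mu^2\Delta)\sum_j\beta_j$ (order-one fraction of the full margin budget) whereas the rare-point fluctuation is $\Theta(\sigma\sqrt d \cdot \sqrt{\sum_j\beta_j^2})$, and assumption (A.3) ($d \gtrsim n^2\log(n/\delta)$, $k\mu^2/\sigma^2 \gtrsim \log(n/\delta)$) makes the latter small relative to the former at the critical time. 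Making the dependence of $T'$ (and hence the $\beta_j(T')$) on $T$ explicit enough to verify this inequality — rather than just asserting existence of some bounded $T'$ — is where the real work lies; everything else is concentration bookkeeping borrowed from the asymptotic proof and \citet{chatterji2021finite}.
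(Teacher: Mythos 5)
Your proposal follows essentially the same route as the paper's proof: the representer decomposition of $\W_B(t)$ into the carried-over term plus $\sum_j\beta_j\y_j\x_j$, the split over $\dataset_{B,1}$ versus $\dataset_{B,\neg 1}$ yielding the $-k\mu^2 B_1$ drift on the mislabeled point and zero drift on the rare point, the support-vector property giving the $\log T$ initial margin, and the same Gaussian-tail/Gaussian-product concentration yielding the $c_0\exp(-k^2\mu^2\Delta^2/(c\sigma^2 d))+c_1\exp(-cd)$ bound. You also correctly identify the one place the paper itself is least explicit — the quantitative coupling between $\sum_j\beta_j(T')$ at the flip time and the noise scale on the rare example, which the paper handles by assuming sufficiently small step sizes and the symmetry condition $\acc_r^t\approx-\acc_m^t$ — so no substantive divergence to report.
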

 
If the fraction of dimensions that contain the signal, $\nicefrac{k}{d}$, is considered fixed then both the first and second term above exponentially decay with a factor of $d$. This suggests that increasing overparametrization leads to a higher likelihood of the phenomenon of intermediate time forgetting---there exists an epoch when the prediction of mislabeled example is flipped but the rare examples are still correctly predicted with high probability.

In what follows, the key idea is to show that at a time when the prediction of the mislabeled example is incorrect with high probability, the predictions of the rare examples is correct with high probability.

\begin{proof}
From Lemma~\ref{lemma:support-vectors-sa}, we know that $(\x_m,\y_m)$ and $(\x_r,\y_r)$ are support vectors for $\dataset_A$. Hence, $\y_m\x_m^\transpose\hat{\W}_A = \y_r\x_r^\transpose\hat{\W}_A  = 1$.
We will use this fact
to find the distribution for $\acc_m^t,\; \acc_r^t$.

Let $\idx{\x_j}{i}$ denote the $i^{th}$ dimension of the $j^{th}$ datapoint in the second split.
Recall that the second split comprises of two majority subgroups. $k$ dimensions of the input vector contain the true signal for class prediction. The dimensions $\{1\dots k\}, \{k+1\dots 2k\}, \{2k+1\dots 3k\}$ are the predictive dimensions for the majority group 1, 2 and that for the rare example from dataset $\dataset_A$. 
Therefore, $\idx{\x_j}{i} = \mu + \idx{\z_j}{i}$, if $i$ is in the predictive dimensions, otherwise $\idx{\x_j}{i} = \idx{\z_j}{i}$  where $\idx{\z_j}{i}\sim \mathcal{N}(0,\sigma^2)$. 
To make notation simple, we will refer to $\beta_j(t)$ by using $\beta_j$.

\emph{Remark:} We do not perform input transformation to prove the following results.

\paragraph{Mislabeled Example} The prediction on the mislabeled point times the given label can be written as:
\begin{align}
    \begin{split}
    \acc_m^t = \y_m\x_m^{\transpose}\W_B(t) &= \y_m\x_m^{\transpose}\left(\hat{\W}_A \log T + \rho_A(T) + \sum_{j\in\dataset_B}\beta_j \y_j\x_j\right), \\
    &= \log T + c_m + \sum_{j\in\dataset_B}\beta_j (\y_m\x_m^{\transpose}\y_j\x_j)\, \qquad \text{(using Lemma~\ref{lemma:support-vectors-sa})}, 
    \end{split}
\end{align}
where $c_m =\y_m\x_m^T\rho_A(T)$ is a residual term that does not continue grow with $T$ (Theorem 9 \citep{soudry2018implicit}). 

Without loss of generality, assume that the mislabeled example $\x_m$ belongs to majority group 1 (true label = 1), and was originally labeled such that $\y_m$ = -1 in the first-split dataset.

\begin{align}
    \begin{split}
    \sum_{j\in\dataset_B}\beta_j (\y_m\x_m^{\transpose}\y_j\x_j) &= \sum_{j\in\dataset_{B,1}}\beta_j (\y_m\x_m^{\transpose}\y_j\x_j) 
    + \sum_{j\in\dataset_{B,\neg1}}\beta_j (\y_m\x_m^{\transpose}\y_j\x_j) \\
    &= - \sum_{j\in\dataset_{B,1}}\beta_j (\x_m^{\transpose}\x_j) 
    + \sum_{j\in\dataset_{B,\neg1}}\beta_j (\x_m^{\transpose}\x_j).
    \end{split}
\end{align}

Now we can use the distribution properties of the dataset to further simplify per dimension and aggregate the sum across all examples.
\begin{align}
    \begin{split}
    \sum_{j\in\dataset_{B,1}}\beta_j (\x_m^{\transpose}\x_j)  &= 
    \x_m^{\transpose} \sum_{j\in\dataset_{B,1}}\beta_j \x_j = \x_m^{\transpose} \x_{\dataset_{B,1}},
    \end{split}
\end{align}

where $\idx{\x_{\dataset_{B,1}}}{i} =  \mu\; \mathbb{1}(i \in \{1\dots k\})\sum_{\dataset_{B,1}}\beta_j\ + \idx{\z_{\dataset_{B,1}}}{i}$, where $\idx{\z_{\dataset_{B,1}}}{i}\sim \mathcal{N}(0, \sigma^2 \sum_{j\in\dataset_{B,1}}\beta_j^2)$. Now, we will add up the dot product dimension wise. Let us call 
$B_1 = \sum_{\dataset_{B,1}}\beta_j $ and $B_1^v = \sum_{j\in\dataset_{B,1}}\beta_j^2 $. Then $\idx{\x_{\dataset_{B,1}}}{i} \sim \mathcal{N} (\mu\; \mathbb{1}(i \leq k\})\cdot B_1, \sigma^2 B_1^v)$.

\begin{align}
    \begin{split}
    \x_m^{\transpose} \x_{\dataset_{B,1}} &= \mu\sum_k \idx{\z_{\dataset_{B,1}}}{i} + \mu B_1 \sum_k \idx{\z_{m}}{i} + k\cdot \mu^2 \cdot B_1 + \sum_d \idx{\z_{\dataset_{B,1}}}{i} \cdot \idx{\z_m}{i} \\
    &= \mu \cdot \alpha_1 + \sum_d \idx{\z_{\dataset_{B,1}}}{i} \cdot \idx{\z_m}{i} \\
    \x_m^{\transpose} \x_{\dataset_{B,\neg 1}} &= \mu \cdot \alpha_2 + \sum_d \idx{\z_{\dataset_{B,\neg 1}}}{i} \cdot \idx{\z_m}{i} 
    \end{split}
\end{align}

where $\alpha_1~\sim \mathcal{N}(k\cdot \mu \cdot B_1, k\cdot\sigma^2 (B_1^v + B_1^2) )$ and $\alpha_2~\sim \mathcal{N}(0, k\cdot\sigma^2 (B_2^v + B_2^2) )$.
Notice that in the first step, the first three terms are independent of each other and can be added directly to obtain a new random variable using the independence condition, however, the last term is dependent on the first two. In the final step, we also add the terms for the dot product corresponding to the opposite group.

This gives us the overall expression as follows:
\begin{align}
\begin{split}
\sum_{j\in\dataset_B}\beta_j (\y_m\x_m^{\transpose}\y_j\x_j) &=\mu \cdot (\alpha_2 - \alpha_1) + \sum_d (\idx{\z_{\dataset_{B,\neg 1}}}{i} - \idx{\z_{\dataset_{B, 1}}}{i}) \cdot \idx{\z_m}{i}, \\
&= \mu \cdot \alpha_m  + \sum_d \idx{\z_{\dataset_{B}}}{i}\cdot \idx{\z_m}{i} - k\cdot \mu^2 \cdot B_1,
\end{split}
\end{align}
where $\alpha_m~\sim \mathcal{N}(0, k\cdot\sigma^2 (B^v + B_1^2 + B_2^2))$ and 
$\idx{\z_{\dataset_{B}}}{i}\sim \mathcal{N}(0, \sigma^2 B^v)$, since ($B_1^v + B_2^v = \sum_{j\in\dataset_{B}}\beta_j^2 = B^v$)
.
\begin{align}
\begin{split}
    \acc_m^t  = \log T + c_m + \xi_m - k\cdot \mu^2 \cdot B_1;  \text{ s.t. } \xi_m = \mu\cdot \alpha_m  + \sum_d \idx{\z_{\dataset_{B}}}{i}\cdot \idx{\z_m}{j}.
\end{split}
\end{align}

\paragraph{Rare Example}
Following the analysis of the mislabeled example, we can similarly find an expression for $\acc_r^t$ on the rare example as follows:
\begin{align}
    \begin{split}
    \acc_r^t = \y_r\x_r^{\transpose}\W_B(t) &= \y_r\x_r^{\transpose}(\hat{\W}_A \log T + \rho_A(T) + \sum_{j\in\dataset_B}\beta_j \y_j\x_j), \\
    &= \log T + c_r + \sum_{j\in\dataset_B}\beta_j (\y_r\x_r^{\transpose}\y_j\x_j) \qquad \text{(using Lemma~\ref{lemma:support-vectors-sa})},
    \end{split}
\end{align}

Following the same procedure as for mislabeled example, we get the overall expression as follows:
\begin{align}
\begin{split}
\sum_{j\in\dataset_B}\beta_j (\y_r\x_r^{\transpose}\y_j\x_j) &= \mu\cdot \alpha_r  + \sum_d \idx{\z_{\dataset_{B}}}{i}\cdot \idx{\z_r}{i},
\end{split}
\end{align}

where $\alpha_r~\sim \mathcal{N}(0, k\cdot\sigma^2 (B^v + B_1^2 + B_2^2))$ and 
$\idx{\z_{\dataset_{B}}}{i}\sim \mathcal{N}(0, \sigma^2 B^v)$.

\begin{align}
\begin{split}
\acc_r^t  = \log T + c_r + \xi_r, \text{ s.t. } \xi_r = \mu\cdot \alpha_r  + \sum_d \idx{\z_{\dataset_{B}}}{i}\cdot \idx{\z_r}{i}.
\end{split}
\end{align}

\paragraph{Combining both cases}
Recall that $\beta_j>0$ for all $j$. Therefore, $B^2 = (\sum_{j\in\dataset_{B}}\beta_j)^2 > B^v =\sum_{j\in\dataset_{B}}\beta_j^2 $.
Based on the analysis of \citet{soudry2018implicit}, we know that $\W_B(t)$ grows as fast as $O(\log t)$. Therefore, both $B_1 = O(\log t)$, and $B = O(\log t)$ (must grow at most as fast as that). 
From the problem definition, $B_1 \leq \Delta B$, where $\Delta \in \left[0,1\right]$.

Our goal now is to find an epoch $t$ during the second-split training when the rare example is correctly classified with high probability, and the mislabeled example is incorrectly classified with high probability. The maxima is achieved when $\acc_r^t \approx - \acc_m^t$. We assume that the step sizes are sufficiently small such that we can achieve this condition. Moreover, $c_r, c_m \ll \log T$. Hence, the condition is to find $t$ such that $\xi_r + \log T > 0$ and  $\xi_m + \log T < k \cdot \mu^2 \cdot B_1$ with high probability. By symmetry, we must find $\prob\left(|\xi_m| < k \cdot \mu^2 \cdot B_1 /2\right)$.

\begin{align}
    \begin{split}
        \prob\left(|\xi_m| > k \cdot \mu^2 \cdot \frac{B_1}{2}\right) \leq \prob\left(|\xi_1| > k \cdot \mu^2 \cdot \frac{B_1}{4}\right) + \prob\left(|\xi_2| > k \cdot \mu^2 \cdot \frac{B_1}{4}\right), 
    \end{split}
\end{align}

where $\xi_1 = \mu\cdot \alpha_m \sim \mathcal{N}(0, \mu^2 \sigma^2 k(B^v + B_1^2 + B_2^2)$ and $\xi_2 = \sum_d \idx{\z_{\dataset_{B}}}{i}\cdot \idx{\z_r}{i}$. For the first term, we can directly use the Gaussian tail bound using Chernoff method. 
\begin{align}
    \begin{split}
         \prob\left(|\xi_1| > k\cdot \mu^2 \cdot \frac{B_1}{4}\right) 
         &\leq 
         2 \exp{\frac{-k^2 \mu^4 B_1^2}{32 \mu^2 \sigma^2 d (B^v + B_1^2 + B_2^2)}},\\
         &\leq 2 \exp{\frac{-k^2 \mu^4 B_1^2}{32 \mu^2 \sigma^2 d (2B^2)}},\\
         &\leq 2 \exp{\frac{-k \mu^2\Delta^2}{c_1 \sigma^2}}.
    \end{split}
\end{align}

Now, for the second term, 
using Lemma~\ref{lemma:gaussian-product} we have that. 

\begin{align}
    \begin{split}
         \prob\left(|\xi_2| > k \cdot \mu^2 \cdot \frac{B_1}{4}\right) 
         \leq 
         2 \exp{\frac{-k^2 \mu^4 B_1^2}{c_2 \sigma^4 d B^v}} +  c_1\exp(-cd) ,\\
         \leq 2\exp{\frac{-k^2 \mu^2 \Delta}{c_2 \sigma^2 d}} +  c_1\exp(-cd),
    \end{split}
\end{align}

since $\frac{\mu}{\sigma} > 1$. Finally, combining both the cases we have that 
\begin{align}
    \begin{split}
        \prob\left(|\xi_m| > k \cdot \mu^2 \cdot \frac{B_1}{2}\right) \leq c_0\exp{\frac{-k^2 \mu^2 \Delta^2}{c \sigma^2 d}} + c_1\exp(-cd), 
    \end{split}
\end{align}

This shows that as the dimensionality of the dataset increases, the likelihood of prediction on mislabeled examples being flipped while the rare examples retain their prediction increases exponentially.
This concludes the proof of Theorem~\ref{theorem:intermediate-time-formal}.
\end{proof}

\subsection{Concentration Inequalities and Additional Lemmas}
To make our work self-contained, we supplement the reader with additional Lemmas and Theorems that are helpful tools for proving the theorems in this work. We restate versions of the Hoeffding and Bernstein inequalities as in \citet{chatterji2021finite}.

\begin{lemma}[\citet{soudry2018implicit}, Theorem 3]
\label{lemma:souddry}
For any linearly separable dataset $\dataset_A$ and for all small enough step-sizes $\alpha$, 
we have
\[
\frac{\W_A}{\| \W_A \|} = \lim_{t \rightarrow \infty} \frac{\W^{(t)}}{\| \W^{(t)} \|}.
\]
\end{lemma}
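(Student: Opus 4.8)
The plan is to follow the implicit-bias argument of \citet{soudry2018implicit}: gradient descent on the exponential loss over a linearly separable dataset converges in direction to the hard-margin SVM solution $\hat{\W}_A$. The first step is to show that the loss is driven to zero and the iterates diverge in norm. Since $\dataset_A$ is linearly separable there is a unit vector along which every margin $\W^\transpose\y_i\x_i$ is positive; scaling it shows $\inf_{\W}\risk(\dataset_A;\W) = 0$ with the infimum not attained at any finite $\W$. Combining this with the descent lemma---which applies once the step size $\eta$ is small enough, because along the trajectory the local smoothness of the exponential loss stays bounded---the sequence $\risk(\dataset_A;\W(t))$ decreases monotonically to $0$. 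This forces every margin $\W(t)^\transpose\y_i\x_i \to \infty$, and in particular $\norm{\W(t)}{2}\to\infty$.

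The second step is to pin down the asymptotic direction of the negative gradient, $-\nabla\risk(\W(t)) = \sum_i \exp(-\W(t)^\transpose\y_i\x_i)\,\y_i\x_i$. As all margins diverge, this sum is dominated by the examples achieving the smallest margin---precisely the support vectors $\mathcal{V}$ of the SVM problem $\hat{\W}_A = \argmin_{\W}\norm{\W}{2}^2$ s.t.\ $\W^\transpose\y_i\x_i\ge 1$. I would posit the ansatz $\W(t) = \hat{\W}_A\log t + \rho(t)$ with $\rho(t)$ of order $o(\log t)$; then support vectors satisfy $\W(t)^\transpose\y_i\x_i = \log t + \rho(t)^\transpose\y_i\x_i$, so their loss terms scale as $\Theta(1/t)$, while non-support vectors have strictly larger margins and contribute $o(1/t)$. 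By the KKT conditions of the SVM, $\hat{\W}_A = \sum_{i\in\mathcal{V}}\alpha_i\,\y_i\x_i$ with $\alpha_i\ge 0$, so $-\nabla\risk(\W(t)) \approx t^{-1}\hat{\W}_A$ up to the bounded per-example factors $\exp(-\rho(t)^\transpose\y_i\x_i)$. Summing $\W(t+1)-\W(t)\approx \eta\, t^{-1}\hat{\W}_A$ and using $\sum_{s\le t}s^{-1} = \log t + O(1)$ reproduces the $\log t$ growth along $\hat{\W}_A$; the conclusion $\W(t)/\norm{\W(t)}{2}\to \hat{\W}_A/\norm{\hat{\W}_A}{2}$ is then immediate since $\rho(t) = o(\log t)$.

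The main obstacle is the residual analysis: one must show that $\rho(t)$ genuinely remains $o(\log t)$---in fact bounded, except in degenerate support-vector configurations where it may grow as slowly as $\log\log t$, which is still harmless here. Concretely, the discrepancy between $-\nabla\risk(\W(t))$ and $t^{-1}\hat{\W}_A$ comes from (i) the residual-dependent weights on the support vectors and (ii) the exponentially smaller non-support-vector terms, and one has to prove that this discrepancy is absolutely summable so that it cannot accumulate into a component growing at rate $\log t$. Following \citet{soudry2018implicit}, I would write the dynamics of $\rho(t)$ explicitly, decompose each update into its component along $\hat{\W}_A$ and its component in the orthogonal complement, and bound the orthogonal part using the strong-convexity margin of the SVM dual; once $\rho(t)$ is controlled, the directional convergence claimed in the lemma follows directly.
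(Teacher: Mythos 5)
The paper does not prove this lemma at all---it is imported verbatim as Theorem 3 of \citet{soudry2018implicit}---so there is no internal proof to compare against; your sketch is a faithful reconstruction of the original argument (loss driven to zero and norm divergence, support-vector domination of the gradient, the $\hat{\W}_A\log t+\rho(t)$ ansatz, and control of the residual up to the degenerate $\log\log t$ case). Your outline is consistent with the cited source and correctly identifies the residual bound as the genuinely technical step.
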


\begin{theorem}[General Hoeffding's Inequality]
\label{theorem:hoeffding-general}
Let $\theta_1, \dots, \theta_m$ be independent mean-zero sub-Gaussian random variables and $a = (a_1, \dots, a_m)\in 
\mathbb{R}^m$. Then, for every $t>0$, we have
$$\prob\left[\left\vert\sum_{i=1}^{m} a_i\theta_i\right\vert \geq t\right] \leq 2\exp{\left(\frac{-c_0\;t^2}{K^2\norm{a}{2}^2} \right)}, $$
where $K=\max_i \norm{\theta_i}{\psi_2}$ and c is an absolute constant.
\end{theorem}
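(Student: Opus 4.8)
The plan is to prove this via the classical Chernoff (moment-generating-function) argument. First I would normalize, replacing each $\theta_i$ by $\theta_i/K$ and $t$ by $t/K$; since $\norm{\theta_i/K}{\psi_2}\le 1$ and the claimed bound is invariant under this substitution, it suffices to treat the case $K=1$. The one fact I would import about centered sub-Gaussian variables is the standard MGF estimate: there is an absolute constant $C$ such that $\E[\exp(s\theta_i)]\le\exp(Cs^2)$ for \emph{every} $s\in\R$, whenever $\E\theta_i=0$ and $\norm{\theta_i}{\psi_2}\le 1$. This is one of the equivalent characterizations of sub-Gaussianity; it follows by Taylor-expanding $\exp(s\theta_i)$, using the moment bound $\E|\theta_i|^p\le (C'p)^{p/2}$ controlled by the $\psi_2$-norm, and noting that the $s^1$ term vanishes because $\theta_i$ is centered --- the centering is exactly what makes the bound hold for all real $s$ rather than only for small $s$.

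Given this, the rest is routine. For any $\lambda>0$, Markov's inequality applied to $\exp(\lambda\sum_i a_i\theta_i)$ together with independence of the $\theta_i$ gives
\[
\prob\Bigl[\textstyle\sum_{i=1}^{m} a_i\theta_i \ge t\Bigr]
\le e^{-\lambda t}\prod_{i=1}^{m}\E\bigl[\exp(\lambda a_i\theta_i)\bigr]
\le e^{-\lambda t}\prod_{i=1}^{m}\exp\bigl(C\lambda^2 a_i^2\bigr)
= \exp\bigl(-\lambda t + C\lambda^2\norm{a}{2}^2\bigr),
\]
where the second inequality uses the MGF bound with $s=\lambda a_i$ (valid for $a_i$ of either sign). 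Optimizing the quadratic exponent over $\lambda$ by taking $\lambda = t/(2C\norm{a}{2}^2)$ yields the one-sided tail bound $\exp\bigl(-t^2/(4C\norm{a}{2}^2)\bigr)$; setting $c_0=1/(4C)$ and undoing the normalization reinstates the $K^2$ in the denominator. Applying the identical argument to $-\theta_1,\dots,-\theta_m$ (still centered, still sub-Gaussian with norm at most $K$) controls the lower tail, and a union bound over the two tail events produces the factor $2$ and the absolute-value form in the statement.

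The only non-mechanical step is establishing the MGF bound $\E[\exp(s\theta_i)]\le\exp(Cs^2)$ from the hypotheses, and the subtlety there is purely one of constant-tracking across the three standard formulations (Orlicz $\psi_2$-norm, sub-Gaussian moment growth, sub-Gaussian MGF), together with remembering that the zero-mean assumption is essential. Everything downstream is the textbook Chernoff optimization. Alternatively --- and this is presumably what is intended here, since the paper already imports these inequalities ``as in \citet{chatterji2021finite}'' --- one may simply invoke the statement as Theorem 2.6.3 of Vershynin's \emph{High-Dimensional Probability}, of which it is a verbatim instance.
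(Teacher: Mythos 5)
Your proposal is correct: the normalization to $K=1$, the MGF bound $\E[\exp(s\theta_i)]\le\exp(Cs^2)$ for centered sub-Gaussian variables, the Chernoff optimization over $\lambda$, and the union bound over the two tails together constitute the standard and complete derivation. The paper itself supplies no proof of this statement --- it is restated verbatim from the cited reference (Chatterji and Long, ultimately Vershynin's Theorem 2.6.3) as an imported tool --- so your closing observation that one may simply invoke that theorem is exactly what the paper does, and your written-out argument is a faithful reconstruction of the proof behind it.
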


In our case, since we deal with Gaussian random variables, $\norm{\theta}{\psi_2}$ (sub-Gaussian norm) is same as the variance of the random variable. That is, $\theta\sim\mathcal{N}(0,\sigma^2) \implies \norm{\theta}{\psi_2} = K = \sigma$. 

\begin{theorem}[Bernstein Inequality]
\label{theorem:bernstein}
For independent mean-zero sub-exponential random variables $\theta_1,\ldots,\theta_m$, for every $t>0$, we have
\begin{align*}
\prob\left[\Big\lvert \sum_{i=1}^m  \theta_i \Big\rvert \ge t\right]\le 2\exp\left(-c_1 \min\left\{\frac{t^2}{\sum_{i=1}^m \norm{\theta_i}{\psi_1}^2},\frac{t}{\max_i \norm{\theta_i }{\psi_1}}\right\}\right),
\end{align*}
where $c_1$ is
an absolute constant.
\end{theorem}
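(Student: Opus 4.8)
\emph{Proof proposal.} The plan is to argue this by the exponential-moment (Cram\'er--Chernoff) method, splitting the target deviation $t$ into a ``sub-Gaussian'' regime (small $t$) and a ``sub-exponential'' regime (large $t$), which is precisely what produces the $\min\{\cdot,\cdot\}$ form in the statement. First I would record the standard truncated moment-generating-function bound for a mean-zero sub-exponential variable: there is an absolute constant $c_0>0$ such that for every $i$ and every $\lambda$ with $\abs{\lambda}\le 1/(c_0\norm{\theta_i}{\psi_1})$,
\[
\E\exp(\lambda\theta_i)\le \exp\!\bigl(c_0\,\lambda^2\norm{\theta_i}{\psi_1}^2\bigr),
\]
which follows from the equivalence between finiteness of the $\psi_1$-norm, exponential tail decay, and this MGF bound (I would cite a standard reference such as Vershynin's \emph{High-Dimensional Probability} rather than reprove it).

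Next I would tensorize using independence: for all $\abs{\lambda}\le 1/(c_0\max_i\norm{\theta_i}{\psi_1})$,
\[
\E\exp\!\Bigl(\lambda\sum_{i=1}^m\theta_i\Bigr)=\prod_{i=1}^m\E\exp(\lambda\theta_i)\le \exp\!\Bigl(c_0\,\lambda^2\sum_{i=1}^m\norm{\theta_i}{\psi_1}^2\Bigr),
\]
and then apply Markov's inequality to $\exp(\lambda\sum_i\theta_i)$, obtaining for every $\lambda\in(0,\,1/(c_0\max_i\norm{\theta_i}{\psi_1})]$
\[
\prob\!\Bigl[\sum_{i=1}^m\theta_i\ge t\Bigr]\le \exp\!\Bigl(-\lambda t+c_0\,\lambda^2\sum_{i=1}^m\norm{\theta_i}{\psi_1}^2\Bigr).
\]

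The remaining step is to optimize the exponent over $\lambda$ subject to that constraint. The unconstrained minimizer is $\lambda^\star=t/\bigl(2c_0\sum_i\norm{\theta_i}{\psi_1}^2\bigr)$: if it meets the constraint (the small-$t$ regime) then plugging it in yields $\exp\!\bigl(-t^2/(4c_0\sum_i\norm{\theta_i}{\psi_1}^2)\bigr)$, a Hoeffding-type Gaussian tail of the same shape as in \thmref{theorem:hoeffding-general}; if it violates the constraint (the large-$t$ regime) the constrained optimum is the endpoint $\lambda=1/(c_0\max_i\norm{\theta_i}{\psi_1})$, where the linear term dominates and one reads off $\exp\!\bigl(-c_1 t/\max_i\norm{\theta_i}{\psi_1}\bigr)$. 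Taking the larger of the two tail bounds and relabelling constants collapses both cases into the single $\min\{\cdot,\cdot\}$ expression; finally, repeating the argument for $-\theta_1,\dots,-\theta_m$ and union-bounding the two one-sided events produces the stated two-sided bound with the leading factor $2$.

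The main obstacle here is purely bookkeeping rather than conceptual: one must verify that the dichotomy between the two regimes is exactly the one under which the endpoint substitution is the dominant term, and that all the intermediate absolute constants (the $c_0$ in the MGF bound, the $2$ and $4$ from completing the square, and the constant picked up at the boundary) can be folded into the single absolute constant $c_1$ appearing in the statement. Because the inequality in this form is standard, in the write-up I would simply cite it; the sketch above is the route to take should a self-contained proof be desired.
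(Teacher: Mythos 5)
Your sketch is the standard Chernoff-method proof of Bernstein's inequality for sub-exponential variables and is correct: the truncated MGF bound, tensorization, the constrained optimization over $\lambda$ whose two regimes produce the $\min\{\cdot,\cdot\}$, and the union bound giving the leading factor $2$ all check out. The paper itself offers no proof — it simply restates this as a standard inequality following Chatterji and Long — so your plan to cite it (with the sketch held in reserve) matches the paper's treatment exactly.
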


In our case, let $\x_i\sim\mathcal{N}(0,\sigma^2)$ be independent Gaussian random variables, then for $Z = \sum_i^d X_i^2$, 
$$ \prob(\left(\lvert Z - d\sigma^2\rvert\geq t\right) \leq 2\exp\left(-\frac{c_1}{\sigma^2}\min\left\{\frac{t^2}{d\sigma^2},t\right\}\right).$$

\begin{lemma}[Gaussian Product]
\label{lemma:gaussian-product}
Let $\z_1\sim\mathcal{N}(0,\sigma_1^2\identity_d), \z_2\sim\mathcal{N}(0,\sigma_2^2\identity_d)$ be independent multivariate Gaussian random variables. Then, 
$$\prob\left[\lvert \z_1 \cdot \z_2 \rvert > t \right] \leq 2\exp{\left(\frac{-c\;t^2}{d\sigma_1^2\sigma_2^2} \right)} + 2\exp{\left(-c_0 d\right)}$$ 
\end{lemma}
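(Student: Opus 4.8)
The plan is a standard conditioning argument: condition on $\z_2$, observe that $\z_1\cdot\z_2$ is then a one-dimensional centered Gaussian whose variance is proportional to $\norm{\z_2}{2}^2$, and finally remove the conditioning by showing that $\norm{\z_2}{2}^2$ concentrates around $d\sigma_2^2$.

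First I would fix a realization of $\z_2$. Since $\z_1\sim\mathcal{N}(0,\sigma_1^2\identity_d)$ is independent of $\z_2$, the quantity $\z_1\cdot\z_2=\sum_{i=1}^d[\z_1]_i[\z_2]_i$ is a fixed linear combination of i.i.d.\ centered Gaussians, hence $\z_1\cdot\z_2\mid\z_2\sim\mathcal{N}\!\left(0,\sigma_1^2\norm{\z_2}{2}^2\right)$. Applying the Gaussian tail bound — equivalently \thmref{theorem:hoeffding-general} with coefficient vector $a=\z_2$ and $K=\sigma_1$ — gives, for every $t>0$,
\[
\prob\!\left[\,|\z_1\cdot\z_2|>t \;\middle|\; \z_2\,\right]\;\le\;2\exp\!\left(\frac{-c\,t^2}{\sigma_1^2\norm{\z_2}{2}^2}\right).
\]

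Next I would control $\norm{\z_2}{2}^2=\sum_{i=1}^d[\z_2]_i^2$, a sum of $d$ independent mean-$\sigma_2^2$ sub-exponential variables. The chi-squared form of Bernstein's inequality stated just after \thmref{theorem:bernstein} (the same estimate already used in \lemref{lemma:gaussian-square}) yields an absolute constant $c_0>0$ with $\prob\!\left[\norm{\z_2}{2}^2>2d\sigma_2^2\right]\le 2\exp(-c_0 d)$. On the complementary good event $\mathcal E=\{\norm{\z_2}{2}^2\le 2d\sigma_2^2\}$ the conditional bound above is uniformly at most $2\exp\!\left(-c\,t^2/(2d\sigma_1^2\sigma_2^2)\right)$, so taking expectation over $\z_2$ and splitting on $\mathcal E$ versus $\mathcal E^c$ gives
\[
\prob\!\left[\,|\z_1\cdot\z_2|>t\,\right]\;\le\;2\exp\!\left(\frac{-c\,t^2}{2d\sigma_1^2\sigma_2^2}\right)+2\exp(-c_0 d),
\]
and relabelling $c/2\mapsto c$ yields the claim. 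I do not expect a genuine obstacle here: the only care needed is bookkeeping the absolute constants so that a single pair $(c,c_0)$ works across both tail estimates, and checking that the conditional bound — being uniform over $\z_2\in\mathcal E$ — integrates correctly, which is immediate. The conceptually load-bearing step is simply the decision to condition on $\z_2$ rather than attempt a direct sub-exponential analysis of the bilinear form $\z_1\cdot\z_2$.
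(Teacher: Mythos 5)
Your proposal is correct and follows essentially the same route as the paper's proof: condition on $\z_2$, apply the Gaussian tail bound (Theorem~\ref{theorem:hoeffding-general}) with variance $\sigma_1^2\norm{\z_2}{2}^2$, control $\norm{\z_2}{2}^2$ via the Bernstein-type bound, and split on the event $\{\norm{\z_2}{2}^2\le 2d\sigma_2^2\}$. Your write-up is, if anything, slightly more careful than the paper's in how the conditioning is removed and the factor of $2$ is absorbed into the constant.
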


\begin{proof}
The proof of this lemma is a simplified version of the proof for Lemma 20 in \citet{chatterji2021finite}.
First, consider $\z_2$ to be fixed, and only $\z_1$ to be random. Then from Theorem~\ref{theorem:hoeffding-general} we have
$$\prob\left[\left\vert\sum_{i=1}^{d} \idx{\z_2}{i} \cdot \idx{\z_1}{i}\right\vert \geq t\right] \leq 2\exp{\left(\frac{-c\;t^2}{\sigma_1^2\norm{\z_2}{2}^2} \right)}.$$

Also, adapting Theorem~\ref{theorem:bernstein} such that $Z = \sum_i^d X_i^2 = \norm{\z_2}{2}^2$, and setting $t = d\sigma^2$
\begin{align*}
\prob(\left(\lvert\norm{\z_2}{2}^2 - d\sigma_2^2\rvert\geq t\right) &\leq 2\exp\left(-\frac{c_0}{\sigma^2}\min\left\{\frac{t^2}{d\sigma_2^2},t\right\}\right),\\
\prob(\left(\norm{\z_2}{2}^2 \geq 2d\sigma_2^2\right) &\leq 2\exp\left(-c_0 d\right).    
\end{align*}

Coming back to the initial problem and again considering both $\z_1, \z_2$ to be random variables, we have
\begin{align}
\label{e:separate}
\prob\left[\lvert \z_1 \cdot \z_2 \rvert \ge t\right] 
&\leq \prob\big[\lvert \z_1 \cdot \z_2 \rvert \ge t \;\big|\; 
      \norm{\z_2 }{2}^2 \leq  2d\sigma_2^2 \big]
      + \prob\left[ \norm{\z_2 }{2}^2 >  2d\sigma_2^2 \right], \\
      & \leq 2\exp{\left(\frac{-c\;t^2}{d\sigma_1^2\sigma_2^2} \right)} + 2\exp{\left(-c_0 d\right)}.
\end{align}

\end{proof}

\begin{corollary}[\citet{chatterji2021finite}, Lemma 20]
\label{cor:gaussian-product}
There is a $c \geq 1$ such that,
for all large
enough $C$,
with probability at least $1-\delta/6$, for all $i\neq j \in [n]$,
\begin{align*}
\lvert \z_i \cdot \z_j\rvert < c\left(\sigma^2\sqrt{d \log(n/\delta)}\right).\label{event:2}\\
\end{align*}
\end{corollary}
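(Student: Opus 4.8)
The plan is to deduce this from Lemma~\ref{lemma:gaussian-product} by a union bound over pairs. First I would fix indices $i\neq j\in[n]$; since the corresponding noise vectors are independent copies of $\mathcal{N}(0,\sigma^2\identity_d)$, Lemma~\ref{lemma:gaussian-product} applies with $\sigma_1=\sigma_2=\sigma$ and gives, for every $t>0$,
\[
\prob\!\left[\abs{\z_i\cdot\z_j}\ge t\right]\ \le\ 2\exp\!\left(\frac{-c_1 t^2}{d\sigma^4}\right)+2\exp(-c_2 d),
\]
where $c_1,c_2$ are the absolute constants furnished by that lemma. Choosing the threshold $t=c\,\sigma^2\sqrt{d\log(n/\delta)}$ for a constant $c\ge 1$ to be pinned down below, the first term simplifies to $2\,(\delta/n)^{c_1 c^2}$, which carries no dependence on $d$ or $\sigma$.

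The second step is the union bound over the at most $n^2$ pairs with $i\neq j$:
\[
\prob\!\left[\exists\, i\neq j:\ \abs{\z_i\cdot\z_j}\ge t\right]\ \le\ 2n^2\,(\delta/n)^{c_1 c^2}+2n^2\exp(-c_2 d).
\]
To control the first summand I would take $c$ large enough (depending only on $c_1$) that $c_1 c^2\ge 3$; then $2n^2(\delta/n)^{c_1 c^2}\le 2\delta^3/n\le\delta/12$, using $\delta\le 1/C$ and $n\ge 1$ for $C$ a sufficiently large absolute constant. To control the second summand I would invoke assumption (A.3), $d\ge C n^2\log(n/\delta)$, which (together with (A.2), guaranteeing $n/\delta$ is bounded below) forces $c_2 d\ge\log(24 n^2/\delta)$ once $C$ exceeds a fixed multiple of $1/c_2$; hence $2n^2\exp(-c_2 d)\le\delta/12$. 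Adding the two bounds yields failure probability at most $\delta/6$, i.e.\ the stated event holds with probability at least $1-\delta/6$, with the same $c$ for every admissible parameter setting.

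There is no genuinely hard step here; the argument is entirely a tail bound plus a union bound. The only point that needs care is the bookkeeping of constants so that a \emph{single} $c\ge 1$ works uniformly, and in particular the use of the overparametrization assumption (A.3): the residual term $2\exp(-c_2 d)$ from Lemma~\ref{lemma:gaussian-product} must be shown negligible not just in absolute terms but relative to $\delta/n^2$, which is exactly what $d\gtrsim n^2\log(n/\delta)$ buys. If one wished to avoid any $C$-dependence one could instead carry the $\delta$-dependence in the threshold explicitly, but since all downstream applications already assume (A.1)--(A.3) this refinement is unnecessary.
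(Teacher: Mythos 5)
Your proof is correct: the paper itself gives no argument for this corollary (it simply imports Lemma 20 of \citet{chatterji2021finite}), and your derivation — apply Lemma~\ref{lemma:gaussian-product} with $\sigma_1=\sigma_2=\sigma$ at threshold $t=c\sigma^2\sqrt{d\log(n/\delta)}$, then union bound over the $\le n^2$ pairs, using (A.1)–(A.3) to absorb both the polynomial term $2n^2(\delta/n)^{c_1c^2}$ and the residual $2n^2e^{-c_2d}$ into $\delta/6$ — is exactly the standard route and matches the cited proof. The constant bookkeeping (choosing $c$ so that $c_1c^2\ge 3$ and invoking $d\ge Cn^2\log(n/\delta)$ to kill the $e^{-c_2d}$ term) is handled correctly.
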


\begin{lemma}[Gaussian Square]
\label{lemma:gaussian-square}
There is a $c \geq 1$ such that, for all large enough $C$, with probability at least $1-\delta/6$, for all $k \in [n]$,$$\frac{d\sigma^2}{2} \le \norm{\z_k}{2}^2 \le \frac{3d\sigma^2}{2}.$$
\end{lemma}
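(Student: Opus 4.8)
The plan is to view $\norm{\z_k}{2}^2$ as a sum of $d$ independent squared Gaussians and invoke the Bernstein-type bound already recorded in this section. Writing $\norm{\z_k}{2}^2 = \sum_{i=1}^{d} \idx{\z_k}{i}^2$ with the coordinates $\idx{\z_k}{i} \sim \mathcal{N}(0,\sigma^2)$ i.i.d., each $\idx{\z_k}{i}^2$ has mean $\sigma^2$ and is sub-exponential, so the centered variables $\idx{\z_k}{i}^2 - \sigma^2$ satisfy the hypotheses of Theorem~\ref{theorem:bernstein}. Its specialization stated immediately afterward, with $Z = \norm{\z_k}{2}^2$, gives for every $t > 0$
\[
\prob\!\left[\bigl\lvert \norm{\z_k}{2}^2 - d\sigma^2 \bigr\rvert \ge t\right] \le 2\exp\!\left(-\frac{c_1}{\sigma^2}\min\!\left\{\frac{t^2}{d\sigma^2},\, t\right\}\right).
\]

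Next I would choose $t = d\sigma^2/2$, so that $t^2/(d\sigma^2) = d\sigma^2/4$ and $t = d\sigma^2/2$; the minimum is therefore $d\sigma^2/4$ and the right-hand side becomes $2\exp(-c_1 d/4)$. Since $\bigl\lvert \norm{\z_k}{2}^2 - d\sigma^2 \bigr\rvert < d\sigma^2/2$ is exactly the two-sided event $\tfrac{d\sigma^2}{2} < \norm{\z_k}{2}^2 < \tfrac{3d\sigma^2}{2}$, a fixed $k$ violates the claimed inequality with probability at most $2\exp(-c_1 d/4)$. Taking a union bound over $k \in [n]$ shows that some $k$ fails with probability at most $2n\exp(-c_1 d/4)$.

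It remains to verify $2n\exp(-c_1 d/4) \le \delta/6$, which is where the overparametrization assumption enters: by (A.3), $d \ge C n^2 \log(n/\delta)$, so for $C$ taken large enough relative to the absolute constant $c_1$ we have $c_1 d/4 \ge \log(12 n/\delta)$, hence $2n\exp(-c_1 d/4) \le \delta/6$. The statement only asserts existence of some $c \ge 1$, so $c = 1$ suffices. The argument is entirely routine; the only point requiring care is matching the constant $C$ in (A.3) against the Bernstein constant $c_1$ so that the union bound closes, but this presents no genuine obstacle.
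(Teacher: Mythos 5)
Your proposal is correct and follows essentially the same route as the paper: apply the sub-exponential Bernstein bound to $\norm{\z_k}{2}^2 - d\sigma^2$ with deviation $t = d\sigma^2/2$ (the paper's $\lambda = 1/2$), obtain a failure probability of $2\exp(-c_1 d/4)$ per example, and close with a union bound over $[n]$ using the lower bound on $d$ from (A.3). The only cosmetic difference is that the paper needs only $d \ge C\log(n/\delta)$ to absorb the union bound, whereas you invoke the stronger $d \ge Cn^2\log(n/\delta)$; both suffice.
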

\begin{proof}
Recall that $\x_k = \vmu_k + \z_k$, where $\z_k\sim\mathcal{N}(0,\sigma^2\identity_d)$. Then, 
$$\norm{\z_k}{2}^2 = \sum_i^d \idx{\z_k}{i}^2 = Z$$

Adapting Theorem~\ref{theorem:bernstein}, and setting $t = \lambda d\sigma^2$ with $0 < \lambda<1$
\begin{align*}
\prob\left(\lvert\norm{\z_2}{2}^2 - d\sigma^2\rvert\geq t\right) &\leq 2\exp\left(-\frac{c_1}{\sigma^2}\min\left\{\frac{t^2}{d\sigma^2},t\right\}\right),\\
\prob\left(\lvert\norm{\z_2}{2}^2 - d\sigma^2\rvert\geq \lambda d\sigma^2\right)
&= 2\exp\left(-\frac{c_1}{\sigma^2}\min\left\{\lambda^2 d\sigma^2,\lambda d\sigma^2\right\}\right), \\
&= 2\exp\left(-c_1 \lambda^2 d\right). \qquad \text{(since $0 < \lambda < 1$)}   
\end{align*}

Recall that $d\geq C\log (n/\delta)$. We can set $\lambda = 1/2$ so that, $\norm{\z_2}{2}^2 > d\sigma^2/2$ with probability at least $ 1 - \delta/6n$ (for large enough $C$). Taking a union bound over all examples gives us the desired result.

Note that we can get closer to $d\sigma^2$ by chosing an appropriately higher value of $C$.

\end{proof}

\begin{lemma}[Dataset Separability]
\label{lemma:separability}
With probability at least $1-\delta$ over random samples of dataset $\dataset_A$, samples $(\x_1,\y_1),\dots,(\x_n,\y_n)$ are linearly separable.
\end{lemma}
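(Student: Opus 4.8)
The plan is to exhibit an explicit linear separator and verify it works with high probability, rather than argue non-constructively. The natural candidate for this Gaussian-mixture model is $\W := \sum_{j=1}^{n}\y_j\x_j$; it then suffices to show that, on an event of probability at least $1-\delta$, one has $\y_i\x_i^{\transpose}\W>0$ for every $i\in[n]$. Expanding, $\y_i\x_i^{\transpose}\W = \norm{\x_i}{2}^2 + \sum_{j\neq i}\y_i\y_j\,\x_i^{\transpose}\x_j$, so the task reduces to lower bounding the diagonal term and uniformly controlling the $n-1$ off-diagonal cross terms. (A non-constructive Cover-type counting argument is also available once the points are in general position, but it gives no margin information, so I would not pursue it.)

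For the diagonal term I would write $\x_i=\vmu_{g(i)}+\z_i$, apply the Young-type inequality $\norm{\x_i}{2}^2\ge\tfrac12\norm{\z_i}{2}^2-\norm{\vmu_{g(i)}}{2}^2$ already used elsewhere in the appendix, and invoke Lemma~\ref{lemma:gaussian-square} together with assumption (A.3) (which makes $\norm{\vmu_{g(i)}}{2}^2=k\mu^2$ negligible against $d\sigma^2$) to conclude $\norm{\x_i}{2}^2\ge d\sigma^2/8$ for all $i$ simultaneously. For the cross terms I would expand $\x_i^{\transpose}\x_j = \vmu_{g(i)}^{\transpose}\vmu_{g(j)} + \vmu_{g(i)}^{\transpose}\z_j + \z_i^{\transpose}\vmu_{g(j)} + \z_i^{\transpose}\z_j$ and split into two cases: if $i$ and $j$ lie in the same group then $\y_i=\y_j$ and $\vmu_{g(i)}^{\transpose}\vmu_{g(j)}=k\mu^2\ge 0$, so the signal part only helps; if they lie in different groups the group means are orthogonal and that part is $0$. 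In either case the three noise pieces are controlled uniformly: each mean--noise term $\vmu_g^{\transpose}\z_\ell$ is a mean-zero Gaussian with variance $\sigma^2k\mu^2$, so by a standard Gaussian tail bound and a union bound over the at most $n^2$ (index, group) pairs it is $O(\sigma\sqrt{k\mu^2\log(n/\delta)})$ in absolute value for all $\ell$ and all groups, on an event of probability at least $1-\delta/6$ (in the spirit of Theorem~\ref{theorem:hoeffding-general}); and each noise--noise term $\z_i^{\transpose}\z_j$ is at most $c\sigma^2\sqrt{d\log(n/\delta)}$ for all $i\neq j$ by Corollary~\ref{cor:gaussian-product}.

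Collecting these bounds gives $\y_i\x_i^{\transpose}\W \ge \tfrac{d\sigma^2}{8} - n\big(O(\sigma\sqrt{k\mu^2\log(n/\delta)}) + c\sigma^2\sqrt{d\log(n/\delta)}\big)$, and the one genuinely delicate step is checking that the subtracted quantity is strictly less than $d\sigma^2/8$. This is precisely where the quantitative form of (A.3), $d\ge C\max\{n^2\log(n/\delta),\,nk\mu^2/\sigma^2\}$, is used: it gives $\log(n/\delta)\le d/(Cn^2)$ and $k\mu^2/\sigma^2\le d/(Cn)$, so the noise--noise contribution is at most $(c/\sqrt{C})\,d\sigma^2$ and the mean--noise contribution is $O(d\sigma^2/C)$, both below $d\sigma^2/16$ once the absolute constant $C$ is large enough. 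I expect this balancing of the signal scale $k\mu^2$ against the dimension $d$ and sample size $n$ to be the only real obstacle; everything else is routine bookkeeping. Finally, a union bound over the constantly many conditioning events --- Lemma~\ref{lemma:gaussian-square}, Corollary~\ref{cor:gaussian-product}, and the mean--noise tail event, each of probability at least $1-\delta/6$ --- yields that $(\x_1,\y_1),\dots,(\x_n,\y_n)$ are linearly separable with probability at least $1-\delta$.
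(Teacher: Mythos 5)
Your construction is the same as the paper's: the explicit separator $\W=\sum_j \y_j\x_j$, the split into a diagonal term bounded below via Lemma~\ref{lemma:gaussian-square} and cross terms controlled via Corollary~\ref{cor:gaussian-product} and Gaussian tails, and the final balancing against (A.3). That part of the argument is fine and matches the paper's Cases 1 and 3.

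There is, however, one concrete error in your case analysis of the cross terms. You assert that if $i$ and $j$ lie in the same group then $\y_i=\y_j$, so the signal contribution $\y_i\y_j\,\vmu_{g(i)}^{\transpose}\vmu_{g(j)}=k\mu^2$ ``only helps.'' This is false precisely for the mislabeled example, which by construction has $\x_m$ drawn from a majority group but carries the flipped label: for every correctly labeled $\x_j$ from that same group one has $\y_m\y_j=-1$ while $\vmu_{g(m)}^{\transpose}\vmu_{g(j)}=k\mu^2$, so the signal terms sum to $-n_1 k\mu^2$ with $n_1=O(n)$, an adversarial contribution your bound never accounts for. Since the lemma is invoked for $\dataset_A$, which the setup explicitly endows with a mislabeled example (and whose separability is the entire reason Lemma~\ref{lemma:support-vectors-sa} can treat it as a support vector), this case cannot be omitted. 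The paper handles it as its Case 2, absorbing $n k\mu^2$ into $d\sigma^2/2$ using the second branch of (A.3), $d\ge Cnk\mu^2/\sigma^2$ --- an inequality you already invoke for the mean--noise terms, so the repair costs one line: add $-nk\mu^2\ge -d\sigma^2/C$ to your final lower bound. With that correction your proof is complete and essentially identical to the paper's.
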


\begin{proof}
We will show that there exists a set of weights that with high probability correctly classify each example in the dataset. 
Let $\x_i = \vmu_i\ + \z_i$ for every data point in $\dataset$. 
From assumptions, our dataset ($\dataset_A$) contains $\z_r,\z_m$ that belong to rare and mislabeled groups. Rest all points are denoted by $\z_k$.
Consider the classifier $\W = \sum_i^n \y_i\x_i$. Then,

\paragraph{Case 1: Rare Example $(\x_r, \y_r)$}
\begin{align*}
\y_r \W \cdot \x_r 
   & = \sum_j \y_j \y_r \x_j \cdot \x_r,
   \\
   &= \x_r\cdot\x_r + \sum_{i \neq j} \y_r \y_j \x_r \cdot \x_j,\\
   &= \vmu_r\cdot\vmu_r + \z_i\cdot\z_i + \sum_{r \neq j} \y_r \y_j \z_r \cdot \z_j, \qquad \text{(since $\vmu_r\cdot\vmu_i=0\;\forall i\in\{1,2\}$)}\\ 
   &= k\mu^2 + \z_i\cdot\z_i + \sum_{r \neq j} \y_r \y_j \z_r \cdot \z_j, \\ 
   & \geq 0 + d\sigma^2/2 - c_1 n \sqrt{d\sigma^2 \log(n/\delta)} \qquad \text{(using Lemma~\ref{lemma:gaussian-square} and Corollary~\ref{cor:gaussian-product})}
   \\
   & > 0
\end{align*}
for $d\geq C n^2\log(n/\delta).$
\paragraph{Case 2: Mislabeled Example $(\x_m, \y_m)$}
Without loss of generality, assume that the mislabeled example is sampled from the distribution $\dist_1$, and the set of all correctly labeled examples in the dataset $\dataset_A$ from this distribution be $\dataset_{A,1}$.
\begin{align*}
\y_m \W \cdot \x_m 
   & = \sum_j \y_j \y_m \x_j \cdot \x_m,
   \\
   &= \x_m\cdot\x_m + \sum_{i \neq j} \y_m \y_j \x_m \cdot \x_j,\\
   &= \vmu_m\cdot\vmu_m + \z_i\cdot\z_i -\sum_{j\in\dataset_{A,1}}\vmu_m\cdot\vmu_m +  \sum_{m \neq j} \y_m \y_j \z_m \cdot \z_j, \\
   & \qquad \qquad \qquad \qquad  \qquad \qquad \qquad \qquad \text{(since $\vmu_m\cdot\vmu_1=1, \vmu_m\cdot\vmu_2=0, \vmu_m\cdot\vmu_r=0$)}\\ 
   & \geq 0 + d\sigma^2/2 - n_1 k\mu^2 - c_1 n \sqrt{d \log(n/\delta)}, \\
    & \qquad \qquad \qquad \qquad  \qquad \qquad \qquad \qquad \text{(if $n_1 = |\dataset_{A,1}|$, using Lemma~\ref{lemma:gaussian-square} and Corollary~\ref{cor:gaussian-product})} \\
   & \geq d\sigma^2/2 - n k\mu^2 - c_1 n \sigma^2\sqrt{d \log(n/\delta)}, \\
   & > 0
\end{align*}
for $d\geq C \max\{n^2\log(n/\delta),nk\mu^2/\sigma^2\}.$
\paragraph{Case 3: Majority Example $(\x_i, \y_i)$}
The case of majority examples directly follows from the case of mislabeled examples. Rather than having a negative summation over the mean vector for n examples (in line 3), we will have a positive summation because the true label will match the label of the rest of the examples in the subset $\dataset_{A,1}$. This will make the expected value of the dot product even larger.

\emph{Remark:} Since the first split dataset
$\dataset_A$ is separable, 
it directly follows that the second split dataset
$\dataset_B$ 
is also separable since it does not have any mislabeled and rare examples.

\end{proof}

\subsection{Lemmas for Theorem~\ref{theorem:intermediate-time-formal}}
\begin{lemma}[Sign of $\beta$]
\label{lemma:beta}
$\beta_j \geq 0$ for all j in Equation~\ref{eqn:representer-decomposition}.
\end{lemma}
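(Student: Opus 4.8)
\emph{Proof proposal.} The plan is to simply unroll the gradient-descent recursion on $\dataset_B$ and read off the coefficients $\beta_j$ in closed form. Recall from the update rule \eqref{eqn:gd-restate} that, starting from $\W_B(0) = \W_A(T) = \hat{\W}_A\log T + \rho_A(T)$, the iterates obey
\[
\W_B(s+1) = \W_B(s) - \eta\sum_{j\in\dataset_B}\loss'\!\left(\W_B(s)^\transpose\y_j\x_j\right)\cdot\y_j\x_j .
\]
For the exponential loss $\loss(z)=\exp(-z)$ we have $\loss'(z) = -\exp(-z)$, which is \emph{strictly negative} for every $z\in\mathbb{R}$. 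Hence each increment $\W_B(s+1)-\W_B(s) = \eta\sum_{j\in\dataset_B}\exp(-\W_B(s)^\transpose\y_j\x_j)\,\y_j\x_j$ is a nonnegative linear combination of the vectors $\{\y_j\x_j\}_{j\in\dataset_B}$.

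Telescoping this identity from $s=0$ to $s=t-1$ yields
\[
\W_B(t) = \W_B(0) + \sum_{j\in\dataset_B}\Bigl(\eta\sum_{s=0}^{t-1}\exp\!\left(-\W_B(s)^\transpose\y_j\x_j\right)\Bigr)\y_j\x_j ,
\]
and since $\W_B(0) = \hat{\W}_A\log T + \rho_A(T)$ is exactly the non-$\dataset_B$ part of the representer decomposition \eqref{eqn:representer-decomposition}, matching the two expressions identifies
\[
\beta_j = \beta_j(t) = \eta\sum_{s=0}^{t-1}\exp\!\left(-\W_B(s)^\transpose\y_j\x_j\right)\ \ge\ 0,
\]
a sum of positive exponentials scaled by the positive step size $\eta$. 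This is precisely the assertion of the lemma, and the argument also gives the explicit formula $\beta_j(t)$ that is used later when bounding $B_1,B$ and $\Delta$.

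The only subtlety worth a remark is well-posedness of the decomposition: under assumption (A.3) the dimension $d$ is large enough that the $n$ samples $\{\x_j\}_{j\in\dataset_B}$ are linearly independent with probability at least $1-\delta$ (this is already implicit in Lemma~\ref{lemma:separability}), so the coefficients in \eqref{eqn:representer-decomposition} are unique and must equal the accumulated loss-derivative magnitudes above; otherwise one may simply \emph{define} the $\beta_j(t)$ by this formula. I do not anticipate any genuine obstacle here — the lemma is a direct structural consequence of running gradient descent on the monotone, everywhere-decreasing exponential loss, and the entire content is the sign of $\loss'$.
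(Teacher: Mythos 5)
Your proof is correct and takes essentially the same route as the paper: both unroll the training dynamics and identify $\beta_j(t)$ as an accumulation of the positive quantities $\exp(-\y_j\x_j^\transpose\W)$, so nonnegativity follows immediately from the sign of $\loss'$ for the exponential loss. The only (cosmetic) difference is that you telescope the discrete gradient-descent updates to get $\beta_j(t)=\eta\sum_{s=0}^{t-1}\exp(-\W_B(s)^\transpose\y_j\x_j)$, whereas the paper integrates the gradient flow to get $\beta_j(t)=\int_0^t\exp(-\y_j\x_j^\transpose\W(s))\,ds$; your remark on uniqueness of the coefficients is a reasonable extra precaution that the paper omits.
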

\begin{proof}
Analyzing the steps of gradient descent, we have:
\begin{align}
    \begin{split}
    \dot{\W}(t) &= -\nabla\risk(\W(t)) \\
    &= \sum_{j\in\dataset_B} \exp{(-\y_j\x_j^\transpose\W(t))} (\y_j\x_j^\transpose) \\
    \W(t) - \W(0) &= \sum_{j\in\dataset_B}\left((\y_j\x_j^\transpose) \underbrace{\int_0^t\exp{(-\y_j\x_j^\transpose\W(t))}dt}_{\beta_j(t) } \right), \\
    \end{split}
\end{align}

Hence, $\beta_j \geq 0 \; \forall \; j$.
\end{proof}

\begin{lemma}[Support Vectors]
\label{lemma:support-vectors-sa}
If dataset $\dataset_A$ is separable, then
$(\x_m,\y_m)$ and $(\x_r, \y_r)$ are support vectors for $\dataset_A$.
\end{lemma}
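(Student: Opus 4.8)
The plan is to reduce the claim to a statement about leave-one-out max-margin classifiers and then verify that statement with the concentration tools already developed. Throughout, write $\hat\W_A$ for the (unique, by strict convexity of $\norm{\cdot}{2}^2$) hard-margin SVM solution on $\dataset_A$, and for an index $i$ write $\hat\W_{A\setminus i}$ for the max-margin solution on $\dataset_A$ with the $i$-th point deleted; this exists because a subset of a separable set is separable (\lemref{lemma:separability}). Recall that ``$(\x_i,\y_i)$ is a support vector of $\dataset_A$'' means $\y_i\x_i^\transpose\hat\W_A = 1$.

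\textbf{Step 1: a leave-one-out reduction.} I will first show it suffices to prove $\y_r\x_r^\transpose\hat\W_{A\setminus r}\le 1$ and $\y_m\x_m^\transpose\hat\W_{A\setminus m}\le 1$. Fix $i$ and suppose $\y_i\x_i^\transpose\hat\W_A>1$, so the $i$-th margin constraint is slack at $\hat\W_A$. Then in a neighbourhood of $\hat\W_A$ the feasible set of the full problem coincides with that of the leave-one-out problem, and since $\hat\W_A$ minimises the strictly convex objective $\norm{\cdot}{2}^2$ globally over the full (convex) feasible set, it is a local — hence global — minimiser over the leave-one-out feasible set, so $\hat\W_A=\hat\W_{A\setminus i}$ and therefore $\y_i\x_i^\transpose\hat\W_{A\setminus i}=\y_i\x_i^\transpose\hat\W_A>1$. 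Contrapositively, $\y_i\x_i^\transpose\hat\W_{A\setminus i}\le 1$ forces $\y_i\x_i^\transpose\hat\W_A\le 1$, which together with feasibility $\y_i\x_i^\transpose\hat\W_A\ge 1$ gives $\y_i\x_i^\transpose\hat\W_A=1$.

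\textbf{Step 2: the leave-one-out predictions are small.} The key point is that $\hat\W_{A\setminus r}$ is a deterministic function of $\{\z_j\}_{j\ne r}$ and the group means, hence independent of $\z_r$ (and likewise $\hat\W_{A\setminus m}$ of $\z_m$). Writing the SVM solution in representer form $\hat\W_{A\setminus i}=\sum_{j\ne i}\alpha_j\y_j\x_j$ with $\alpha_j\ge 0$, complementary slackness gives $\sum_j\alpha_j=\norm{\hat\W_{A\setminus i}}{2}^2$, and the explicit separator $\sum_{j\ne i}\y_j\x_j$ (whose normalised margin is $\gtrsim\sqrt d\,\sigma/\sqrt n$ by exactly the computation in the proof of \lemref{lemma:separability}) yields $\norm{\hat\W_{A\setminus i}}{2}^2=\mathcal{O}(n/(d\sigma^2))$. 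For the rare point I would bound $\x_r^\transpose\hat\W_{A\setminus r}=\vmu_r^\transpose\hat\W_{A\setminus r}+\z_r^\transpose\hat\W_{A\setminus r}$ term by term: orthogonality of $\vmu_r$ to every other group mean gives $\vmu_r^\transpose\hat\W_{A\setminus r}=\sum_{j\ne r}\alpha_j\y_j(\vmu_r^\transpose\z_j)$, which is at most $(\max_j|\vmu_r^\transpose\z_j|)\sum_j\alpha_j\lesssim\mu\sigma\sqrt{k\log(n/\delta)}\cdot n/(d\sigma^2)=o(1)$ under A.3, while conditioned on $\hat\W_{A\setminus r}$ the term $\z_r^\transpose\hat\W_{A\setminus r}$ is $\mathcal{N}(0,\sigma^2\norm{\hat\W_{A\setminus r}}{2}^2)$, hence $\lesssim\sqrt{n\log(1/\delta)/d}=o(1)$ by A.3; so $|\x_r^\transpose\hat\W_{A\setminus r}|<1$ with probability $1-\delta$. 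For the mislabeled point $(\x_m,\y_m)$ with true group $\dist_1$, the only change is that $\vmu_m=\vmu_1$ is shared by the $\Theta(n)$ other $\dist_1$ examples of $\dataset_A\setminus\{m\}$, all carrying the opposite label $-\y_m$; splitting $\vmu_1^\transpose\x_j=k\mu^2\mathbb{1}[j\in\dist_1]+\vmu_1^\transpose\z_j$ and using $\alpha_j\ge 0$ gives $\y_m\vmu_1^\transpose\hat\W_{A\setminus m}=-k\mu^2\sum_{j\in\dist_1\setminus m}\alpha_j+(\text{noise})\le 0+o(1)$, so $\y_m\x_m^\transpose\hat\W_{A\setminus m}\le\y_m\z_m^\transpose\hat\W_{A\setminus m}+o(1)<1$ by the same independent-Gaussian bound on $\z_m^\transpose\hat\W_{A\setminus m}$, matching the intuition that in the limit the prediction on the mislabeled point reverts to its true label.

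\textbf{Main obstacle.} I expect the delicate part to be the bookkeeping of constants in Step 2: one must show that all the $o(1)$ terms provably fall below $1$ on a single event of probability $1-\delta$, which requires careful application of \lemref{lemma:gaussian-square}, \corollaryref{cor:gaussian-product}, and \thmref{theorem:hoeffding-general} together with a union bound over the $n$ examples, and verifying that the explicit-separator margin lower bound $\gtrsim\sqrt d\,\sigma/\sqrt n$ is unaffected by deleting the rare or mislabeled point. Step 1 carries the conceptual weight but is routine once stated; everything else is inherited from the already-established lemmas.
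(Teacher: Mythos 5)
Your proof is correct and follows essentially the same route as the paper's: the paper argues by contradiction that a point which is not a support vector leaves the max-margin solution on $\dataset_A$ unchanged, so by the asymptotic analysis it would be classified like an independent population sample (randomly for the rare point, incorrectly for the mislabeled one), contradicting the margin-$\geq 1$ constraint. Your contrapositive, leave-one-out rendering of that same idea is in fact more careful than the paper's three-line sketch --- you make the ``slack constraint $\Rightarrow$ unchanged SVM solution'' step explicit and replace the citation of the asymptotic theorem with a direct finite-sample bound on the held-out margin --- but the underlying mechanism is identical.
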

\begin{proof} 
We will prove by contradiction. From our assumption, $(\x_m,\y_m)$ and $(\x_r,\y_r)$ are the only mislabeled and rare examples in the first-split $\dataset_A$ from their respective sub-groups. 
Let us assume that they are not support vectors. Then, we can directly follow from the Asymptotic Analysis in Subsection~\ref{app:theory-asymptotic} that the probability of correct classification of the rare example is 0.5 and for the mislabeled example approaches 0 as the model is trained for infinite time. But we know that the model achieves 100\% accuracy on the training set $\dataset_A$ with weights $\W_A(T)$. Hence, this is a contradiction, and $(\x_m,\y_m)$, $(\x_r,\y_r)$ must be support vectors for the original classification problem.
\end{proof}

\begin{lemma}[Bounded Time Prediction]
\label{lemma:bounded-time-prediction}
With probability at least $1-\delta$, there exists a bounded time T' = O(T) when the mislabeled examples are incorrectly classified with high probability.
\end{lemma}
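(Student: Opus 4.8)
The plan is to track the running margin of the mislabeled point, $\acc_m^t = \y_m\x_m^\transpose\W_B(t)$, which is positive at the start of second-split training, and to show that continued training on $\dataset_B$ drives it to $-\infty$, while controlling the crossover time. Using the representer decomposition \eqref{eqn:representer-decomposition} together with Lemma~\ref{lemma:support-vectors-sa} (so that $\y_m\x_m^\transpose\hat{\W}_A = 1$), one has $\acc_m^t = \log T + c_m + \sum_{j\in\dataset_B}\beta_j(t)\,(\y_m\x_m^\transpose\y_j\x_j)$, where $c_m = \y_m\x_m^\transpose\rho_A(T)$ does not grow with $T$ (Theorem~9 of \citet{soudry2018implicit}) and $\beta_j(t)\ge 0$ by Lemma~\ref{lemma:beta}. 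Since $(\x_m,\y_m)\in\dataset_A$ and the first-split solution classifies all of $\dataset_A$ with margin at least $1$, we have $\acc_m^0 = \W_A(T)^\transpose\y_m\x_m \ge 1 > 0$, i.e. at initialization the point is still predicted as $\y_m$.

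The first step is the asymptotic sign. Since $\dataset_A$ is separable, so is $\dataset_B$ (remark after Lemma~\ref{lemma:separability}), hence by Lemma~\ref{lemma:souddry} and \eqref{eqn:weight_growth-restate} the normalized iterates $\W_B(t)/\norm{\W_B(t)}{2}$ converge to $\hat{\W}_B/\norm{\hat{\W}_B}{2}$ while $\norm{\W_B(t)}{2}\to\infty$. By Theorem~\ref{theorem:asymptotic-formal} applied with no label noise on $\dataset_B$, with probability at least $1-\delta$ the max-margin classifier predicts $\x_m$ by its true label, i.e. $\y_m\x_m^\transpose\hat{\W}_B < 0$; therefore $\acc_m^t = \norm{\W_B(t)}{2}\cdot\y_m\x_m^\transpose\bigl(\W_B(t)/\norm{\W_B(t)}{2}\bigr)\to-\infty$. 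Combined with $\acc_m^0 > 0$, this already yields a finite time $T'$ after which $\acc_m^t < 0$, i.e. the mislabeled point is no longer predicted as $\y_m$ — which is the qualitative content of the lemma.

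To obtain the quantitative rate $T' = O(T)$, I would go back to the decomposition and isolate the systematic drift, following the bookkeeping of the proof of Theorem~\ref{theorem:intermediate-time-formal}. Writing $\idx{\x_m}{i} = \mu + \idx{\z_m}{i}$ on the predictive coordinates of $\dist_1$ and $\idx{\x_m}{i} = \idx{\z_m}{i}$ elsewhere, and grouping $\dataset_B = \dataset_{B,1}\cup\dataset_{B,\neg 1}$, the term $\y_m\x_m^\transpose\y_j\x_j = -\x_m^\transpose\x_j$ for $j\in\dataset_{B,1}$ concentrates near $-k\mu^2$ (Lemma~\ref{lemma:gaussian-square} and Corollary~\ref{cor:gaussian-product}), while the $\dataset_{B,\neg 1}$ terms have orthogonal means and contribute only noise; so the coupling sum equals $-k\mu^2 B_1(t) + \xi_m(t)$ with $B_1(t) = \sum_{j\in\dataset_{B,1}}\beta_j(t)$ and a fluctuation $\xi_m(t)$ bounded, via Hoeffding/Bernstein together with $\sum_j\beta_j^2\le(\sum_j\beta_j)^2$, as $\abs{\xi_m(t)} = o(\log t)$ with probability at least $1-\delta$. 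I would then argue that $k\mu^2 B_1(t) = \Theta(\log t)$: the weights grow at rate $\Theta(\log t)$ along the $\vmu_1$-direction, $\hat{\W}_B$ has $\vmu_1$-margin of order $1$, and matching the signal coordinates of $\W_B(t)-\W_B(0) = \sum_j\beta_j(t)\y_j\x_j$ pins down the scale of $B_1(t)$. Substituting, $\acc_m^t \le \log T + c_m - \Omega(\log t) + o(\log t)$, which is negative once $\log t \gtrsim \log T$, giving $T' = O(T)$ (up to polylogarithmic factors from the residuals).

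The main obstacle is the last step: establishing that $B_1(t)$ grows at the correct rate $\Theta(\log t/(k\mu^2))$. This is where the interplay between the \citet{soudry2018implicit} weight-growth rate, the warm start $\W_B(0) = \hat{\W}_A\log T + \rho_A(T)$, and the support-vector structure of $\dataset_{B,1}$ must be made precise, and it is also what forces the crossover near time $T$ rather than at $\mathrm{poly}(T)$. Controlling the residuals $\rho_A(T)$ and $\rho_B(t)$ uniformly — their bounds depend on the initialization, whose norm scales like $\log T$ — is a secondary source of bookkeeping that only perturbs $T'$ by polylogarithmic factors and so does not affect the $O(T)$ conclusion. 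If one only wants the existence of a bounded $T'$ (which is all that Theorem~\ref{theorem:intermediate-time-formal} uses downstream), the second paragraph alone suffices and this last obstacle can be sidestepped.
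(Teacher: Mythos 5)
Your second paragraph is exactly the paper's proof: it combines the asymptotic misclassification of the mislabeled point (Theorem~\ref{theorem:asymptotic-formal}) with the convergence of the normalized iterates to the max-margin direction (the argument of Lemma~\ref{proposition:bounded-weights}) to extract a finite crossover time after which $\acc_m^t<0$. The quantitative rate $T'=O(T)$ that occupies your remaining paragraphs is not actually established by the paper's proof either --- it only derives existence of a bounded $T'$, which, as you correctly observe, is all that Theorem~\ref{theorem:intermediate-time-formal} uses downstream.
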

\begin{proof}
In Subsection~\ref{app:theory-asymptotic} we have shown that at infinite time training, the model misclassifies mislabeled examples with high probability.
Then, the existence of bounded time weights for which the prediction of mislabeled examples is flipped directly follows from the proof of Proposition~\ref{proposition:bounded-weights} applied to the result of Theorem~\ref{theorem:asymptotic-formal}.
\end{proof}

\section{Experimental Results}
\label{app:empirical}
\subsection{Experimental Setup}
\label{app:exp-setup}

\paragraph{Architectures} We perform experiments using four different model architectures---LeNet, ResNet-9~\citep{resnet-9}, ResNet-50, and Bert-base-cased~\citep{devlin2018bert}. Comparisons with model architectures are used in analysis of stability of the SSFT metric. For other numbers reported in tables and plots, we use the ResNet-9 model, unless otherwise stated.

\paragraph{Optimizer}
We experiment with three different learning rate scheduling strategies---cyclic learning rate schedule, cosine learning rate, and step decay learning rate. We test for two values of peak learning rate---0.1 and 0.01. All the model are trained using the SGD optimizer with weight decay 5e-4 and momentum 0.9, apart from the comparison with optimizers in Appendix~\ref{app:stability} where we also experiment with the Adam optimizer.

\paragraph{Training Procedure} We train for a maximum of 100 epochs or until we have 5 epochs of 100\% training accuracy. We first train on $\dataset_A$, and then using the pre-initialized weights from stage 1, train on $\dataset_B$ with the same learning parameters. All experiments can be performed on a single RTX2080 Ti. 

\begin{figure*}[t]
\centering
  \includegraphics[width=0.7\linewidth]{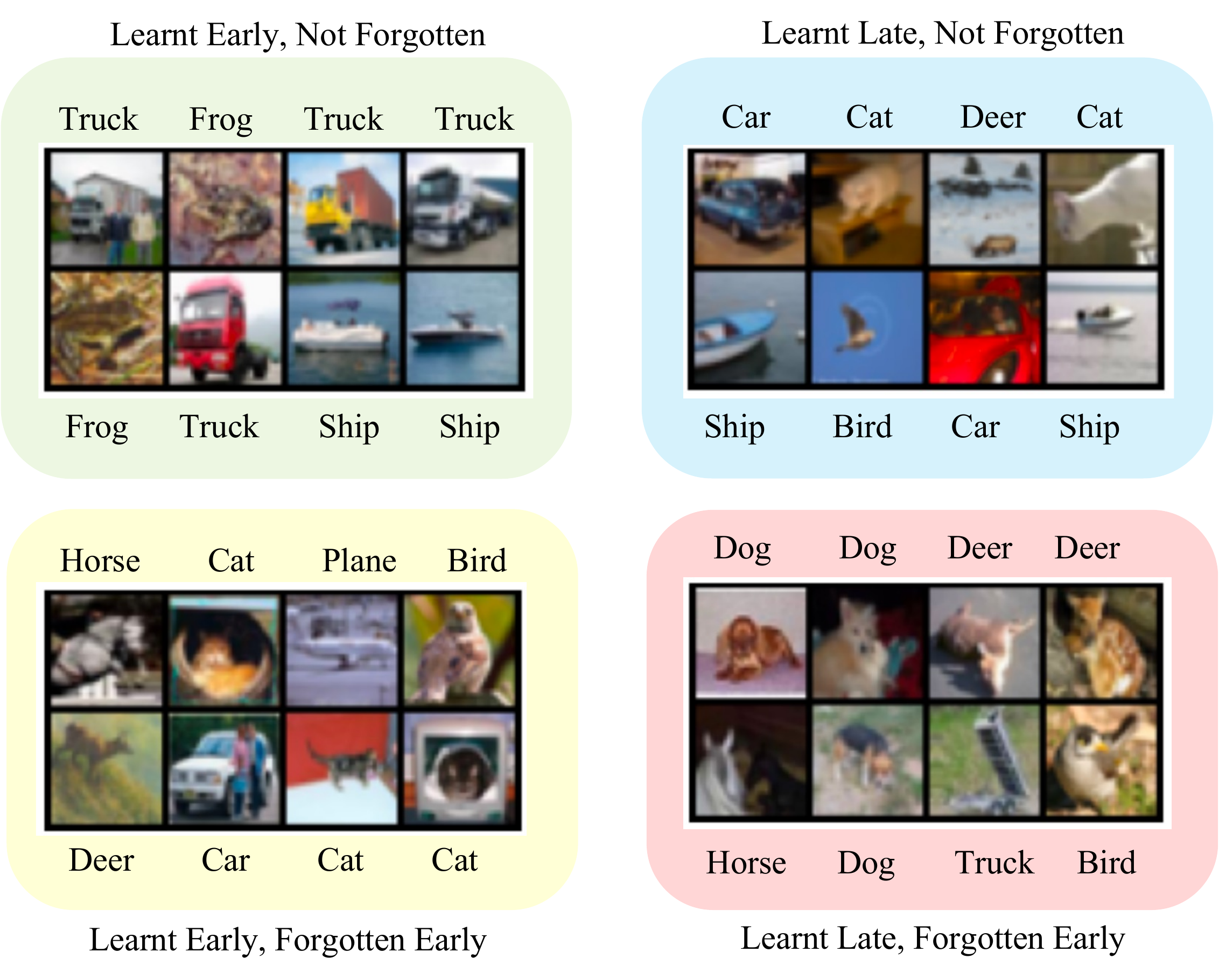}
\caption{Examples from the CIFAR-10 dataset grouped based on their learning and forgetting time.}
 \label{fig:cifar-10-spectrum}
\end{figure*}

\subsection{Image Datasets}
\label{app:image-graphs}
In the main paper we present visualizations of training examples from the MNIST dataset based on which quadrant they lie on in the learning-forgetting graph. Here, we complement our findings by showing visualizations for the CIFAR-10 dataset.
We note that CIFAR-10 dataset provide many different types of visibility patterns within the same class. Hence, examples may be learnt late due to belonging to a rare visibility pattern. In Figure~\ref{fig:cifar-10-spectrum}, we see that the examples that were learnt earliest and never forgotten have similar visibility patterns---for instance all the trucks have a similar perspective. On the contrary, as we move to the first quadrant with examples that were learnt late but never forgotten, we see that all the examples are true to their semantic class, but these visibility patterns occur rarely. 
Finally, we also analyze the visualizations based on examples that were forgotten during the course of second-split training. While in the case of MNIST dataset, SSFT was able to remove the mislabeled examples well, we see that CIFAR-10 offers more challenges because examples may be ambiguous because of other reasons and may be forgotten owing to the model using spurious features.

\section{Ablation Studies}
\label{app:demistify-ablation}
We detail the experimental setup used to conduct our ablation studies directed towards understanding the learning and forgetting dynamics of rare and complex examples respectively.

\paragraph{Rare Examples}
The experiments to show the rate of learning for rare examples are inspired by the singleton hypothesis as proposed by~\citet{Feldman2020DoesLR}. The hypothesis was inspired by a long-tailed distribution of visibility patterns in the person and bus category of the PASCAL dataset. For example, the dataset contains many buses with the front visible, but very few buses that were captured from the rear or the side, and even fewer buses whose view is occluded by the presence of other objects infront of them. (Refer to Figure 1 in their work for more details.) In our work, we first attempted to leverage the same training set-up with the provided visibility patters. However, we noted that there wasn't a strong correlation between the frequency of an example's visibility pattern, and the rate at which it was learnt. We hypothesize that this is because there are other factors of example hardness that may make an example be learnt slowly or fast (such as complexity, as detailed in the next paragraph). 
This can lead to an example being learnt fast if it has a simple pattern yet occurs rarely. Especially when there are only $O(1)$ samples from a given sub-group (based on the visibility pattern), we can not make any claims based on singleton correlation alone.

Hence, in order to distill the frequency of occurrence of an example with other confounders that may influence its training-time, we created a long-tailed dataset from the CIFAR-100 dataset. CIFAR-100 is a dataset of 100 object classes, which can be further grouped into 20 super-classes. For instance, examples from categories \emph{maple, oak, palm, pine, willow} all belong to the `superclass' of \emph{trees}. Similar division of 5 sub-classes is provided in the datasets for each of the superclasses. Each class contains 500 training examples and the overall dataset has 50,000 training examples.

As a first step towards creating a long-tailed dataset, we assign a fixed frequency ordering within the subgroups of a superclass. The most frequent subgroup has 500 examples in the training set, for the next most frequent subgroup, we randomly select 250 examples from the training set, and so on until the last sub group with 31 examples in the training set. This means that there are exactly 20 sub-groups in the final dataset with \{500,250,125,62,31\} examples respectively. Irrespective of the class number, the task is to predict the corresponding superclass, that is, we reduce the problem to a 20-class classification problem. However, we track the learning and forgetting dynamics of examples from each of the 100 sub-groups separately, based on their group frequency. To remove any other confounders of example hardness, we (i) randomize the group frequency ordering of the sub-groups within a superclass (in case some classes are harder to learn than the others); and (ii) randomize the examples that were selected based on the group size (in case some examples were ambiguous or hard). We further split the dataset into two IID partitions to analyze the learning time and SSFT, and average the results over 20 random runs of the experiment. Experimental results are detailed in the main paper.

\paragraph{Complex Examples}
Prior works advocating for, and understanding the simplicity bias~\citep{shah2020pitfalls} have operationalized the notion of simplicity via the complexity of hypothesis class required to learn the distribution that a complex example may be sampled from. In particular, \citet{shah2020pitfalls} construct a synthetic dataset with MNIST and CIFAR-10 images vertically stacked on top of each other---with the part with MNIST images corresponding to the part of the combined image with \emph{simpler} features, and the part with CIFAR-10 images corresponding to the part with \emph{complex} features.
They show that the model almost completely relies on the part of the image containing the MNIST digit even when it is less predictive of the true label. Inspired by this argument about the simplicity of features, we create a dataset that has the the union of images from the MNIST and the CIFAR-10 dataset. More specifically, we select classes from the MNIST dataset corresponding to digits $\{0,1,2,3\}$, and classes from the CIFAR-10 dataset corresponding to \{horses, airplanes, dog, frog\} and label them from $\{0,1,2,3\}$. This means that the model associates the label 0 to both the digit 0 and airplane class. The attempt of this experiment is to draw the link between the simplicity bias and the rate of learning. Experimental results are provided in the main paper.

\subsection{Stability of our metric}
\label{app:stability}
\begin{figure*}[t]
\centering
\begin{subfigure}[t]{0.47\linewidth}
  \includegraphics[width=\linewidth]{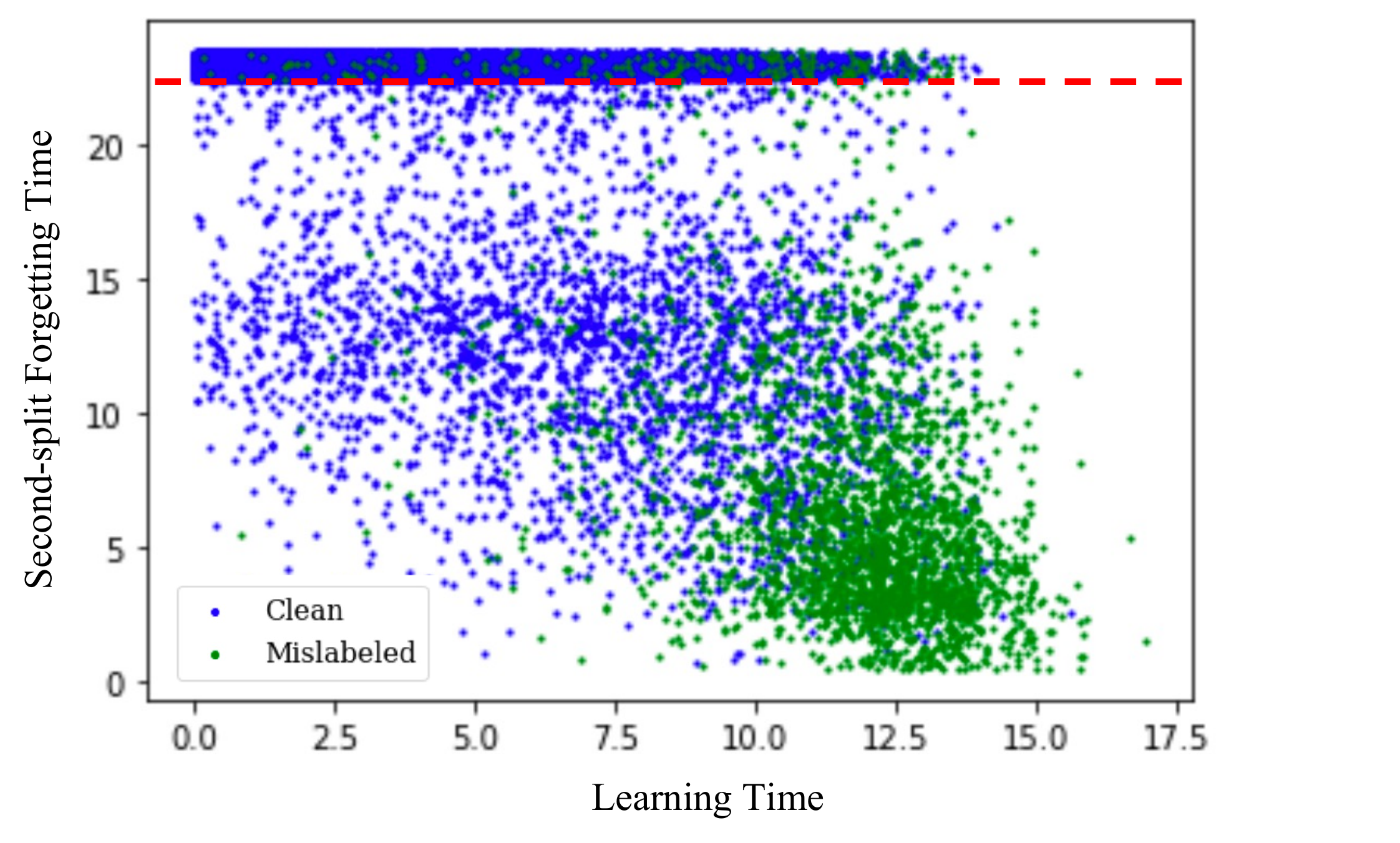}
  \caption{}
\end{subfigure}
\hspace{5mm}
\begin{subfigure}[t]{0.47\linewidth}
   \includegraphics[width=\linewidth]{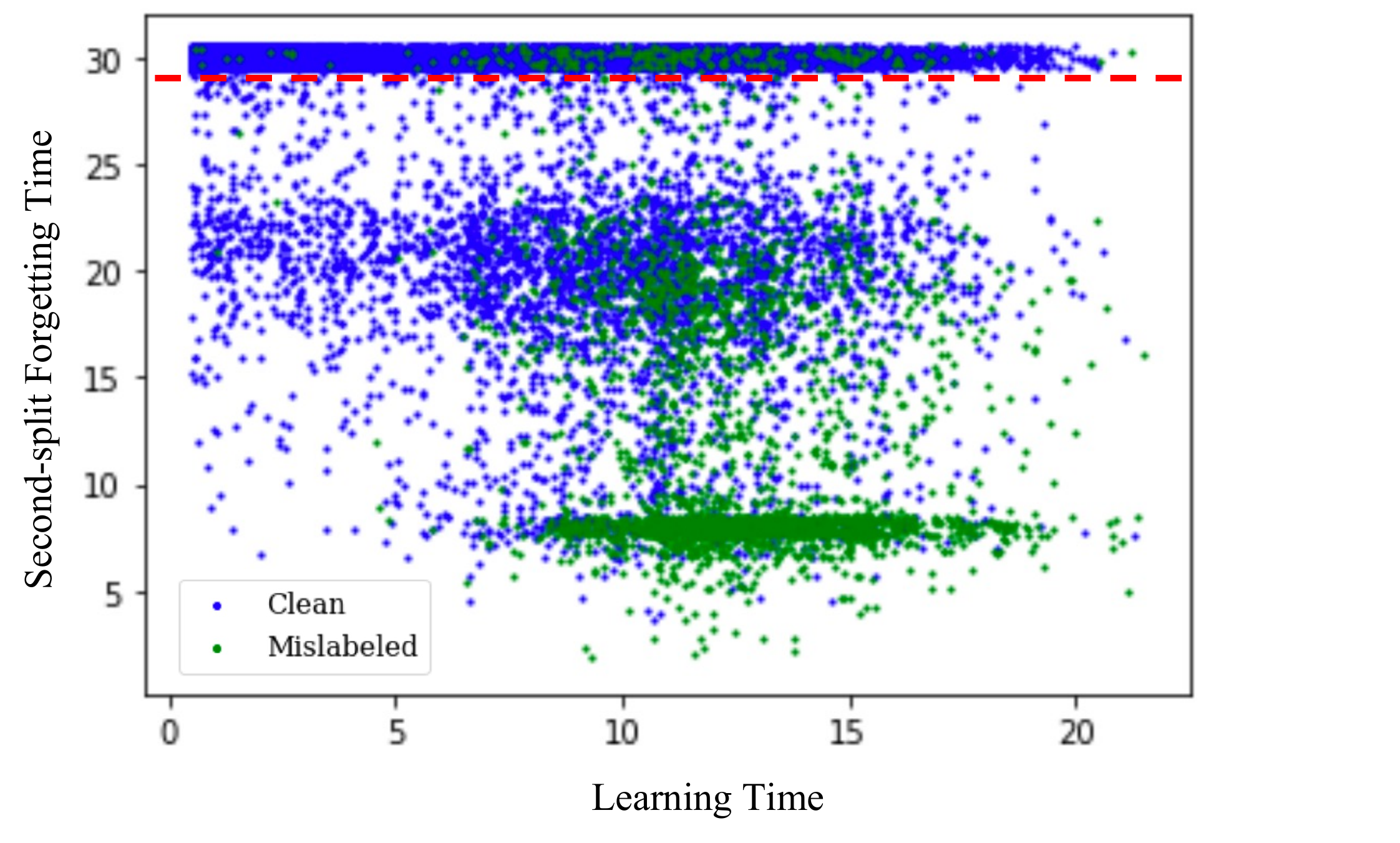}
   \caption{}
\end{subfigure}
\caption{FSLT (First-split learning time) is able to provide some degree of separation between mislabeled and clean samples when trained with the SGD optimizer (left), but fails when the model is trained using Adam (right) on the CIFAR-10 dataset. }
 \label{fig:optimizer_dependence}
\end{figure*}

\paragraph{Stability across architectures}
The forgetting of examples is a property of both the dataset and the model architecture. As a result, we find that just like the learning time, the forgetting time has a lower correlation between architectures. The average pearson correlation between the ResNet-9 and ResNet-50 models is 0.62 in case of the CIFAR10 dataset. However, we note that the most forgotten examples generalize across datasets. That is, the average pearson correlation between the bottom 10\% examples of the dataset is 0.87. This highlights how the forgetting metric is good for finding misaligned examples in the dataset, since they are not a property of the model architecture. We suspect that among the examples that are infrequently forgotten, the model campacity and other inductive biases of the model architecture may have a role in driving the average pearson correlation low. 
\paragraph{Stability across optimizers}
\citet{jiang2020characterizing} showed that changing the learning optimizer from SGD to Adam can lead to a significant change in the learning rate of examples from different levels of hardness (based on their regularity metric). More specifically, they find that examples with a low consistency score (closely correlated with learning speed) also get learnt fast when using the Adam optimizer. This suggests that using an optimizer like Adam at training time may have an impact on the ability of learning time based metrics to separate examples. In Figure~\ref{fig:optimizer_dependence}, we contrast the  ability of forgetting and learning time based metrics for identifying label noise when using the SGD and Adam optimizers. When using an optimizer such as SGD, the mislabeled samples are learnt slower than  a large fraction of the training examples, and the learning time metric offers some degree of separation between the clean and mislabeled examples. However, when we use the Adam optimizer, it results in joint learning of a large fraction of both mislabeled and clean samples. Hence, offering a very low degree of separation. However, under the same training procedure, the SSFT still allows us to distinguish between the mislabeled and clean samples.

\paragraph{Stability across seeds and learning rates}
The pearson correlation for stability across seeds for the forgetting time metric is 0.83. This is higher than the corresponding learning time based metric (correlation 0.56).
However, one of the drawbacks of our proposed metric is that the SSFT requires the use of an appropriate learning rate that allows the examples to be forgotten slowly. We provide more information about the same in the main paper.

\begin{figure*}[t]
\centering
\begin{subfigure}[t]{0.47\linewidth}
  \includegraphics[width=\linewidth]{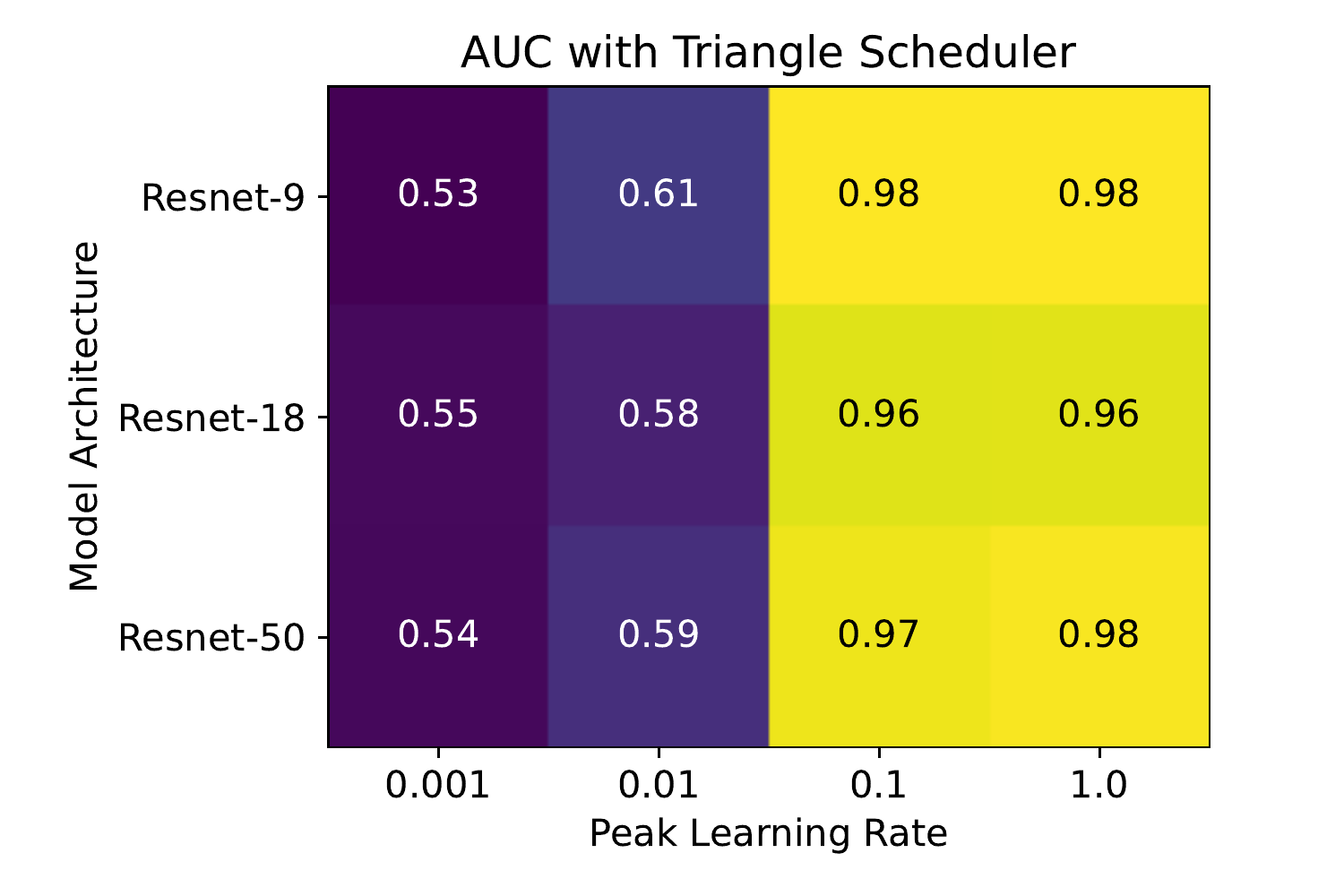}
  \caption{Triangular Learning Schedule}
\end{subfigure}
\hspace{2mm}
\begin{subfigure}[t]{0.47\linewidth}
   \includegraphics[width=\linewidth]{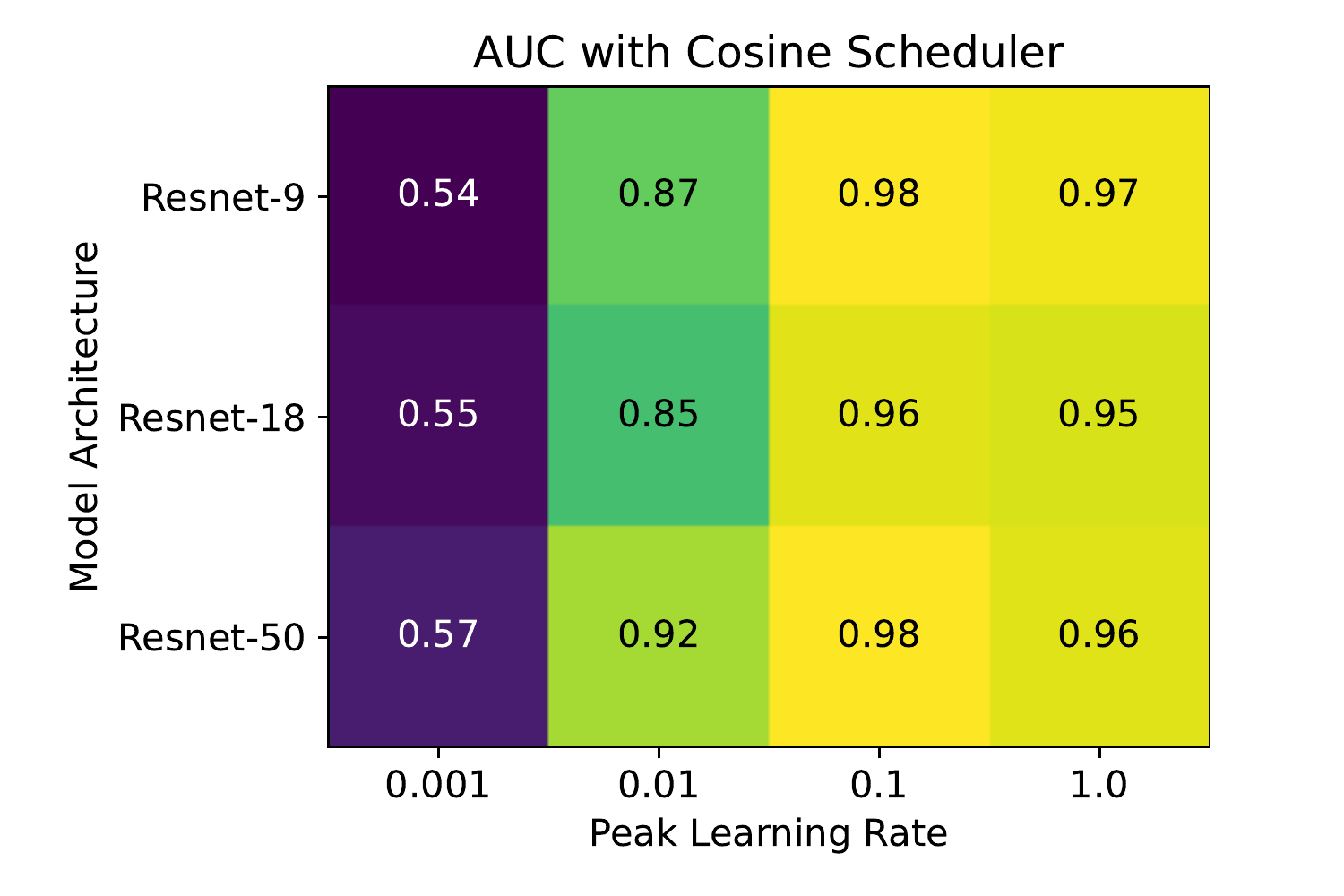}
   \caption{Cosine Learning Schedule}
\end{subfigure}
\hspace{2mm}
\begin{subfigure}[t]{0.47\linewidth}
   \includegraphics[width=\linewidth]{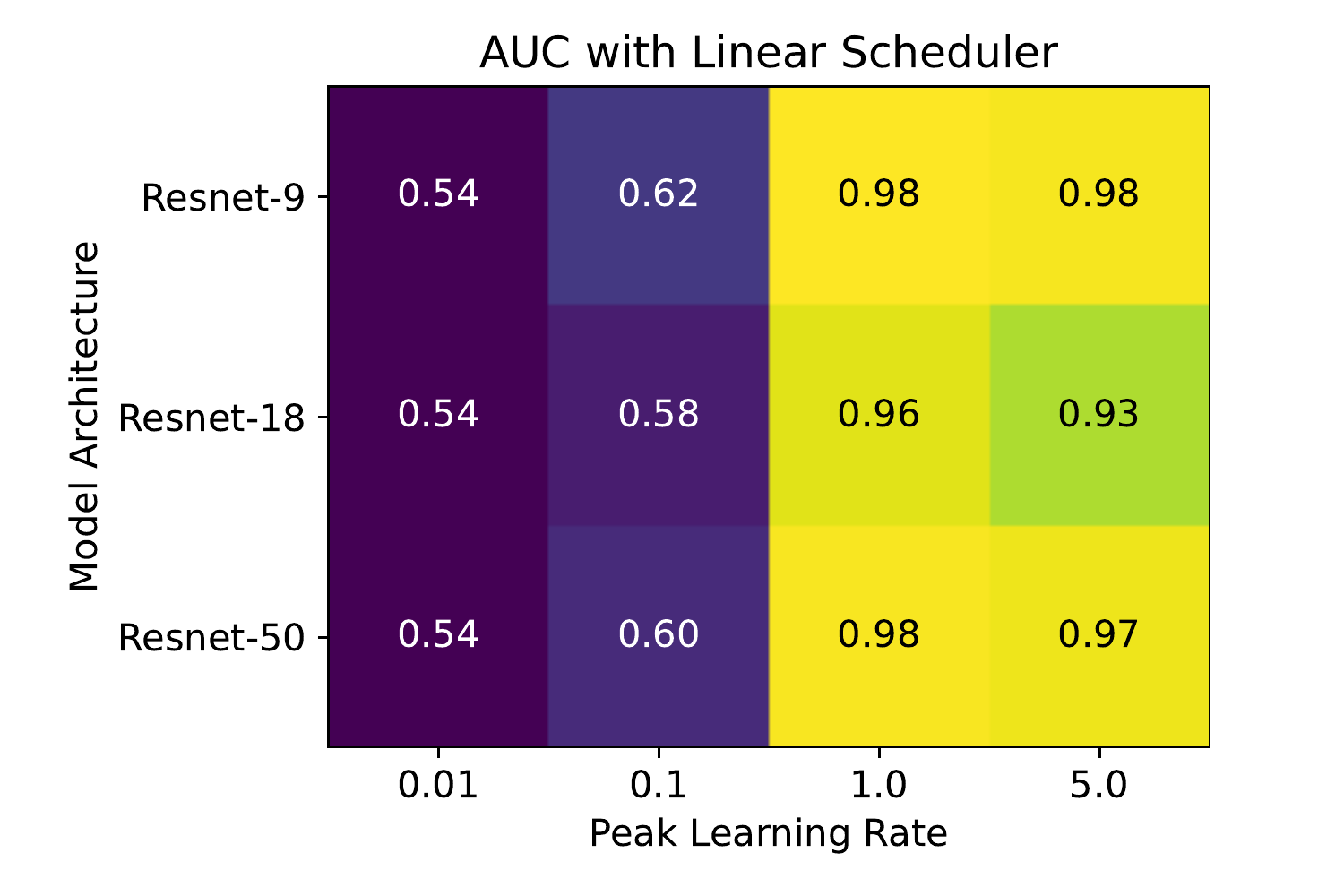}
   \caption{Linear Learning Schedule}
\end{subfigure}
\caption{We present a heatmap for the AUC of detection of mislabeled examples using the SSFT metric under various learning rates, architecture sizes and learning rate schedules for the SGD optimizer. The experiment was performed on the CIFAR-10 dataset with 10\% label noise, and the forgetting times were averaged over 5 seeds before using them for AUC calculation.}
 \label{fig:heatmap}
\end{figure*}

\paragraph{Stability Across Learning Rate Schedules}
We experiment with three different learning rate schedules for the second-split training---triangular, cosine and linear. In triangular learning rate, we increase the learning rate from 0 to the maximum set value linearly over the first 10 epochs, and then decay it back to 0 until we reach the last epoch (maximum of 100 epochs). In case of the linear schedule, we increase the learning rate from 0 to the maximum set value linearly over the course of 100 epochs. The intuition behind using a linear learning rate schedule was to be able to set higher peak learning rates so that the model eventually forget all the examples in the first split and we can create a better ordering between samples based on forgetting time (as opposed to the setting where only a small fraction of examples are ever forgotten).

The results of the combined analysis across hyperparameters such as architecture, learning rate and learning rate schedule are presented in the heatmap of AUC of mislabeled example detection in Figure~\ref{fig:heatmap}. 
The experiment was performed on the CIFAR-10 dataset with 10\% label noise, and the forgetting times were averaged over 5 seeds before using them for AUC calculation.
We can see that uniformly across architectures and learning rate schedules having a very low learning rate makes nearly all examples indistinguishable based on forgetting time. This is because all the models are sufficiently overparametrized to memorize all the examples in the dataset. Hence, when using a very small learning rate the optimization step moves the model weights insignificantly and we do not forget many mislabeled samples from the first split. On the other side, having a large learning rate helps achieve a strong separation between mislabeled and clean examples which also shows up in the form of high AUC values in the figure.

\begin{figure*}[t]
\centering

  \includegraphics[width=0.5\linewidth]{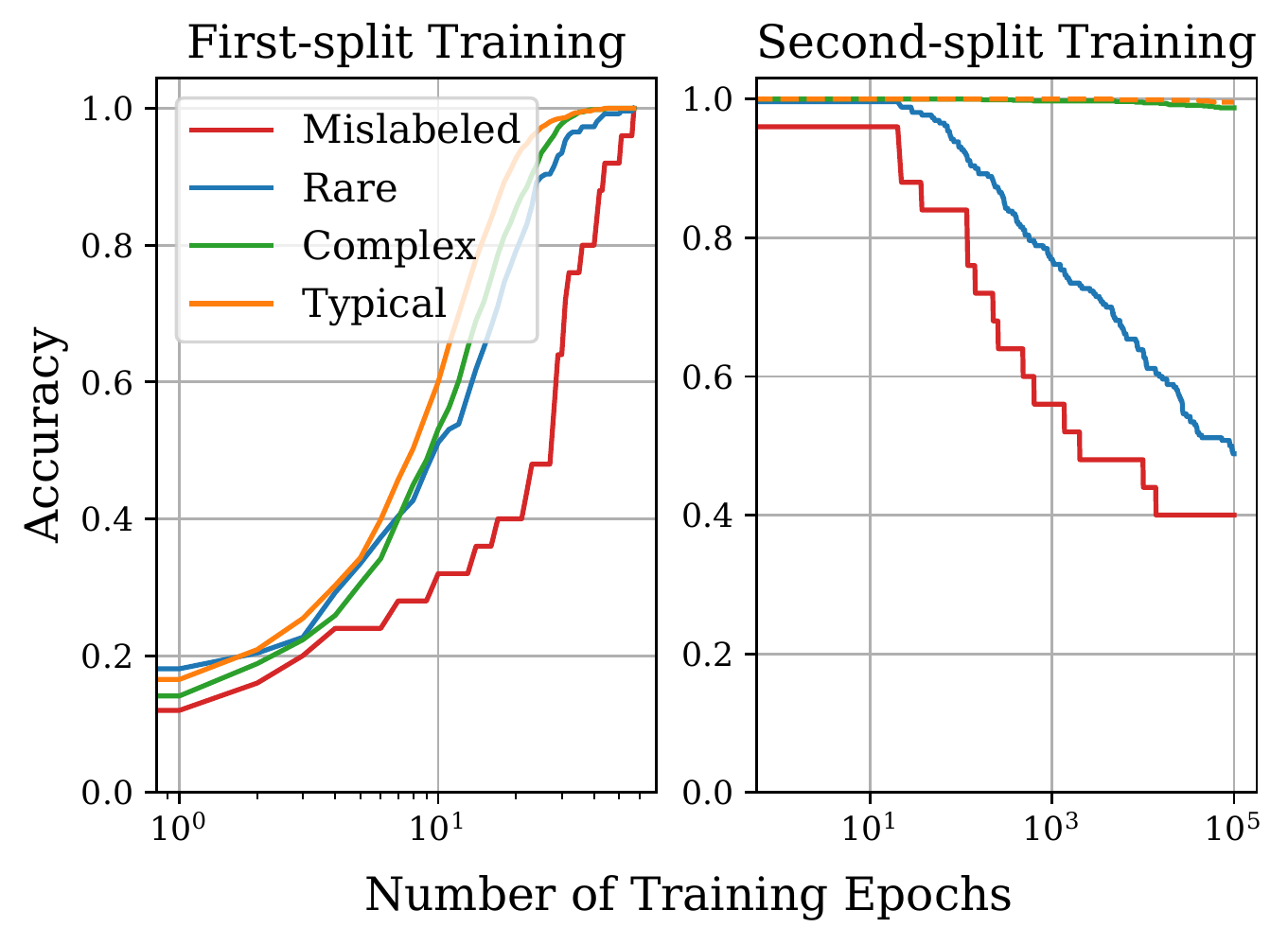}

\caption{Learning and forgetting curves for mislabeled and rare examples when the mislabeled examples are drawn from rare subgroups in the synthetic setup described in Section~\ref{subsec:spectrum}. }
 \label{fig:synthetic-rare-mislabeled}
\end{figure*}

\subsection{Impact of Sampling Frequency of Mislabeled Examples}
In the synthetic experiment performed in Section~\ref{subsec:spectrum}, we assumed that mislabeled examples occur from the majority subgroups. As a result, we observed that they get forgotten quickly during second-split training. However, in this section we aim to understand the impact of sampling frequency on the forgetting time of mislabeled examples. More specifically, we now assume that mislabeled samples occur in rare subgroups in the synthetic setup. We find that the learning curves of the mislabeled example stays the same as before, but the forgetting time for mislabeled examples closely approaches that of rare examples. This is because there is very little signal for the model to learn the opposite class during second split training since the example occurs only O(1) times. The learning and forgetting curves pertaining to the same experiment are presented in Figure~\ref{fig:synthetic-rare-mislabeled}. In contrast with the forgetting curve in Section~\ref{subsec:spectrum} where the mislabeled examples are quickly forgotten and their prediction is flipped, we find that when the subgroup corresponding to the mislabeled examples is infrequent, their forgetting time closely corresponds to that of rare examples; and on aggregate their  predictions do not get flipped in the epochs that the model was trained for.

\subsection{Mislabeled Example Detection}
In this section we provide additional details about the experimental setting for the results presented in Table~\ref{table:auc}. 
In case of the CIFAR-100 dataset, we reduce the learning rate for second-split training by a factor of 10, and use batch size of 128. While all the other training procedures in this paper used a cyclic learning rate, for the case of CIFAR-100, we use warm-up based multi-step decay learning rate schedule.\footnote{We follow the code in \href{https://github.com/weiaicunzai/pytorch-cifar100}{https://github.com/weiaicunzai/pytorch-cifar100}} The model used for training was ResNet-18, and 10\% label noise was added. The training setting for EMNIST is identical to that of the MNIST dataset. We use the first 10 classes of the dataset to make it a 10-class classification problem.

\end{document}